\newcommand{\ie}{i.e.\ }
\newcommand{\eg}{e.g.\ }
\newcommand{\fr}[1]{\frac{1}{#1}}
\newcommand{\A}{{\mathcal A}}
\newcommand{\C}{{\mathcal C}}
\newcommand{\D}{{\mathcal D}}
\newcommand{\F}{{\mathcal F}}
\newcommand{\cM}{{\mathcal M}}
\newcommand{\cS}{\mathcal{S}}
\newcommand{\U}{\mathcal{U}}
\newcommand{\R}{{\mathbb R}}
\newcommand{\E}{{\mathbf E}}
\newcommand{\Var}{\mathbf{Var}}
\newcommand{\cond}{\ |\ }
\newcommand{\zo}{\{0,1\}}
\newcommand{\zon}{\{0,1\}^n}
\newcommand{\poly}{\mathrm{poly}}
\newcommand{\ti} \tilde
\newcommand{\eps}{\epsilon}
\newcommand{\logd}{\log{(1/\delta)}}
\newcommand{\rank}{\mbox{\tt{rank}}}
\newcommand{\equ}[1]{

\begin{equation}
#1
\end{equation}}
\newcommand{\equn}[1]{
$$
#1
$$}
\newcommand{\ignore}[1]{\relax}
\newcommand{\alequ}[1]{\begin{align} #1 \end{align}}
\newcommand{\alequn}[1]{\begin{align*} #1 \end{align*}}
\newcommand{\eat}[1]{}
\newtheorem{theorem}{Theorem}[section]
\newtheorem{lemma}[theorem]{Lemma}
\newtheorem{fact}[theorem]{Fact}
\newtheorem{corollary}[theorem]{Corollary}
\newtheorem{definition}[theorem]{Definition}
\newtheorem{problem}[theorem]{Problem}
\newtheorem{remark}[theorem]{Remark}
\author{Vitaly Feldman \\
IBM Research - Almaden \and Pravesh Kothari \\
University of Texas, Austin\thanks{Work done while the author was at IBM Research - Almaden.} \and Jan Vondr\'{a}k \\
IBM Research - Almaden\\
}
\thanks{Work done while the author was at IBM Research - Almaden.}
\newcommand{\WP}{\mbox{\tt{WP}}}
\newcommand{\DT}{\mbox{\tt{DT}}}
\newcommand{\AEFT}{\mbox{\tt{AEFT}}}
\newcommand{\degree}{\mbox{\tt{degree}}}
\newcommand{\xx}{\mathbf{x}}
\newcommand{\on}{\{-1,1\}}
\newcommand{\RR}{\mathbb R}
\newcommand{\N}{\mathbb{N}}
\newcommand{\mintitle}[1]{{\noindent {\bf #1}}}
\title{Representation, Approximation and Learning of Submodular Functions Using Low-rank Decision Trees}
\begin{document}

\maketitle

\begin{abstract}
We study the complexity of approximate representation and learning of submodular functions over the uniform distribution on the Boolean hypercube $\zon$. Our main result is the following structural theorem: any submodular function is $\eps$-close in $\ell_2$ to a real-valued decision tree (DT) of depth $O(1/\eps^2)$. This immediately implies that any submodular function is  $\eps$-close to a function of at most $2^{O(1/\eps^2)}$ variables and has a spectral $\ell_1$ norm of $2^{O(1/\eps^2)}$. It also implies the closest previous result that states that submodular functions can be approximated by polynomials of degree $O(1/\eps^2)$ \citep{CheraghchiKKL:12}.
 Our result is proved by constructing an approximation of a submodular function by a DT of rank $4/\eps^2$ and a proof that any rank-$r$ DT can be $\eps$-approximated by a DT of depth $\frac{5}{2}(r+\log(1/\eps))$.

We show that these structural results can be exploited to give an attribute-efficient PAC learning algorithm for submodular functions running in time $\tilde{O}(n^2) \cdot 2^{O(1/\eps^{4})}$. The best previous algorithm for the problem requires $n^{O(1/\eps^{2})}$ time and examples \citep{CheraghchiKKL:12} but works also in the agnostic setting. In addition, we give improved learning algorithms for a number of related settings.

We also prove that our PAC and agnostic learning algorithms are essentially optimal via two lower bounds: (1) an information-theoretic lower bound of $2^{\Omega(1/\eps^{2/3})}$ on the complexity of learning monotone submodular functions  in any reasonable model (including learning with value queries); (2) computational lower bound of $n^{\Omega(1/\eps^{2/3})}$ based on a reduction to learning of sparse parities with noise, widely-believed to be intractable. These are the first lower bounds for learning of submodular functions over the uniform distribution.
\end{abstract}

\eat{
\begin{keywords}
submodular function, decision tree, learning, uniform distribution
\end{keywords}
}

\section{Introduction} \label{sec:intro}

We study the problem of learning submodular functions and their (approximate) representation.
% which has been considered in a sequence of recent works \cite{GHIM09}; \cite{BalcanHarvey:12full}; \cite{GuptaHRU:11}; \cite{CheraghchiKKL:12}; \cite{BadanidiyuruDFKNR:12}; \cite{BalcanCIW:12}; \cite{RaskhodnikovaYaroslavtsev:13}.
 Submodularity, a discrete analog of convexity, has played an essential role in combinatorial optimization~\citep{L83}.
It appears in many important settings including cuts in graphs~\citep{GW95,Q95,FFI00},
rank function of matroids~\citep{E70,F97}, set covering problems~\citep{F98}, and plant location problems~\citep{CFN77}.
Recently, interest in submodular functions has been revived by new applications
in algorithmic game theory as well as machine learning.
In machine learning, several applications \citep{GKS05,KGGK06,KSG08,KG11} have relied on the fact
that the information provided by a collection of sensors is a submodular function.
In algorithmic game theory, submodular functions have found application as {\em valuation functions}
with the property of diminishing returns \citep{LLN06,DNS05,Vondrak08,PSS08,DRY11}.

Wide-spread applications of submodular functions have recently inspired the question of whether and how such functions can be learned from random examples (of an unknown submodular function). The question was first formally considered by \citet{BalcanHarvey:12full} who motivate it by learning of valuations functions. Previously, reconstruction of such functions up to some multiplicative factor from value queries (which allow the learner to ask for the value of the function at any point) was also considered by \citet{GHIM09}. These works have lead to significant attention to several variants of the problem of learning submodular functions \citep{GuptaHRU:11,CheraghchiKKL:12,BadanidiyuruDFKNR:12,BalcanCIW:12,RaskhodnikovaYaroslavtsev:13soda}. We survey the prior work in more detail in Sections \ref{sec:our-results} and \ref{sec:prior-work}.

In this work we consider the setting in which the learner gets random and uniform examples of an unknown submodular function $f$ and its goal is to find a hypothesis function $h$ which $\eps$-approximates $f$ for a given $\eps > 0$. The main measures of the approximation error we use are the standard absolute error or $\ell_1$-distance, which equals $\E_{x\sim D}[|f(x)-h(x)|]$ and $\ell_2$-distance which equals $\sqrt{\E_{x\sim D}[(f(x)-h(x))^2]}$ (and upper-bounds the $\ell_1$ norm). This is essentially the PAC model \citep{Valiant:84} of learning applied to real-valued functions (as done for example by \citet{Haussler:92} and \citet{KearnsSS:94}). It is also closely related to learning of probabilistic concepts (which are concepts expressing the probability of the function being 1) in which the goal is to approximate the unknown probabilistic concept in $\ell_1$ \citep{KearnsSchapire:94}. As follows from the previous work \citep{BalcanHarvey:12full}, without assumptions on the distribution, learning a submodular function to a constant $\ell_1$ error requires an exponential number of random examples.
We therefore consider the problem with the distribution restricted to be uniform, a setting widely-studied in the context of learning Boolean functions in the PAC model (\eg \citet{LinialMN:93,ODonnellServedio:07}). This special case is also the focus of several other recent works on learning submodular functions \citep{GuptaHRU:11,CheraghchiKKL:12,RaskhodnikovaYaroslavtsev:13soda}.

\subsection{Our Results}
\label{sec:our-results}
We give three types of results on the problem of learning and approximating submodular function over the uniform distribution. First we show that submodular functions can be approximated by decision trees of low-rank. Then we show how such approximation can be exploited for learning. Finally, we show that our learning results are close to the best possible.

\smallskip
\noindent{\bf{Structural results:}} Our two key structural results can be summarized as follows. The first one shows that every submodular function can be approximated by a decision tree of low {\em rank}. The {\em rank} of a decision tree is a classic measure of complexity of decisions trees introduced by \citet{EhrenfeuchtHaussler:89}.
One way to define the rank of a decision tree $T$ (denoted by $\rank(T)$) is as the depth of the largest complete binary tree that can be embedded in $T$ (see Section~\ref{sec:prelims} for formal definitions).

\begin{theorem}
\label{th:submod-lip-rank-const-intro}
Let $f:\zon \rightarrow [0,1]$ be a submodular function and $\eps > 0$. There exists a real-valued binary decision tree $T$ of rank at most $4/\eps^2$ that approximates $f$ within $\ell_2$-error $\eps$.
\end{theorem}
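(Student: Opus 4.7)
The plan is to build the approximating decision tree $T$ by a top-down recursive procedure. At each node reached by a restriction $\rho$ of the variables fixed on the path from the root, associate the restricted function $f_\rho$, which is again submodular and $[0,1]$-valued. If the variance $\Var[f_\rho]$ is sufficiently small, terminate at a leaf labelled $\E[f_\rho]$; otherwise branch on a well-chosen coordinate $x_i$ and recurse on both children. Since the probability-weighted average of leaf variances equals exactly the squared $\ell_2$-error of $T$, a suitable choice of the termination threshold will yield the desired $\eps$-approximation. The entire challenge is to show that the branching coordinate can always be chosen so that the resulting tree has rank at most $4/\eps^2$.

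The key structural lemma to prove is the following: for any submodular $g:\{0,1\}^m\to[0,1]$ with $\Var[g]\ge\eps^2$, there exists a coordinate $i$ whose ``linear effect'' $\mu_i := \E[g_{x_i=1}] - \E[g_{x_i=0}]$ satisfies $\mu_i^2 = \Omega(\eps^2)$. Granted this, the elementary identity
\[
\Var[g] = \tfrac12\bigl(\Var[g_{x_i=0}] + \Var[g_{x_i=1}]\bigr) + \tfrac14\mu_i^2
\]
implies that each split reduces the sum of child variances by $\Omega(\eps^2)$. An induction on an appropriate potential (essentially variance divided by squared allowed error), combined with the fact that decision-tree rank increases by $1$ only when the two subtrees have \emph{equal} rank, converts this into a rank bound of $O(1/\eps^2)$. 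To match the constant $4/\eps^2$, I would allocate the error budget to the two children adaptively, giving a larger budget to the branch with larger residual variance so that the two recursive ranks come out equal and each one less than the parent's target; a short calculation using the identity above then closes the induction.

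The main obstacle is the key structural lemma: a generic $[0,1]$-valued Boolean function can have all of its variance on higher Fourier levels and no coordinate with nonnegligible $\mu_i$, so submodularity must be used essentially. The plan is to exploit the fact that for submodular $g$ the discrete derivative $\partial_i g(x) = g(x^{i\to 1}) - g(x^{i\to 0})$ is coordinate-wise non-increasing in the remaining variables, so that the mean $\mu_i$ governs the full distribution of $\partial_i g$ in a strong sense. Combining this monotonicity of marginals with the Poincar\'e inequality $\Var[g]\le\sum_i\E[(\partial_i g)^2]$ and the range constraint $g\in[0,1]$ should force the existence of a coordinate with $|\mu_i|\gtrsim\eps$ whenever $\Var[g]\gtrsim\eps^2$; extracting this lower bound cleanly, with the correct absolute constant, is the submodularity-specific technical step on which the whole argument hinges.
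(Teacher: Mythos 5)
Your recursive decision-tree construction is a reasonable general skeleton, but the key structural lemma on which the entire argument rests is false. You claim that for any submodular $g:\zo^m\to[0,1]$ with $\Var[g]\geq\eps^2$, some coordinate $i$ has $\mu_i^2 = (\E[g_{x_i=1}]-\E[g_{x_i=0}])^2 = \Omega(\eps^2)$. Take $g(x) = x_1 \oplus x_2$: one checks directly that $\partial_1\partial_2 g \equiv -2 \leq 0$, so $g$ is submodular, and $\Var_\U[g] = 1/4$, yet $\mu_1 = \mu_2 = 0$ (and trivially $\mu_i = 0$ for $i>2$). So no coordinate has nontrivial linear effect even though the variance is maximal. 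More generally, submodularity only forces the discrete derivative $\partial_i g$ to be \emph{monotone} in the remaining coordinates, not \emph{sign-constant}; it can pass from $+1$ at $\bar{0}$ to $-1$ at $\bar{1}$ and have mean zero while $\E[(\partial_i g)^2]$ is large, so neither the Poincar\'e inequality nor the range constraint rescues your claim.

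The paper avoids this obstacle by not using variance as the node potential at all. It tracks instead the quantity $f[v](\bar{0})$, the value of the (restricted) function at the all-zeros point of the current subcube, and splits whenever the restriction is not $\alpha$-monotone-decreasing, i.e. whenever some $\partial_i f[v](z) > \alpha$. By submodularity $\partial_i f[v]$ is non-increasing, so $\partial_i f[v](\bar{0}) > \alpha$, and therefore the \emph{$1$-child} satisfies $f[v_1](\bar{0}) \geq f[v](\bar{0}) + \alpha$ while the $0$-child has $f[v_0](\bar{0}) = f[v](\bar{0})$. This one-sided potential increase, capped by the range $[0,1]$, combined with the rule that rank increments only when the two subtrees have equal rank, gives rank $\leq 1/\alpha$ (Lemma \ref{lem:submod-lip-oneside}). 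A symmetric pass on $\bar{f}(x) = f(\neg x)$ then handles the lower Lipschitz bound and doubles the rank to $2/\alpha$. Only after the tree is built is the variance bound used, via the concentration inequality $\Var[g]\leq 2\alpha\E[g]$ for $\alpha$-Lipschitz submodular $g$ (Lemma \ref{lem:Lipschitz}), to replace each $\alpha$-Lipschitz leaf by a constant. Your plan conflates the branching criterion (which must be asymmetric and tied to $\partial_i f(\bar{0})$) with the stopping criterion (which is where variance legitimately enters); the fix is to separate these roles as the paper does.
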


This result is based on a decomposition technique of \citet{GuptaHRU:11} that shows that a submodular function $f$ can be decomposed into disjoint regions where $f$ is also $\alpha$-Lipschitz (for some $\alpha > 0$). We prove that this decomposition can be computed by a binary decision tree of rank $2/\alpha$. Our second result is that over the uniform distribution
a decision tree of rank $r$ can be $\eps$-approximated by a decision tree of depth $O(r+\log(1/\eps))$.

\begin{theorem}
\label{th:pruning-intro}
Let $T$ be a binary decision tree of rank $r$. Then for any integer $d \geq 0$,
$T$ truncated at depth $d = \frac{5}{2}(r+\log(1/\eps))$ gives a decision tree $T_{\leq d}$ such that, $\Pr_\U[T(x) \neq T_{\leq d}(x)] \leq \epsilon$.
\end{theorem}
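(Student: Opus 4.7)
The plan is to model the traversal of $T$ by a uniform random input as a process that steadily decreases the rank of the current subtree, and then to apply a Chernoff-type bound on the number of steps needed to reach a leaf (rank $0$). The key structural observation is that at any internal node $v$ of rank $r'\geq 1$, at most one of the two children of $v$ can have rank equal to $r'$: if both had rank $r'$, then by the recursive definition of rank $v$ would itself have rank $r'+1$, a contradiction. Consequently, since a uniformly random input picks each child with probability exactly $1/2$ (a root-to-leaf path queries each variable at most once, so every step reads a fresh uniform bit), the rank of the current subtree drops strictly with probability at least $1/2$ at each step.

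Because the traversal starts at a node of rank $r$ and ends at a leaf of rank $0$, the rank must decrease at least once between the start and end, and in total at most $r$ strict drops occur. By the previous observation, the number of steps between two consecutive rank-drops (or before the first drop) is stochastically dominated by an independent geometric random variable $G_i$ with $\pr[G_i=k]=2^{-k}$ for $k\geq 1$. Hence the depth of the leaf reached by a uniformly random input is stochastically dominated by $\sum_{i=1}^r G_i$, and I am reduced to bounding the upper tail of this sum of $r$ i.i.d.\ geometrics.

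This final step is a standard moment-generating-function computation. Choosing $e^t = 4/3$ makes $\E[e^{tG_i}] = e^t/(2-e^t) = 2$, so by Markov's inequality applied to $e^{t\sum G_i}$,
\equn{\pr_\U[T(x)\neq T_{\leq d}(x)] \leq \pr\Bigl[\sum_{i=1}^r G_i > d\Bigr] \leq e^{-td}\cdot 2^r = (3/4)^d \cdot 2^r.}
This is at most $\eps$ whenever $d\cdot\log_2(4/3)\geq r + \log_2(1/\eps)$; since $1/\log_2(4/3)\approx 2.409 < 5/2$, the depth $d=\tfrac{5}{2}(r+\log(1/\eps))$ suffices. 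The only delicate point is tuning the MGF parameter so the advertised constant $5/2$ comes out cleanly; the choice $e^t=4/3$ is essentially the unique value making $\E[e^{tG}]$ land exactly on $2$, which pairs nicely with the factor $2^r$ from the $r$ independent geometrics. An equivalent induction proving directly that $\pr_\U[\text{depth}>d]\le 2^{-d}\sum_{k=0}^{r-1}\binom{d}{k}$ followed by a binomial-tail Chernoff bound would give the same conclusion, but the stochastic-domination route above is the most transparent.
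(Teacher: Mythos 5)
Your proof is correct, and it takes a genuinely different route from the paper's. The paper proves a more general statement for any $\alpha$-bounded product distribution by a direct induction on the pruning depth $d$: using the fact that at least one subtree of the root has rank $r-1$, one gets the recurrence yielding $\Pr[T^{\leq d}\neq T] \leq 2^{r-1}(1-\alpha/2)^d$, which for the uniform case ($\alpha = 1/2$) gives $2^{r-1}(3/4)^d$ and the stated depth. Your argument instead views the root-to-leaf walk of a random input as a stochastic process on ranks, observes that at each internal node at least one child has strictly smaller rank (hence the rank drops with probability $\geq 1/2$ on each fair-coin step), dominates the leaf depth by a sum of $r$ i.i.d.\ Geom$(1/2)$ variables, and finishes with an MGF bound giving $2^r(3/4)^d$. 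Both approaches deliver the same depth $\frac{5}{2}(r+\log(1/\eps))$ (your constant is $2^r$ vs.\ the paper's $2^{r-1}$, but $\frac{5}{2}\log_2(4/3)\approx 1.04 > 1$ absorbs the slack). The trade-off: the paper's induction is more elementary, needs no Chernoff machinery, and gives the result uniformly over $\alpha$-bounded product distributions in one shot; your stochastic-domination view is more conceptual and makes the ``rank as a potential function'' intuition vivid, but as written it is tied to the uniform coin at each step --- extending it to $\alpha$-bounded distributions would require arguing the drop probability is at least $\alpha$ (which is true, since the smaller-rank child is reached with probability $\geq \alpha$) and swapping in Geom$(\alpha)$ variables. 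Your side remark that one could instead show $\Pr[\text{depth}>d]\leq 2^{-d}\sum_{k<r}\binom{d}{k}$ and apply a binomial tail bound is also sound and essentially the classical Ehrenfeucht--Haussler counting bound.
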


%This result also holds over any distribution that is sufficiently smooth in the sense that the probability that any conjunction of $k$ variables is satisfied is at most $c^{-k}$ for a fixed constant $c$.
It is well-known (\eg \citep{KushilevitzMansour:93}), that a decision tree of size $s$ (\ie with $s$ leaves) is $\eps$-close to the same decision tree pruned at depth $\log(s/\eps)$. It is also well-known that for any decision tree of size $s$ has rank of at most $\log s$. Therefore Theorem \ref{th:pruning-intro} (strictly) generalizes the size-based pruning. Another implication of this result is that several known algorithms for learning polynomial-size DTs over the uniform distribution (\eg \citep{KushilevitzMansour:93,GopalanKK:08}) can be easily shown to also learn DTs of logarithmic rank (which might have superpolynomial size). %In addition, our proof of this property is simple.

Combining Theorems \ref{th:submod-lip-rank-const-intro} and \ref{th:pruning-intro} we obtain that submodular functions can be approximated by shallow decision trees and consequently as functions depending on at most $2^{\poly(1/\epsilon)}$ variables.
\begin{corollary}
\label{cor:submod-tree-approx-intro}
Let $f:\zon \rightarrow [0,1]$ be a submodular function and $\eps > 0$. There exists a binary decision tree $T$ of depth $d  = O(1/\eps^2)$ with constants in the leaves such that $\|T - f\|_2 \leq \eps$. In particular, $T$ depends on at most $2^{O(1/\eps^2)}$ variables.
\end{corollary}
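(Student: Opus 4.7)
The corollary is essentially a composition of the two previously stated theorems, with parameters adjusted so that the two sources of error together stay below $\eps$. The plan is to first invoke Theorem~\ref{th:submod-lip-rank-const-intro} with approximation parameter $\eps/2$ to obtain a real-valued decision tree $T$ of rank $r \leq 16/\eps^2$ with $\|T - f\|_2 \leq \eps/2$, and then apply Theorem~\ref{th:pruning-intro} to $T$ to prune it to a shallow tree $T_{\leq d}$ while incurring at most an additional $\eps/2$ of $\ell_2$-error.

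The only subtlety is that Theorem~\ref{th:pruning-intro} is stated as a bound on $\Pr_\U[T(x) \neq T_{\leq d}(x)]$, whereas we need a bound on $\|T - T_{\leq d}\|_2$. To bridge the two, first replace $T$ by its pointwise clipping to $[0,1]$; since $f \in [0,1]$, clipping can only decrease $\|T - f\|_2$, and clipping commutes with pruning in the sense that the leaf values of $T_{\leq d}$ remain in $[0,1]$. Hence $|T(x) - T_{\leq d}(x)| \leq 1$ everywhere, and
\alequn{
\|T - T_{\leq d}\|_2^2 \leq \Pr_\U[T(x) \neq T_{\leq d}(x)].
}
Choosing the pruning parameter $\eps'' = \eps^2/4$ in Theorem~\ref{th:pruning-intro} therefore ensures $\|T - T_{\leq d}\|_2 \leq \eps/2$, and the required depth is $d = \frac{5}{2}(r + \log(1/\eps'')) = O(1/\eps^2)$. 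A triangle inequality then gives $\|T_{\leq d} - f\|_2 \leq \|T_{\leq d} - T\|_2 + \|T - f\|_2 \leq \eps$.

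Finally, the "depends on at most $2^{O(1/\eps^2)}$ variables" claim is immediate from the depth bound: a binary decision tree of depth $d$ has at most $2^d - 1$ internal nodes, so the number of distinct variables queried is at most $2^d = 2^{O(1/\eps^2)}$. The main (mild) obstacle is thus bookkeeping: making sure the output of Theorem~\ref{th:submod-lip-rank-const-intro} can be safely assumed to be $[0,1]$-valued so that the $\Pr[\cdot]$-style pruning bound converts cleanly to an $\ell_2$ bound, and then balancing the two $\eps/2$ error budgets to get the stated constant in the exponent.
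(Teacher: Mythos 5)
Your proof is correct and follows essentially the same route as the paper, which states the corollary as an ``immediate'' consequence of Theorems~\ref{th:submod-lip-rank-const-intro} and \ref{th:pruning-intro} without spelling out the error bookkeeping. You correctly identified the one nontrivial step --- converting the disagreement-probability bound of Theorem~\ref{th:pruning-intro} into an $\ell_2$ bound via $[0,1]$-boundedness --- and handled it cleanly. One small remark: the clipping step you added defensively is in fact unnecessary, since the tree produced in the proof of Theorem~\ref{th:submod-lip-rank-const-intro} already has leaf constants in $[0,1]$ (each leaf is labeled with $\E_\U[T'[\ell]]$ for a $[0,1]$-valued restriction $T'[\ell]$); but you could not have known this from the theorem statement alone, and the clipping is harmless and correctly argued. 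The parameter choices ($\eps/2$ in Theorem~\ref{th:submod-lip-rank-const-intro} giving rank $\le 16/\eps^2$, pruning error $\eps^2/4$ giving depth $d = \frac{5}{2}\bigl(16/\eps^2 + O(\log(1/\eps))\bigr) = O(1/\eps^2)$) and the final triangle inequality and $2^d$ variable count are all correct.
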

We remark that it is well-known that a DT of depth $d$ can be written as a polynomial of degree $d$. This gives a simple combinatorial proof of the low-degree approximation of \citep{CheraghchiKKL:12} which is based on an analysis of the noise stability of submodular functions. In addition, in our case the polynomial depends only on $2^{O(1/\eps^2)}$ variables, which is not true for the approximating polynomial constructed in \citep{CheraghchiKKL:12}.

\smallskip
\noindent{\bf{Algorithmic applications:}}
We show that these structural results can be used to obtain a number of new learning algorithms for submodular functions. One of the key issues in applying our approximation by a function of few variables is detecting the $2^{O(1/\eps^2)}$ variables that would suffice for approximating a submodular function given random examples alone. While for general functions this probably would not be an efficiently solvable problem, we show that a combination of (1) approximation of submodular functions by low-degree polynomials of low spectral (Fourier) $\ell_1$ norm  (implied by the DT approximation) and (2) the discrete concavity of submodular functions allow finding the necessary variables by looking at Fourier coefficients of degree at most 2.
\begin{lemma}
\label{lem:find-inf-vars-intro}
There exists an algorithm that given uniform random examples of values of a submodular function $f: \zon \rightarrow [0,1]$, finds a set of $2^{O(1/\eps^2)}$ variables $J$ such that there is a function $f_J$ depending only on the variables in $J$ and satisfying $\|f - f_J\|_2 \leq \eps$. The algorithm runs in time $n^2 \log (n) \cdot 2^{O(1/\eps^2)}$ and uses $\log (n) \cdot 2^{O(1/\eps^2)}$ random examples.
\end{lemma}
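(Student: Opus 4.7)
The plan is to estimate every Fourier coefficient of $f$ of degree at most two from random examples, include in $J$ every variable appearing in a large such coefficient, and then argue via submodularity that $\E[f\mid x_J]$ approximates $f$ in $\ell_2$.

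\textbf{Structural setup.} By Corollary \ref{cor:submod-tree-approx-intro} there exists a decision tree $T$ of depth $d=O(1/\eps^2)$ with $\|f-T\|_2\le\eps/2$. Each leaf-indicator of $T$ is a product of $d$ factors of the form $(1\pm\chi_i)/2$, each of Fourier $\ell_1$-norm $1$; since $T$ has at most $2^d$ leaves with values in $[0,1]$, the polynomial $T$ has degree at most $d$ and Fourier spectral $\ell_1$-norm $\|\widehat T\|_1\le L:=2^d=2^{O(1/\eps^2)}$.

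\textbf{Algorithm and size bound.} For every set $S$ with $|S|\le 2$, compute the empirical estimate $\widetilde c_S=m^{-1}\sum_k f(x^{(k)})\chi_S(x^{(k)})$ from $m=\log(n)\cdot 2^{O(1/\eps^2)}$ uniform random examples; Chernoff and a union bound yield $|\widetilde c_S-\hat f(S)|\le\tau/2$ simultaneously for all $O(n^2)$ such $S$ with high probability, for a threshold $\tau=2^{-O(1/\eps^2)}$ to be tuned. Output $J=\{\,i:\exists S\ni i,\,|S|\le 2,\,|\widetilde c_S|\ge\tau\,\}$. Parseval's identity $\sum_S\hat f(S)^2\le\|f\|_\infty^2\le 1$ bounds the number of $S$ with $|\hat f(S)|\ge\tau/2$ by $4/\tau^2$, hence $|J|\le 8/\tau^2=2^{O(1/\eps^2)}$. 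The runtime is $O(n^2 m)=\widetilde O(n^2)\cdot 2^{O(1/\eps^2)}$.

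\textbf{Approximation via submodularity.} Set $f_J := \E[f\mid x_J] = \sum_{S\subseteq J}\hat f(S)\chi_S$, so that $\|f-f_J\|_2^2=\sum_{S\not\subseteq J}\hat f(S)^2\le\sum_{i\notin J}\mathrm{Inf}_i(f)$. Two consequences of discrete concavity enter here: $\hat f(\{i,j\})\le 0$ for all $i\ne j$ (because $\partial_i\partial_j f\le 0$ pointwise), and $D_if$ is antitone in $x_{-i}$ with $|D_if|\le 1$. The antitonicity combined with the elementary inequality $\|g\|_2^2\le\|g\|_\infty\cdot\E[|g|]$ applied to $g=D_if$ gives $\mathrm{Inf}_j(D_if)\le|\widehat{D_if}(\{j\})|=2|\hat f(\{i,j\})|$, which translates into the per-coordinate bound $\mathrm{Inf}_i(f)\le\hat f(\{i\})^2+\tfrac12\sum_{j\ne i}|\hat f(\{i,j\})|$.

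\textbf{The main obstacle.} The principal difficulty is converting this per-coordinate inequality---whose naive summation over $i\notin J$ is $n$-dependent---into the required $\sum_{i\notin J}\mathrm{Inf}_i(f)\le\eps^2$. I expect the sign structure $\hat f(\{i,j\})\le 0$ to be crucial here: the absolute-value sum collapses to a single signed sum, which can be compared (using $\|f-T\|_2\le\eps/2$ and $\|\widehat T\|_1\le L$) against the corresponding signed sum for $T$ to yield an $O(L)$ bound in place of the naive $O(n^2\tau)$. Tuning $\tau=\Theta(\eps^2/L)=2^{-O(1/\eps^2)}$ should then make $\sum_{i\notin J}\mathrm{Inf}_i(f)\le\eps^2$, so that $\|f-f_J\|_2\le\eps$, completing the argument.
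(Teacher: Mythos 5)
Your algorithm (estimate degree~$\leq 2$ Fourier coefficients from $\log(n)\cdot 2^{O(1/\eps^2)}$ samples, threshold them, take the union of indices) is the same as the paper's, and your derivation of $\sum_{S\ni i,j}\hat f(S)^2\le \frac12|\hat f(\{i,j\})|$ via non-positivity of $\partial_i\partial_j f$ and the interpolation inequality $\|g\|_2^2\le\|g\|_\infty\E[|g|]$ is essentially the paper's Lemma~\ref{lem:upp-bound-sum}. But the argument stops exactly where the paper's key Lemma~\ref{lem:junta-from-l1} is needed, and you say so yourself: the step from the per-coordinate bound to $\sum_{i\notin J}\mathrm{Inf}_i(f)\le\eps^2$ is left as a speculative sketch ("I expect\dots should then make\dots").

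This is a genuine gap, not a cosmetic one. The paper does \emph{not} attempt to bound $\sum_{i\notin J}\mathrm{Inf}_i(f)$; that quantity counts each $\hat f(S)^2$ with multiplicity $|S\setminus J|$, so controlling it is strictly harder than controlling $\|f-\E[f\mid x_J]\|_2^2=\sum_{S\not\subseteq J}\hat f(S)^2$, and neither quantity is reached by your inequalities alone. Instead, Lemma~\ref{lem:junta-from-l1} works directly with the approximating polynomial $p$ (degree $d$, $\|\hat p\|_1\le L$, $\|f-p\|_2\le\eps$): it sets $\cS=\{S:|S|\le d,\ |\hat f(S)|\ge\eps^2/L\}$, lets $p'=\sum_{S\in\cS}\hat p(S)\chi_S$ (supported on $J=\bigcup\cS$), and shows $\|f-p'\|_2^2-\|f-p\|_2^2\le\frac{2\eps^2}{L}\sum_S|\hat p(S)|\le 2\eps^2$ by a coefficient-by-coefficient comparison that charges the extra error to $\|\hat p\|_1$. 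That is where the spectral-$\ell_1$ bound enters, and it is what makes the threshold $\tau\approx\eps^2/L$ sufficient; your plan to "compare signed sums for $f$ against $T$" gestures in the same direction but is not a proof. Once Lemma~\ref{lem:junta-from-l1} is in hand, your Lemma~\ref{lem:upp-bound-sum}-type inequality is used exactly as you use it: any $i$ appearing in a large low-degree coefficient must also appear in a large \emph{degree-2} coefficient (with a squared threshold), so estimating degree $\le 2$ coefficients finds a superset of the right $J$. You should replace your final paragraph with an actual proof of the Lemma~\ref{lem:junta-from-l1}-style projection argument; the influence-sum route both over-counts and, as you note, has no clean bound without the $\|\hat p\|_1$ comparison.
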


Combining this lemma with Corollary~\ref{cor:submod-tree-approx-intro} and using standard Fourier-based learning techniques, we obtain the following learning result in the PAC model.
\begin{theorem}
\label{th:pac-learn-submod-l1-intro}
There is an algorithm that given uniform random examples of any submodular function $f: \zon \rightarrow [0,1]$, outputs a function $h$, such that $\|f-h\|_2 \leq \epsilon$. The algorithm runs in time $\tilde{O}(n^2) \cdot 2^{O(1/\eps^4)}$ and uses $2^{O(1/\eps^{4})} \log n$ examples.
\end{theorem}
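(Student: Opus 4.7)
The plan is to combine the attribute-selection step (Lemma~\ref{lem:find-inf-vars-intro}) with Fourier-based low-degree approximation on the recovered variables, so that the only dependence on $n$ comes from attribute selection while the Fourier step pays only a price depending on $|J|$.

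First, I would invoke Lemma~\ref{lem:find-inf-vars-intro} with error parameter $\eps/3$ to obtain a set $J$ of size $2^{O(1/\eps^2)}$ together with the guarantee that some function $f_J$ on $J$ satisfies $\|f-f_J\|_2 \le \eps/3$. Let $f^*_J = \E_{x_{\bar J}}[f(x)]$; this is the orthogonal projection of $f$ onto the subspace of functions depending only on $J$, so it is the $\ell_2$-closest such function and hence $\|f-f^*_J\|_2 \le \eps/3$ as well. Equivalently, $f^*_J = \sum_{S \subseteq J}\hat f(S)\chi_S$.

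Next I combine this with Corollary~\ref{cor:submod-tree-approx-intro}, which gives a decision tree $T$ of depth $d = O(1/\eps^2)$ with $\|f-T\|_2 \le \eps/3$. Any DT of depth $d$ is a multilinear polynomial of degree $d$, so $T$ equals a polynomial $p$ of degree at most $d$ with $\|f-p\|_2 \le \eps/3$. Let $p_J$ be the Fourier projection of $p$ onto coefficients indexed by subsets of $J$; since Fourier projection is an $\ell_2$ contraction, $\|f^*_J - p_J\|_2 \le \|f-p\|_2 \le \eps/3$. Hence there exists a degree-$d$ polynomial supported on subsets of $J$ that approximates $f$ to $\ell_2$ error at most $2\eps/3$, and therefore the best approximation of $f$ in the subspace $V = \mathrm{span}\{\chi_S : S \subseteq J,\ |S|\le d\}$ does too. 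That best approximation is $f_V = \sum_{S \subseteq J,\ |S|\le d}\hat f(S)\chi_S$.

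The algorithm therefore estimates $\hat f(S)$ for every $S \subseteq J$ with $|S| \le d$ by the empirical average of $f(x)\chi_S(x)$, producing a hypothesis $h = \sum_{S} \tilde a_S \chi_S$. The number of such coefficients is at most $|J|^d = 2^{O(1/\eps^2) \cdot O(1/\eps^2)} = 2^{O(1/\eps^4)}$; call this $N$. By Parseval,
\[
\|f-h\|_2^2 \;=\; \|f-f_V\|_2^2 + \sum_{S}(\hat f(S)-\tilde a_S)^2 \;\le\; (2\eps/3)^2 + \sum_S(\hat f(S)-\tilde a_S)^2.
\]
Estimating each coefficient to accuracy $\eps/(3\sqrt N)$ with confidence $1-1/(3N)$ using Hoeffding's inequality and a union bound needs $O(N\log N/\eps^2) = 2^{O(1/\eps^4)}$ samples (and no further $n$-dependence, since all variables are already in $J$), and yields $\|f-h\|_2 \le \eps$. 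The overall sample count is $\log(n)\cdot 2^{O(1/\eps^2)} + 2^{O(1/\eps^4)} = 2^{O(1/\eps^4)}\log n$; the running time is $\tilde O(n^2)\cdot 2^{O(1/\eps^2)}$ for attribute selection plus $2^{O(1/\eps^4)}$ for evaluating all $N$ characters on all samples, as claimed.

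The only delicate step is the passage from the DT of Corollary~\ref{cor:submod-tree-approx-intro}, which may depend on variables outside the set $J$ produced by Lemma~\ref{lem:find-inf-vars-intro}, to a degree-$d$ approximation supported on $J$. I expect this to be the main conceptual obstacle, but it is handled cleanly by the Fourier-projection contraction argument above: once Lemma~\ref{lem:find-inf-vars-intro} certifies $J$ as a good "relevant-variables" set in $\ell_2$, projecting any polynomial approximator onto subsets of $J$ can only improve (or preserve) the error against $f^*_J$, giving the required degree-$d$ approximation on $J$.
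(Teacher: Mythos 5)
Your proof is correct and follows essentially the same route as the paper: run Lemma~\ref{lem:find-inf-vars-intro} to find $J$, then estimate all Fourier coefficients of degree $O(1/\eps^2)$ supported on $J$ (which is exactly the low-degree algorithm of Linial--Mansour--Nisan restricted to $J$). Your Fourier-projection/contraction argument for why a degree-$d$ polynomial supported on subsets of $J$ exists is a clean way to fill in the step that the paper states tersely; the same fact is contained in the paper's Lemma~\ref{lem:junta-from-l1}, which explicitly constructs a degree-$d$ polynomial $p'$ over $J$ with $\|f-p'\|_2 = O(\eps)$.
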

In the language of approximation algorithms, we give the first {\em efficient polynomial-time approximation scheme} (EPTAS) algorithms for the problem. We note that the best previously known algorithm for learning of submodular functions within $\ell_1$-error $\epsilon$ runs in time $n^{O(1/\epsilon^2)}$ \citep{CheraghchiKKL:12}, in other words is a PTAS (this algorithm works also in the agnostic setting).

\eat{
The hypothesis returned by our algorithm is a function of $2^{O(1/\eps^2)}$ variables only. This allows us to obtain a submodular function as a hypothesis (referred to as {\em proper} learning) in time $\tilde{O}(n^2) \cdot 2^{O(1/\eps^{4})} + 2^{2^{O(1/\eps^2)}}$ and using $2^{O(1/\eps^{4})} \log n$ random examples. Using a known connection between proper learning and property testing, we can also use this approach to {\em test} whether a given function is submodular or $\epsilon$-far from submodular (in the $\ell_1$ norm). We defer the details to the full version.}
%See Section~\ref{sec:testing} for more details.

We also give a faster algorithm for agnostic learning of submodular functions, provided that we have access to value queries (returning $f(x)$ for a given point $x \in \zon$).
\begin{theorem}
\label{th:agn-learn-valueq-intro}
Let $\C_s$ denote the class of all submodular functions from $\zon$ to $[0,1]$.
There is an agnostic learning algorithm that given access to value queries for a function $f:\zon \rightarrow [0,1]$, outputs a function $h$ such that $\|f-h\|_2 \leq \Delta + \epsilon$, where $\Delta = \min_{g \in \C_s}\{\|f-g\|_2\}$. The algorithm runs in time $\poly(n, 2^{1/\eps^2})$ and uses $\poly(\log n, 2^{1/\eps^2})$ value queries.
\end{theorem}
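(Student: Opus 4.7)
The plan is to reduce agnostic learning of $\C_s$ to the well-studied task of learning a target close to a polynomial of small Fourier spectral $\ell_1$-norm, and to solve that task by a Kushilevitz--Mansour (KM) style sparse-Fourier-recovery procedure with value queries. Corollary~\ref{cor:submod-tree-approx-intro} provides the structural input: any submodular $g \in \C_s$ is $\eps/2$-close in $\ell_2$ to a real-valued decision tree $T_g$ of depth $d = O(1/\eps^2)$ whose leaves lie in $[0,1]$. Such a $T_g$ is a sum over its $\leq 2^d$ root-to-leaf paths $P$ of the leaf value times $\mathbf{1}_P$, and each $\mathbf{1}_P$ is a product of $d$ factors of the form $(1\pm x_i)/2$, hence has Fourier spectral $\ell_1$-norm exactly $1$; therefore $\|T_g\|_1 := \sum_S|\widehat{T_g}(S)| \leq L$ where $L := 2^d = 2^{O(1/\eps^2)}$.

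Let $g^\star \in \argmin_{g \in \C_s}\|f-g\|_2$ and set $T^\star := T_{g^\star}$, so that $\|f-T^\star\|_2 \leq \Delta+\eps/2$ and $\|T^\star\|_1 \leq L$. Choose a threshold $\tau := c\eps/L$ for a sufficiently small absolute constant $c$, and use value queries to $f$ together with the KM procedure to compute the set $A$ of all $S$ with $|\widehat f(S)| \geq \tau$; output $h := \sum_{S \in A}\widehat f(S)\chi_S$. The error analysis is the standard one for learning via approximation by low-$\ell_1$-spectrum targets: $\|f-h\|_2^2 = \sum_{S \notin A}\widehat f(S)^2$, and after writing $\widehat f(S) = \widehat{T^\star}(S) + (\widehat f(S) - \widehat{T^\star}(S))$ the first factor is controlled by Cauchy--Schwarz against $\|f - T^\star\|_2 \leq \Delta + \eps/2$, while the second is bounded by $\max_{S \notin A}|\widehat f(S)| \cdot \|T^\star\|_1 \leq \tau L = O(\eps)$. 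Solving the resulting quadratic inequality in $\|f-h\|_2$ yields $\|f-h\|_2 \leq \Delta + \eps$ for an appropriate choice of $c$.

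Since $1/\tau = 2^{O(1/\eps^2)}$, the textbook KM algorithm already runs in time $\poly(n, 1/\tau) = \poly(n, 2^{1/\eps^2})$, matching the stated runtime. The main obstacle is matching the stronger value-query bound $\poly(\log n, 2^{1/\eps^2})$, rather than the $\poly(n, 2^{1/\eps^2})$ that vanilla KM produces (its per-coordinate bit-refinements each cost $\poly(n)$ queries). To remove the $\poly(n)$ factor in queries we would use an attribute-efficient refinement exploiting that by Parseval $|A| \leq 1/\tau^2 = 2^{O(1/\eps^2)}$, and that every coordinate appearing in some $S \in A$ sits inside a set of size $\leq d = O(1/\eps^2)$ (because $T^\star$ has degree $d$): a hashing / dyadic-bisection scheme then identifies each such coordinate with $O(\log n)$ value queries, yielding a total query count of $\poly(\log n, 2^{1/\eps^2})$ while the $n$-dependent book-keeping stays in the runtime alone.
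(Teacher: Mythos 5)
Your proposal is essentially the same approach as the paper: reduce to approximation by a low spectral $\ell_1$-norm polynomial (via the depth-$O(1/\eps^2)$ decision tree), then run an attribute-efficient Kushilevitz--Mansour procedure with a quadratic-inequality error analysis (the paper's Corollary~\ref{cor:submod-spectral-approx}, Theorem~\ref{th:km-att-eff}, and Lemma~\ref{lem:km-to-agnostic}).

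Two small points. First, there is an algebraic slip in the threshold: with $\tau = c\eps/L$ the constant term of your quadratic is $\tau L = c\eps$, and solving $X^2 \leq (\Delta+\eps/2)X + c\eps$ gives $X \leq \Delta + \eps/2 + \sqrt{c\eps}$, which is not $\leq \Delta + \eps$ for any fixed constant $c$ once $\eps$ is small. You need $\tau = \Theta(\eps^2/L)$ (the paper uses $\theta = \eps^2/(2L)$ in Lemma~\ref{lem:km-to-agnostic}) so the constant term is $\Theta(\eps^2)$; this does not change the asymptotic $2^{O(1/\eps^2)}$ bounds. Second, the attribute-efficient variant of KM that you sketch via hashing/bisection is exactly the ingredient the paper imports as a black box (Theorem~\ref{th:aewp}, the attribute-efficient weak agnostic parity learner of \citet{Feldman:07jmlr}, which achieves $\tilde O(d^2\log^2 n\cdot\theta^{-2}\log(1/\delta))$ queries by restricting the search to parities of degree $\leq d$). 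Your proposal correctly identifies that attribute-efficiency is what reduces the query count to $\poly(\log n, 2^{1/\eps^2})$, but stops at a sketch rather than invoking or supplying a concrete such algorithm, so as written this step is a gap.
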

This algorithm is based on an attribute-efficient version of the Kushilevitz-Mansour algorithm \citep{KushilevitzMansour:93} for finding significant Fourier coefficients by \citet{Feldman:07jmlr}. We also show a different algorithm with the same agnostic guarantee but relative to the $\ell_1$-distance (and hence incomparable). In this case the algorithm is based on an attribute-efficient agnostic learning of decision trees which results from agnostic boosting \citep{KalaiKanade:09,Feldman:10ab} applied to the attribute-efficient algorithm for learning parities \citep{Feldman:07jmlr}.

Finally, we discuss the special case of submodular function with a discrete range $\{0,1,\ldots,k\}$ studied in a recent work of \citet{RaskhodnikovaYaroslavtsev:13soda}. We show that an adaptation of our techniques implies that such submodular functions can be {\em exactly} represented by rank-$2k$ decision trees.
This directly leads to new structural results and faster learning algorithms in this setting. A more detailed discussion appears in Section \ref{sec:pseudo-boolean}.

\eat{
\begin{theorem} (informal)
\label{th:agn-learn-valueq-kval-intro}
Let $\C_s^k$ denote the class of all submodular functions from $\zon$ to $\{0,1,\ldots,k\}$. There exists an agnostic learning algorithm that given access to value queries for a function $f: \zon \rightarrow \{0,1,\ldots,k\}$, outputs a function $h:\zon \rightarrow \{0,1,\ldots,k\}$ such that $\E_\U[|f - h|] \leq \Delta + \epsilon$, where $\Delta = \min_{g\in \C_s^k}\{\E_\U[|f - g|]\}$. The algorithm runs in time $\poly(n,2^k,1/\eps)$ and uses $\poly(\log{n},2^k,1/\eps)$ value queries.
\end{theorem}
}

\smallskip
\noindent{\bf{Lower bounds:}}
We prove that an exponential dependence on $\eps$ is necessary for learning of submodular functions (even monotone ones), in other words, there exists no fully polynomial-time approximation scheme (FPTAS) for the problem.

\begin{theorem}
\label{th:PAC-hardness-intro}
PAC-learning monotone submodular functions with range $[0,1]$ within $\ell_1$-error of $\eps>0$ requires $2^{\Omega(\eps^{-2/3})}$ value queries to $f$.
\end{theorem}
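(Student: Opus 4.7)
The plan is to exhibit a family $\F = \{f_A : A \in \mathcal{I}\}$ of monotone submodular functions with range $[0,1]$ such that (i) for every distinct $A,B \in \mathcal{I}$, $\|f_A - f_B\|_1 \geq 3\eps$, and (ii) the functions are mutually indistinguishable by any $2^{o(\eps^{-2/3})}$ value queries. Any learner producing a hypothesis within $\ell_1$-error $\eps$ of the unknown target must, in particular, identify the target from a family whose pairwise gaps are $\geq 3\eps$, so a lower bound on distinguishing implies the stated bound.

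First I would construct the family. Set $k = \Theta(\eps^{-2/3})$ and work over $\zon$ with $n$ a sufficiently large polynomial in $k$. The index set $\mathcal{I}$ will consist of subsets $A$ of $[n]$ of a carefully chosen size $a$, selected via an error-correcting code (or an anti-chain/packing argument) so that $|\mathcal{I}| = 2^{\Omega(k)}$ and the symmetric differences $|A \triangle B|$ are all $\Omega(a)$. For each $A$, I define $f_A$ as a small perturbation of a fixed monotone submodular base function $f_0$, using a capped/truncated rank-function gadget of the form $f_A(S) = \alpha\,\min(|S|, r) + \beta\,\min(|S\cap A|, r')$, with constants $\alpha,\beta,r,r'$ tuned so that $f_A$ has range in $[0,1]$ and monotone submodularity is inherited from the matroid rank functions $\min(|S\cap\cdot|, \cdot)$.

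Second I would verify the pairwise $\ell_1$ separation. Because the truncation $\min(|S\cap A|, r')$ is sensitive to $A$ precisely on a middle layer where $|S \cap A|$ is close to $r'$, and the base function is the same for every $A$, the difference $f_A - f_B$ localizes on the region where $|S\cap A|$ and $|S\cap B|$ straddle $r'$. Using standard concentration on $\zon$ and the fact that $|A \triangle B| = \Omega(a)$, I would show that the mass of this region times the per-point gap yields $\|f_A - f_B\|_1 \geq 3\eps$ when the truncation parameters are set so that the perturbation magnitude is $\Theta(\eps)$ and the supporting region has constant measure.

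Third I would bound the information a value query reveals. For any fixed query point $x$, the value $f_A(x)$ depends on $A$ only through $|S \cap A|$ compared to the threshold $r'$. For $S = x$ chosen by the learner (possibly adaptively), the fraction of $A \in \mathcal{I}$ for which $|x \cap A|$ lies in the "active" window around $r'$ is at most $2^{-\Omega(k)} = 2^{-\Omega(\eps^{-2/3})}$; this uses the packing/coding structure of $\mathcal{I}$ together with sharp tail bounds for the hypergeometric distribution. Consequently, any single query $x$ splits $\mathcal{I}$ into a constant-measure "undistinguished" part and an exponentially small "distinguished" part. A standard adversary / Fano-type argument (or a union bound over query transcripts against a uniform prior on $\mathcal{I}$) then yields that $2^{\Omega(\eps^{-2/3})}$ queries are needed to identify $A$, and therefore to produce an $\eps$-approximation.

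The main obstacle is the delicate balancing of parameters in the construction: the perturbation magnitude, the width of the active window around the threshold $r'$, the set size $a$, and the code distance in $\mathcal{I}$ all interact, and they must jointly deliver (a) $\Omega(\eps)$ pairwise $\ell_1$-separation, (b) $2^{-\Omega(\eps^{-2/3})}$ per-query distinguishability, and (c) monotone submodularity with range $[0,1]$. The exponent $2/3$ is forced by the optimal trade-off: the perturbation magnitude behaves like $\Theta(\eps)$ over a region of constant measure while the query-distinguishing event has probability $2^{-\Theta(k)}$ with $k$ coupled to $\eps$ through the truncation scale, and only the choice $k \asymp \eps^{-2/3}$ makes the pairwise-distance calculation tight. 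Getting the constants right in a gadget that simultaneously preserves submodularity and the $[0,1]$ range is where most of the technical work will lie.
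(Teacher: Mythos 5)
There is a genuine gap in your argument, and it appears at two levels.

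\textbf{The per-query indistinguishability claim fails for your construction.} You propose $f_A(S) = \alpha\,\min(|S|,r) + \beta\,\min(|S\cap A|,r')$ and then assert that ``the fraction of $A\in\mathcal{I}$ for which $|x\cap A|$ lies in the active window around $r'$ is at most $2^{-\Omega(k)}$.'' But a value query at the singleton $S=\{i\}$ returns $\alpha + \beta\cdot\mathbf{1}[i\in A]$ (as long as $r,r'\geq 1$), revealing one coordinate of $A$ per query. More generally $\min(|S\cap A|,r')$ is an unperturbed truncated matroid rank, so on any small $S$ it exposes $|S\cap A|$ exactly. An adaptive learner can therefore decode the codeword $A$ with $O(a)$ queries, collapsing the lower bound to polynomial. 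There is also no parameter regime that rescues this: for the $\ell_1$-separation $\|f_A-f_B\|_1 \geq 3\eps$ you need the truncation at $r'$ to be active on a constant-measure slice of the cube, and precisely on that slice the query returns $|x\cap A|$ in the clear, which is far more than an $O(1)$-bit leak.

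\textbf{The family is exponentially too small.} Even if per-query leakage were bounded by $O(1)$ bits, a packing of size $|\mathcal{I}| = 2^{\Omega(k)}$ can only support a Fano-type lower bound of $\Omega(\log|\mathcal{I}|) = \Omega(k) = \Omega(\eps^{-2/3})$ queries, which is polynomially smaller than the target $2^{\Omega(\eps^{-2/3})}$. A packing argument with the stated parameters cannot, as a matter of counting, produce an exponential lower bound. To get the exponential bound via packing one would need a family of size $2^{2^{\Omega(k)}}$ with pairwise $\ell_1$-separation $\Omega(\eps)$ and per-query leakage $O(1)$ bits, which is a much stronger requirement than what you set out to build.

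The paper sidesteps both problems by using a direct reduction rather than a packing argument: given \emph{any} boolean function $f:\{0,1\}^k\to\{0,1\}$, Lemma~\ref{embed} builds a monotone submodular $h$ on $2t$ variables (with $\binom{2t}{t}\approx 2^k$, so $t=\Theta(k)$) that agrees with the fixed concave-profile cap $\tilde h$ off the middle layer and encodes $f$ by a tiny $\frac{1}{2t}$ perturbation at $2^k$ points of the middle layer (one per input to $f$). The middle layer has measure $\Theta(t^{-1/2})$ and the perturbation height is $\Theta(t^{-1})$, so approximating $h$ to $\ell_1$-error $\Theta(t^{-3/2})$ already pins down $f$ to constant accuracy. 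Since PAC-learning all boolean functions on $k$ inputs to accuracy $1/4$ requires $\Omega(2^k)$ queries even with value-oracle access, setting $k=\lceil\eps^{-2/3}\rceil$ (so that $\Theta(t^{-3/2})=\Theta(\eps)$) gives the bound. Your intuition that $k\asymp\eps^{-2/3}$ is the right coupling, driven by the $t^{-3/2}$ scaling, is correct; it is the carrier family and the lower-bound mechanism that need to be replaced with the embedding reduction.
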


Our proof shows that any function $g$ of $t$ variables can be embedded into a submodular function $f_g$ over $2t$ variables in a way that any approximation of $f_g$ to accuracy $\theta(t^{-3/2})$ would yield a $1/4$ approximation of $g$. The latter is well known to require $\Omega(2^{t})$ random examples (or even value queries). This result implies optimality (up to the constant in the power of $\eps$) of our PAC learning algorithms for submodular functions.

Further, we prove that agnostic learning of monotone submodular functions is computationally hard via a reduction from learning sparse parities with noise.

\begin{theorem}
\label{th:agnostic-hardness-intro}
Agnostic learning of monotone submodular functions with range $[0,1]$ within $\ell_1$-error of $\eps>0$ in time $T(n,1/\eps)$ would imply learning of parities of size $\eps^{-2/3}$ with noise of rate $\eta$ in time $\poly(n,\frac{1}{\eps(1-2\eta)}) + 2T(n, \frac{c}{\eps(1-2\eta)})$ for some fixed constant $c$.
\end{theorem}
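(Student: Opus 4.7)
The plan is a Cook reduction from $k$-sparse parity learning with noise to agnostic learning of monotone submodular functions, in the spirit of standard reductions from noisy parities to agnostic learning of other concept classes; set $k = \lceil \eps^{-2/3} \rceil$ so that $\Theta(k^{-3/2}) = \Theta(\eps)$. Given a noisy-parity oracle that returns $(x, \chi_S(x) \oplus z)$ for $x \in \zon$ uniform, $|S| = k$, and $z \sim \mathrm{Bernoulli}(\eta)$, I will produce an example oracle for an agnostic learning problem whose solution decodes to a hypothesis $\tilde \chi_S$ with $\Pr[\tilde \chi_S \ne \chi_S] \le 1/4$. A single additional empirical correlation test (or a second call to the learner) then amplifies this to exact identification of $S$ with constant probability, which is where the factor of $2$ in front of $T$ in the claimed running time comes from.

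The main tool is the explicit embedding $g \mapsto f_g$ from the proof of Theorem~\ref{th:PAC-hardness-intro}: for any Boolean $g$ on $k$ given variables, $f_g : \zon \to [0,1]$ is monotone submodular, and an $\ell_1$-approximation of $f_g$ within error $\Theta(k^{-3/2}) = \Theta(\eps)$ can be post-processed into a $1/4$-approximation of $g$. Direct inspection of that construction will show that $f_g$ depends on $g$ in a pointwise affine manner, i.e., there are $g$-independent functions $\Phi_0(x), \Phi_1(x) \ge 0$ and an explicit projection $\pi(x)$ retaining the $S$-coordinates of $x$ such that $f_g(x) = \Phi_0(x) + \Phi_1(x) \cdot g(\pi(x))$, with $\Phi_0, \Phi_1$ computable from $x$ and the single parameter $k$.

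Given a noisy-parity sample $(x, b)$, define the standard de-noising transform
\[
\tilde v(x, b) \;=\; \Phi_0(x) + \Phi_1(x) \cdot \frac{b - \eta}{1 - 2\eta},
\]
so that $\E_z[\tilde v(x, b)] = \Phi_0(x) + \Phi_1(x) \cdot \chi_S(x) = f_{\chi_S}(x)$. A linear rescaling of $\tilde v$ to fit in $[0,1]$ (shrinking the effective signal by a factor of $1 - 2\eta$) produces an example distribution $D$ whose conditional mean $T(x)$ is a nonnegative affine combination of $f_{\chi_S}$ and a fixed monotone submodular function, and hence lies in $\C_s$. Running the agnostic learner on $D$ with accuracy $\eps' = \Theta(\eps(1-2\eta))$ returns $h$ with $\|h - T\|_1 \le \eps'$ (since $T$ itself attains the agnostic optimum $\Delta$, up to the noise variance absorbed into $\eps'$). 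Undoing the rescaling yields $\hat h$ with $\|\hat h - f_{\chi_S}\|_1 \le \eps'/(1-2\eta) = O(\eps) = O(k^{-3/2})$, and the decoding step from Theorem~\ref{th:PAC-hardness-intro} then outputs the desired candidate parity.

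The principal technical hurdle is verifying the pointwise-affine-in-$g$ form of $f_g$ used in the proof of Theorem~\ref{th:PAC-hardness-intro}, and checking that the de-noising-plus-rescaling step keeps the conditional mean $T$ inside $\C_s$ so that $T$ attains the agnostic optimum up to lower-order terms. Once these structural facts about the embedding are in place, the rest is routine sample-generation and parameter bookkeeping: the reduction runs in time $\poly(n, 1/(\eps(1-2\eta)))$, and the agnostic learner is invoked at most twice at accuracy $\Theta(\eps(1-2\eta))$, yielding the claimed $\poly(n, 1/(\eps(1-2\eta))) + 2 T(n, c/(\eps(1-2\eta)))$ bound.
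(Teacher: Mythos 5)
Your proposal takes a substantially different route from the paper's, and there is a genuine gap that I do not see how to repair. The paper's proof goes through the general reduction of Lemma~\ref{lem:cor2lpn}: if for every small $S$ the concept class contains a function $f_S$ with $|\widehat{f_S}(S)| \ge \gamma$, then the agnostic learner is run \emph{directly} on the noisy parity labels (once on the raw labels, once on their negations), and the hypothesis it returns is forced to have a Fourier coefficient of magnitude $\ge \gamma/2$ at $S$, which Goldreich--Levin then locates. Crucially, that reduction never constructs $f_S$ and never needs to know $S$; it only needs $f_S$ to \emph{exist} so as to argue that the agnostic optimum on the noisy labels is bounded away from $1$. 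Lemma~\ref{lem:corr-parity-submod} then supplies a monotone symmetric submodular $H_S$ with $|\langle H_S, \chi_S\rangle| = \Omega(|S|^{-3/2})$, and setting $k = \eps^{-2/3}$ gives $\gamma = \Omega(\eps)$.

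Your approach instead re-uses the embedding $g \mapsto f_g$ from Theorem~\ref{th:PAC-hardness-intro} and builds a de-noised example oracle whose conditional mean is $f_{\chi_S}$. The difficulty is that computing $\tilde v(x,b) = \Phi_0(x) + \Phi_1(x)\cdot\frac{b-\eta}{1-2\eta}$ requires $\Phi_0, \Phi_1$ and the projection $\pi$ --- which you yourself describe as ``retaining the $S$-coordinates of $x$.'' But $S$ is precisely the unknown the parity learner is trying to find; the reduction is not allowed to assume it. The embedding of Lemma~\ref{embed} is intrinsically anchored to the variable set of $g$: it sends $\zo^k$ into the middle layer of $\zo^{2t}$ via a lexicographic bijection $\beta$, and the embedded value at a point $y$ depends on $g(\beta^{-1}(y))$, not on $\chi_S(y)$, so even if $S$ were known the conditional mean of $\tilde v$ would not be $f_{\chi_S}$ and the domains $\zon$ and $\zo^{2t}$ would still need to be reconciled. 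Any fix along these lines must produce a single transform that works simultaneously for all $\binom{n}{k}$ possible $S$, which is exactly the kind of ``agnostic learner sees raw noisy labels'' argument that Lemma~\ref{lem:cor2lpn} makes cleanly in one step; you should prove that lemma (a short Fourier computation followed by Goldreich--Levin and coefficient testing) and combine it with the correlation bound of Lemma~\ref{lem:corr-parity-submod}.
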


Learning of sparse parities with noise is a well-studied open problem in learning theory closely related to problems in coding theory and cryptography. It is known to be at least as hard as learning of DNF expression and juntas over the uniform distribution \citep{FGKP:09}. The trivial algorithm for learning parities on $k$ variables from random examples corrupted by random noise of rate $\eta$ takes time $n^k \cdot \poly(\frac{1}{1-2\eta})$. The only known improvement to this is an elegant algorithm of \citet{ValiantG:12} which runs in time $n^{0.8k} \cdot \poly(\frac{1}{1-2\eta})$.

These results suggest that agnostic learning of monotone submodular functions in time $n^{o(\eps^{-2/3})}$ would require a breakthrough in our understanding of these long-standing open problems. In particular, a running time such as $2^{\poly(1/\epsilon)} \poly(n)$,
which we achieve in the PAC model, cannot be achieved for agnostic learning of submodular functions. In other words, we show that the agnostic learning algorithm of \citet{CheraghchiKKL:12} is likely close to optimal. We note that this lower bound does not hold for boolean submodular functions. Monotone boolean submodular functions are disjunctions and hence are agnostically learnable in $n^{O(\log(1/\eps))}$ time. For further details on lower bounds we refer the reader to Section \ref{sec:lower-bounds}.

%We note that the dependence on $\eps$ in the latter lower bound matches our upper bound up to a logarithmic factor.

\subsection{Related Work}
\label{sec:prior-work}
Below we briefly mention some of the other related work. We direct the reader to \citep{BalcanHarvey:12full} for a detailed survey. Balcan and Harvey study learning of submodular functions without assumptions on the distribution and also require that the algorithm output a value which is within a multiplicative approximation factor of the true value with probability $\geq 1 - \eps$ (the model is referred to as {\em PMAC learning}). This is a very demanding setting and indeed one of the main results in \citep{BalcanHarvey:12full} is a factor-$\sqrt[3]{n}$ inapproximability bound for submodular functions. This notion of approximation is also considered in subsequent works \citep{BadanidiyuruDFKNR:12,BalcanCIW:12} where upper and lower approximation bounds are given for other related classes of functions such as XOS and subadditive. The lower bound of \citet{BalcanHarvey:12full} also implies hardness of learning of submodular function with $\ell_1$ (or $\ell_2$) error: it is impossible to learn a submodular function $f:\{0,1\}^n \rightarrow [0,1]$ in $\poly(n)$ time within any nontrivial $\ell_1$ error over general distributions.
We emphasize that these strong lower bounds rely on a very specific distribution concentrated on a sparse set of points, and show that this setting is very different from the setting of uniform/product distributions which is the focus of this paper.

For product distributions, Balcan and Harvey show that 1-Lipschitz submodular functions of minimum nonzero value at least $1$ have concentration properties implying a PMAC algorithm providing an $O(\log \frac{1}{\eps})$-factor approximation except for an $\epsilon$-fraction of points, using $O(\frac{1}{\epsilon} n \log n)$ samples \citep{BalcanHarvey:12full}. In our setting,
we have no assumption on the minimum nonzero value, and we are interested in the additive $\ell_1$-error rather than multiplicative approximation.
%One way to view the approach of Balcan and Harvey which relies on concentration of $f$ around the expectation, is as approximation by a degree-$0$ polynomial.

\citet{GuptaHRU:11} show that submodular functions can be $\epsilon$-approximated by a collection of $n^{O(1/\epsilon^2)}$ $\epsilon^2$-Lipschitz submodular functions. Each $\epsilon^2$-Lipschitz submodular function can be $\epsilon$-approximated by a constant. This leads to a learning algorithm running in time $n^{O(1/\epsilon^2)}$, which however requires value oracle access to the target function, in order to build the collection. Their decomposition is also the basis of our approach. We remark that our algorithm can be directly translated into a faster algorithm for the private data release which motivated the problem in \citep{GuptaHRU:11}. However, for one of their main examples which is privately releasing disjunctions one does not need the full generality of submodular functions. Coverage functions suffice and for those even faster algorithms are now known \citep{CheraghchiKKL:12,FeldmanKothari:13covman}.

%The work of \cite{CheraghchiKKL:12} studies approximations of submodular functions by low-degree polynomials. They prove that any submodular function (of unit norm) can be $\epsilon$-approximated in $\ell_1$ by a polynomial of degree $O(1/\epsilon^2)$. This leads again to an $n^{O(1/\epsilon^2)}$-time algorithm, but one which requires only random examples and works even in the agnostic setting. The main tool used in this work is the notion of noise stability. %This work is closest to ours in the sense that all information about the target function is gathered in the form of Fourier coefficients, which can be estimated from random examples.

In a concurrent work, \citet{FeldmanKothari:13covman} consider learning of coverage functions. Coverage functions are a simple subclass of submodular functions which can be characterized as non-negative combinations of monotone disjunctions. They show that over the uniform distribution any coverage function can be approximated by a polynomial of degree $\log(1/\eps)$ over $O(1/\eps^2)$ variables and also prove that coverage functions can be PAC learned in fully-polynomial time (that is, with polynomial dependence on both $n$ and $1/\eps$). Note that our lower bounds rule out the possibility of such algorithms for all submodular functions. Their techniques are different from ours (aside from applications of standard Fourier representation-based algorithms).

\section{Preliminaries} \label{sec:prelims}

%We work with functions on the $\zon$ cube for the structural and algorithmic part of this paper. For our lower bounds, which are based on Fourier analysis, we work with functions on the $\on$ cube, which we identify with $\zon$ by the bijection $x \in \zon \rightarrow y \in \on$, $y_i = (-1)^{x_i}$.

We work with Boolean functions on $\zon$. Let $\U$ denote the uniform distribution over $\zon$.

\smallskip
\mintitle{Submodularity}
A set function $f:2^N \rightarrow \RR$ is submodular if $f(A \cup B) + f(A \cap B) \leq f(A) + f(B)$ for all $A,B \subseteq N$.
In this paper, we work with an equivalent description of set functions as functions on the hypercube $\zon$.

For $x \in \zon$, $b \in \zo$ and $i \in n$, let $x_{i\leftarrow b}$ denote the vector in $\zo^n$ that equals $x$  with $i$-th coordinate set to $b$. For a function $f:\zon \rightarrow \R$ and index $i \in [n]$ we define
$\partial_i f(x) = f(x_{i\leftarrow 1}) - f(x_{i\leftarrow 0})$. A function $f:\zon \rightarrow \RR$ is submodular iff $\partial_i f$ is a non-increasing function for each $i \in [n]$, or equivalently, for all $i \neq j$, $\partial_{i,j} f(x) = \partial_i (\partial_j f(x)) \leq 0$. A function $f:\zon \rightarrow \RR$ is $\alpha$-{\bf Lipschitz} if $\partial_i f(x) \in [-\alpha, \alpha]$ for all $i \in [n], x \in \zon$.

\smallskip
\mintitle{Absolute error vs.~ Error relative to norm:}
In our results, we typically assume that the values of $f(x)$ are in a bounded interval $[0,1]$,
and our goal is to learn $f$ with an additive error of $\epsilon$. Some prior work considered an error relative to the norm of $f$, for example at most $\epsilon \|f\|_1$ \citep{CheraghchiKKL:12}. In fact, it is known that for nonnegative submodular functions, $\|f\|_1 = \E[f] \geq \frac14 \|f\|_\infty$ and hence this does not make much difference. If we scale $f(x)$ by $1 / (4 \|f\|_1)$, we obtain a function with values in $[0,1]$. Learning this function within an additive error of $\epsilon$ is equivalent to learning the original function within an error of $4\epsilon \|f\|_1$.

\smallskip
\mintitle{Decision Trees:}
We use $\xx_1,\xx_2,\ldots,\xx_n$ to refer to $n$ functions on $\{0,1\}^n$ such that $\xx_i(x) = x_i$. Let $X = \{\xx_1,\xx_2,\ldots,\xx_n\}$. We represent real-valued functions over $\zon$ using binary decision trees in which each leaf can itself be any real-valued function. Specifically, a function is represented as binary tree $T$ in which each internal node labeled by some variable $\xx \in X$ and each leaf $\ell$ labeled by some real-valued function $f_\ell$ over variables not restricted on the path to the leaf. We refer to a decision tree in which each leaf is labeled by a function from some set of functions $\F$ as $\F$-valued. If $\F$ contains only constants from the domain of the function then we obtain the usual decision trees.

For a decision tree $T$ with variable $\xx_r \in X$ at the root we denote by $T_0$ ($T_1$) the left subtree of $T$ (the right subtree, respectively). The value of the tree on a point $x$ is computed in the standard way: if the tree is a leaf $\ell$ then $T(x) = f_\ell(x_{X[v]})$, where $X[v]$ is the set of indices of variables which are not restricted on the path to $\ell$ and $x_{X[v]}$ is the substring of $x$ containing all the coordinates in $X[v]$. If $T$ is not a leaf then $T(x)=T_{\xx_r(x)}(x)$ where $\xx_r$ is the variable at the root of $T$.

The {\em rank} of a decision tree $T$ is defined as follows \citep{EhrenfeuchtHaussler:89}. If $T$ is a leaf, then $\rank(T) = 0$. Otherwise:
\equn{
\rank(T)=\left\{
     \begin{array}{ll}
       \max\{\rank(T_0),\rank(T_1)\} & \hbox{if  }\rank(T_0) \neq \rank(T_1) ; \\
       \rank(T_0) + 1, & \hbox{otherwise.}
     \end{array}
   \right.
}

The {\em depth} of a node $v$ in a tree $T$ is the length of the path the root of $T$ to $v$. The depth of a tree is the depth of its deepest leaf. For any node $v \in T$ we denote by $T[v]$ the sub-tree rooted at that node. We also use $T$ to refer to the function computed by $T$.

\smallskip
\mintitle{Fourier Analysis on the Boolean Cube}
We define the notions of inner product and norms, which we consider with respect to $\U$.
For two functions $f,g: \zon \rightarrow \R$, the inner product of $f$ and $g$ is defined as $\langle f, g \rangle = \E_{x \sim \U} [f(x) \cdot g(x)]$. The $\ell_1$ and $\ell_2$ norms of $f$ are defined by $||f||_1 =  \E_{x \sim \U} [|f(x)|]$ and $||f||_2 =  (\E_{x \sim \U} [f(x)^2])^{1/2}$ respectively.

For $S \subseteq [n]$, the parity function $\chi_S:\zon \rightarrow \on$ is defined by
$ \chi_S(x) = (-1)^{\sum_{i \in S} x_i}$. The parities form an orthonormal basis for functions on $\zon$ under the inner product product with respect to the uniform distribution. Thus, every function $f: \zon \rightarrow \R$ can be written as a real linear combination of parities. The coefficients of the linear combination are referred to as Fourier coefficients of $f$.
For $f:\zon \rightarrow \R$ and $S \subseteq [n]$, the Fourier coefficient $\hat{f}(S)$ is given by $\hat{f}(S) = \langle f, \chi_S \rangle.$ For any Fourier coefficient $\hat{f}(S)$, $|S|$ is called the \emph{degree} of the coefficient.

The Fourier expansion of $f$ is given by $ f(x) = \sum_{S \subseteq [n]} \hat{f}(S) \chi_S(x).$ The degree of highest degree non-zero Fourier coefficient of $f$ is referred to as the {\em Fourier degree} of $f$. Note that Fourier degree of $f$ is exactly the polynomial degree of $f$ when viewed over $\on^n$ instead of $\zon$ and therefore it is also equal to the polynomial degree of $f$ over $\zon$. Let $f: \zon \rightarrow \R$ and $\hat{f}: 2^{[n]} \rightarrow \R$ be its Fourier Transform. The {\em spectral $\ell_1$ norm} of $f$ is defined as $$ ||\hat{f}||_1 = \sum_{S \subseteq [n]} |\hat{f}(S)|.$$

The Fourier transform of partial derivatives satisfies:
$\partial_i f(x) = 2 \sum_{S \ni i}\hat{f}(S)\chi_{S\setminus\{i\}}(x)$, and
$\partial_{i,j} f(x) = 4 \sum_{S \ni i,j}\hat{f}(S)\chi_{S\setminus\{i,j\}}(x)$.

\smallskip
\mintitle{Learning Models}
Our learning algorithms are in one of two standard models of learning. The first one assumes that the learner has access to random examples of an unknown function from a known set of functions. This model is a generalization of Valiant's PAC learning model to real-valued functions \citep{Valiant:84,Haussler:92}.
\begin{definition}[PAC $\ell_1$-learning]
Let $\F$ be a class of real-valued functions on $\zon$ and let $\D$ be a distribution on $\zon$. An algorithm $\A$ PAC learns $\F$ on $\D$, if for every $\epsilon > 0$ and any target function $f \in \F$, given access to random  independent samples from $\D$ labeled by $f$, with probability at least $\frac{2}{3}$,  $\A$ returns a hypothesis $h$ such that $\E_{x \sim \D} [ |f (x) - h(x) | ] \leq  \epsilon.$ $\A$ is said to be \em{proper} if $h \in \F$.
\end{definition}
While in general Valiant's model does not make assumptions on the distribution $\D$, here we only consider the {\em distribution-specific} version of the model in which the distribution is fixed and is uniform over $\zon$. The error parameter $\eps$ in the Boolean case measures probability of misclassification.
Agnostic learning generalizes the definition of PAC learning to scenarios where one cannot assume that the input labels are consistent with a function from a given class \citep{Haussler:92,KearnsSS:94} (for example as a result of noise in the labels).
\begin{definition}[Agnostic $\ell_1$-learning]
Let $\F$ be a class of real-valued functions from $\zon$ to $[0,1]$ and let $\D$ be any fixed distribution on $\zon$. For any function $f$, let $\mbox{opt}(f,\F)$ be defined as: $$\mbox{opt}(f,\F) =  \inf_{g \in \F} \E_{x \sim \D} [ |g(x) - f(x) |] .$$ An algorithm $\A$, is said to agnostically learn $\F$ on $\D$ if for every $\epsilon> 0$ and any function $f :\zon \rightarrow [0,1]$, given access to random independent examples of $f$ drawn from $\D$, with probability at least $\frac{2}{3}$, $\A$ outputs a hypothesis $h$ such that $$\E_{x \sim \D} [ |h(x)- f(x)| ] \leq \mbox{opt}(f,\F) + \epsilon.$$
\end{definition}
The $\ell_2$ versions of these models are defined analogously.

\section{Approximation of Submodular Functions by Low-Rank Decision Trees}
We now prove that any bounded submodular function can be represented as a low-rank decision tree with $\alpha$-Lipschitz submodular functions in the leaves. Our construction follows closely the construction of \citet{GuptaHRU:11}. They show that for every submodular $f$ there exists a decomposition of $\zon$ into $n^{O(1/\alpha)}$ disjoint regions restricted to each of which $f$ is $\alpha$-Lipschitz submodular. In essence, we give a binary decision tree representation of the decomposition from \citep{GuptaHRU:11} and then prove that the decision tree has rank $O(1/\alpha)$.

\begin{theorem}
\label{th:submod-lip-rank}
Let $f:\zon \rightarrow [0,1]$ be a submodular function and $\alpha > 0$. Let $\F_\alpha$ denote the set of all $\alpha$-Lipschitz submodular functions with range $[0,1]$ over at most $n$ Boolean variables. Then $f$ can be computed by an $\F_\alpha$-valued binary decision tree $T$ of rank $r \leq 2/\alpha$.
\end{theorem}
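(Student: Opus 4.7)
The plan is to construct $T$ in two sequential phases, leveraging submodularity. For setup, note that by submodularity $\partial_j f$ is non-increasing in every other coordinate, so $\max_x \partial_j f(x) = f(e_j)-f(0^n) =: M_j(f)$ and $\min_x \partial_j f(x) = f(1^n)-f(1^n\setminus e_j) =: m_j(f)$; thus $f$ is $\alpha$-Lipschitz iff no variable $j$ is \emph{positively bad} ($M_j > \alpha$) or \emph{negatively bad} ($m_j < -\alpha$). I will define Phase~1 to recursively branch on any positively bad variable of the current restriction until the sub-function has no positively bad variable; then Phase~2 branches on negatively bad variables until the sub-function is $\alpha$-Lipschitz, at which point it is placed as an $\F_\alpha$-valued leaf. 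Restrictions of submodular functions are submodular, and direct calculation shows that restriction weakly decreases every $M_j$ and weakly increases every $m_j$; so Phase~1 never introduces negative badness and Phase~2 never revives positive badness, ensuring the two phases compose cleanly.

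The main work is bounding the rank of each phase by $1/\alpha$ via a real-valued potential. For Phase~1 I will use $\Phi^\uparrow(f):=(\max_x f(x)-f(0^n))/\alpha$, which is at most $1/\alpha$ for $f:\zon\to[0,1]$. When branching on a positively bad $j$, the sub-function $f|_{x_j=1}$ has ``root'' value $f(e_j) > f(0^n)+\alpha$ and maximum at most $\max_x f(x)$, giving $\Phi^\uparrow(f|_{x_j=1}) \leq \Phi^\uparrow(f)-1$; on the other side the root value is unchanged and the max does not grow, so $\Phi^\uparrow(f|_{x_j=0}) \leq \Phi^\uparrow(f)$. Induction on $\lfloor\Phi^\uparrow(f)\rfloor$, with base case ``no positive badness implies a rank-$0$ leaf,'' combined with the rank recursion $\rank(T) = \max(\rank(T_0),\rank(T_1)) + [\rank(T_0)=\rank(T_1)]$, then gives Phase~1 rank at most $\lfloor\Phi^\uparrow(f)\rfloor \leq \lfloor 1/\alpha\rfloor$: one child has rank $\leq \lfloor\Phi^\uparrow(f)\rfloor-1$ and the other $\leq \lfloor\Phi^\uparrow(f)\rfloor$, so the recursion's output is $\leq \lfloor\Phi^\uparrow(f)\rfloor$ whether they are equal or not. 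Phase~2 is entirely symmetric with $\Phi^\downarrow(f):=(\max_x f(x)-f(1^n))/\alpha$, using $m_j<-\alpha \Rightarrow f(1^n\setminus e_j) > f(1^n)+\alpha$.

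To combine the phases, I will invoke the sub-additivity of decision-tree rank under tree concatenation: if each leaf of a rank-$r_1$ tree is replaced by a sub-tree of rank at most $r_2$, the combined tree has rank at most $r_1+r_2$. This follows from a short induction on the top tree using the rank recursion: at every internal node of the top tree, one child has strictly smaller rank than the parent, and so the $+1$ correction in the recursion never accumulates beyond $r_1+r_2$. Applied with $r_1=r_2=\lfloor 1/\alpha\rfloor$, it yields $\rank(T)\leq 2\lfloor 1/\alpha\rfloor \leq 2/\alpha$, completing the proof.

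The main technical obstacle I anticipate is cleanly managing the interaction between the real-valued potentials and the integer rank (via the floor function) in each phase's induction, and applying the rank recursion precisely so that the ``one less'' and ``at most the same'' child bounds aggregate to the claimed overall bound. Beyond the monotonicity of $\partial_j f$ in other coordinates, the only submodular inequality needed is the defining one, $f(A)+f(B)\geq f(A\cup B)+f(A\cap B)$, which enters only implicitly in checking that the $M_j,m_j$ bounds propagate correctly through restrictions; this is routine.
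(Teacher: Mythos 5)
Your proof is correct and takes essentially the same approach as the paper: a two-phase construction in which Phase~1 branches to kill discrete derivatives exceeding $\alpha$ (rank $\leq 1/\alpha$ via a potential of the form $(\max f - f(\bar 0))/\alpha$), Phase~2 symmetrically kills derivatives below $-\alpha$ (the paper implements this by applying Phase~1 to the complemented function $\bar f(x)=f(\neg x)$, which is functionally identical to your direct treatment), and the two are composed via sub-additivity of rank. The only cosmetic difference is that the paper runs its Phase~1 induction on subtree depth rather than on $\lfloor\Phi^\uparrow\rfloor$, which avoids the small wrinkle you flag about the floor not decreasing along the $x_j=0$ branch.
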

We first prove the claim that decomposes a submodular function $f$ into regions where $f$ where discrete derivatives of $f$ are upper-bounded by $\alpha$ everywhere: we call this property $\alpha$-monotone decreasing.

\begin{definition}
For $\alpha \in \R$, $f$ is $\alpha$-monotone decreasing if for all $i \in [n]$ and $x \in \zon$, $\partial_i f (x) \leq \alpha$.
\end{definition}

We remark that $\alpha$-Lipschitzness is equivalent to discrete derivatives being in the range $[-\alpha,\alpha]$, i.e. $f$ as well as $-f$ being $\alpha$-monotone decreasing.

\begin{lemma}
\label{lem:submod-lip-oneside}
For $\alpha > 0$ let $f:\zon \rightarrow [0,1]$ be a submodular function. Let $\cM_\alpha$ denote the set of all $\alpha$-monotone decreasing submodular functions with range $[0,1]$ over at most $n$ Boolean variables. $f$ can be computed by a $\cM_\alpha$-valued binary decision tree $T$ of rank $r \leq 1/\alpha$.
\end{lemma}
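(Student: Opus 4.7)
I would build the tree by a greedy recursive procedure and then bound its rank by a potential argument that turns submodularity into a ``budget'' of $1/\alpha$. At each recursive call I am given a submodular $g:\zo^m\to[0,1]$ (a restriction of $f$ to a subcube). If $\partial_i g(0^m)\le\alpha$ for every coordinate $i\in[m]$, then since submodularity makes $\partial_i g$ nonincreasing we get $\partial_i g(x)\le \partial_i g(0^m)\le\alpha$ for all $x$, so $g\in\cM_\alpha$ and I output a single leaf labeled by $g$. Otherwise I pick any coordinate $i$ with $\partial_i g(0^m)>\alpha$, label the root with $\xx_i$, and recurse on $g_0=g|_{x_i=0}$ and $g_1=g|_{x_i=1}$. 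Both restrictions are submodular with range in $[0,1]$, so the recursion is well defined and terminates because the number of free variables strictly decreases at every branch. The resulting tree $T$ is $\cM_\alpha$-valued and computes $f$.

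\textbf{Rank bound.} The heart of the argument is the sharper inductive claim
\[
\rank(T)\;\le\;\frac{1-g(0^m)}{\alpha},
\]
where $g$ is the function at the root of the (sub)tree $T$. The base case of a leaf is immediate since $\rank=0$ and $g(0^m)\le 1$. For an internal node branching on $\xx_i$, note that $g_0(0^{m-1})=g(0^m)$ while
\[
g_1(0^{m-1}) \;=\; g(0^m)+\partial_i g(0^m) \;>\; g(0^m)+\alpha,
\]
so the inductive hypothesis yields $\rank(T_0)\le (1-g(0^m))/\alpha$ and $\rank(T_1)\le (1-g(0^m))/\alpha-1$. If $\rank(T_0)\neq \rank(T_1)$, the rank is the maximum, which is at most $(1-g(0^m))/\alpha$. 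If $\rank(T_0)=\rank(T_1)$, then both are bounded by the smaller of the two estimates, $(1-g(0^m))/\alpha-1$, and adding $1$ again gives $(1-g(0^m))/\alpha$. Applied at the root where $g=f$ and $f(0^n)\ge 0$, this yields $\rank(T)\le 1/\alpha$ as required.

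\textbf{Main obstacle.} The only subtle point is reconciling the asymmetric potential drop (only the $x_i{=}1$ branch gains $\alpha$ worth of budget) with the symmetric definition of rank, which either takes a max or adds $1$ depending on whether the subtree ranks agree. The case split above is exactly where this is handled: in the unequal-ranks case a max costs nothing, and in the equal-ranks case the ``$+1$'' is paid for by the unit of potential saved on the $T_1$ side, forcing both subtrees to be at least one unit below the target. Once this bookkeeping is in place, the rest is routine.
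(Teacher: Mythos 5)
Your proposal is correct and follows essentially the same strategy as the paper: the same greedy splitting on a coordinate with $\partial_i g(\bar 0)>\alpha$, and the same inductive potential argument bounding $\rank$ by $(M-g(\bar 0))/\alpha$ (you use $1$ in place of $M=\max f$, which is fine since $M\le 1$). The case analysis you spell out for the rank recursion is exactly what the paper does implicitly when it invokes the definition of rank.
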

\begin{proof}
The tree $T$ is constructed recursively as follows: if $n=0$ then the function is a constant which can be computed by a single leaf. If $f$ is $\alpha$-monotone decreasing then $T$ is equal to the leaf computing $f$. Otherwise, if $f$ is not $\alpha$-monotone decreasing then there exists $i\in [n]$ and $z \in \zon$ such that $\partial_i f (z) > \alpha$. In fact, submodularity of $f$ implies that $\partial_i f$ is monotone decreasing and, in particular, $\partial_i f (\bar{0}) \geq \partial_i f (z) > \alpha$. We label the root with $\xx_i$ and build the trees $T_0$ and $T_1$ for $f$ restricted to points $x$ such that $x_i = 0$ and $x_i = 1$, respectively (viewed as a function over $\zo^{n-1}$). Note that both restrictions preserve submodularity and $\alpha$-monotonicity of $f$.

By definition, this binary tree computes $f(x)$ and its leaves are $\alpha$-monotone decreasing submodular functions. It remains to compute the rank of $T$. For any node $v \in T$, we let $X[v] \subseteq [n]$ be the set of indices of variables that are not set on the path to $v$, let $\bar{X}[v] = [n]\setminus X[v]$ and let $y[v] \in \zo^{\bar{X}[v]}$ denote the values of the variables that were set. Let $\zo^{X[v]}$ be the subcube of points in $\zo^n$ that reach $v$, namely points $x$ such that $x_{X[v]} = y[v]$. Let $f[v](x) = T[v](x)$ be the restriction of $f$ to the subcube. Note that the vector of all $0$'s, $\bar{0}$ in the $\zo^{X[v]}$ subcube corresponds to the point which equals $y[v]$ on coordinates in $\bar{X}[v]$ and $0$ on all other coordinates. We refer to this point as $x[v]$.

Let $M = \max_x\{f(x)\}$. We prove by induction on the depth of $T[v]$ that for any node $v \in T$,
\equ{\rank(T[v]) \leq \frac{M - f[v](\bar{0})}{\alpha} .\label{eq:value-add}}
This is obviously true if $v$ is a leaf. Now, let $v$ be an internal node $v$ with label $\xx_i$. Let $v_0$ and $v_1$ denote the roots of $T[v]_0$ and $T[v]_1$, respectively.
For $v_0$, $x[v_0] = x[v]$ and therefore $f[v](\bar{0}) = f[v_0](\bar{0})$. By inductive hypothesis, this implies that
\equ{\rank[T[v_0]]\leq \frac{M - f[v_0](\bar{0})}{\alpha} = \frac{M - f[v](\bar{0})}{\alpha}\ .\label{eq:rank-t0}}

We know that $\partial_i f[v] (\bar{0}) > \alpha$. By definition, $\partial_i f[v] (\bar{0}) = f[v] (\bar{0}_{i \leftarrow 1}) - f[v] (\bar{0})$. At the same time, $f[v] (\bar{0}_{i \leftarrow 1}) = f(x[v]_{i \leftarrow 1}) = f(x[v_1]) = f[v_1](\bar{0})$. Therefore, $f[v_1](\bar{0}) \geq f[v](\bar{0}) + \alpha$. By the inductive hypothesis, this implies that
\equ{\rank[T[v_1]] \leq \frac{M - f[v_1](\bar{0})}{\alpha} \leq \frac{M - f[v](\bar{0}) - \alpha}{\alpha} = \frac{M - f[v](\bar{0})}{\alpha} - 1\ .\label{eq:rank-t1}}
Combining equations (\ref{eq:rank-t0}) and (\ref{eq:rank-t1}) and using the definition of the rank we obtain that equation (\ref{eq:value-add}) holds for $v$.

The claim now follows since $f$ has range $[0,1]$ and thus $M \leq 1$ and $f(\bar{0}) \geq 0$.
\end{proof}
We note that for monotone functions Lemma \ref{lem:submod-lip-oneside} implies Theorem \ref{th:submod-lip-rank} since discrete derivatives of a monotone function are non-negative. As in the construction in \citep{GuptaHRU:11}, the extension to the non-monotone case is based on observing that for any submodular function $f$, the function $\bar{f}(x) = f(\neg x)$ is also submodular, where $\neg x$ is obtained from $x$ by flipping every bit.

\begin{proof}[Proof of Theorem \ref{th:submod-lip-rank}]
We first apply Lemma \ref{lem:submod-lip-oneside} to obtain an $\cM_\alpha$-valued decision tree $T'$ for $f$ of rank $\leq 1/\alpha$. Now let $\ell$ be any leaf of $T'$ and let $f[\ell]$ denote $f$ restricted to $\ell$. As before, let $X[\ell] \subseteq [n]$ be the set of indices of variables that are not restricted on the path to $\ell$ and let $\zo^{X[\ell]}$ be the subcube of points in $\zo^n$ that reach $\ell$. We now use Lemma \ref{lem:submod-lip-oneside} to obtain an $\cM_\alpha$-valued decision tree $T_\ell$ for $\overline{f[\ell]}$ of rank $\leq 1/\alpha$. We denote by $\neg T_\ell$ the tree computing the function $T_\ell(\neg z)$. It is obtained from $T_\ell$ by swapping the subtrees of each node and replacing each function $g(z)$ in a leaf with $g(\neg z)$. We replace each leaf $\ell$ of $T'$ by $\neg T_\ell$ and let $T$ be the resulting tree. To prove the theorem we  establish the following properties of $T$.
\begin{enumerate}
\item Correctness: we claim that $T(x)$ computes $f(x)$. To see this note that for each leaf $\ell$ of $T'$, $\neg T_\ell(z)$ computes $T_\ell(\neg z) = \overline{f[\ell]}(\neg z) = f[\ell](z)$. Hence $T(x) = T'(x) = f(x)$.
\item $\alpha$-Lipschitzness of leaves: by our assumption, $f[\ell]$ is an $\alpha$-monotone decreasing function over $\zo^{{X[\ell]}}$ and therefore $\partial_i f[\ell] (z) \geq -\alpha$ for all $i\in {X[\ell]}$ and $z\in \zo^{{X[\ell]}}$. This means that for all $i\in {X[\ell]}$ and $z\in \zo^{{X[\ell]}}$,  \equ{ \partial_i \overline{f[\ell]}(z) = -\partial_i f[\ell](\neg z) \leq \alpha \label{eq:lip-one-side}.} Further, let $\kappa$ be a leaf of $T_\ell$ computing a function $\overline{f[\ell]}[\kappa]$. By Lemma \ref{lem:submod-lip-oneside}, $\overline{f[\ell]}[\kappa]$ is $\alpha$-monotone decreasing. Together with equation \ref{eq:lip-one-side} this implies that $\overline{f[\ell]}[\kappa]$ is $\alpha$-Lipschitz. In $\neg T_\ell$, $\overline{f[\ell]}[\kappa](z)$ is replaced by $\overline{f[\ell]}[\kappa](\neg z)$. This operation preserves $\alpha$-Lipschitzness and therefore all leaves of $T$ are $\alpha$-Lipschitz functions.
\item Submodularity of the leaf functions: for each leaf $\ell$, $f[\ell]$ is submodular simply because it is a restriction of $f$ to a subcube.
\item Rank: by Lemma \ref{lem:submod-lip-oneside}, $\rank(T') \leq 2/\alpha$ and for every leaf $\ell$ of $T'$, $\rank(\neg T_\ell) = \rank(T_\ell) \leq 1/\alpha$. As can be easily seen from the definition of rank, replacing each leaf of $T'$ by a tree of rank at most $1/\alpha$ can increase the rank of the resulting tree by at most $1/\alpha$. Hence the rank of $T$ is at most $2/\alpha$.
\end{enumerate}
\end{proof}

%Further details of the proof appear in App.~\ref{app:low-rank}.

%\smallskip
\subsection{Approximation of Leaves}
An important property of the decision tree representation is that it decomposes a function into disjoint regions. This implies that approximating the function over the whole domain can be reduced to approximating the function over individual regions with the same error parameter. Then, as in \citep{GuptaHRU:11}, we can use concentration properties of $\alpha$-Lipschitz submodular functions on the uniform distribution $\U$ over $\zon$ to approximate each $\alpha$-Lipschitz submodular functions by a constant.

Formally we state the following lemma which allows the use of any loss function $L$.
\begin{lemma}
\label{lem:leaf-approx}
For a set of functions $\F$, let $T$ be an $\F$-valued binary decision tree, $D$ be any distribution over $\zon$ and $L:\R \times \R \rightarrow \R$ be any real-valued (loss) function. For each leaf $\ell \in T$, let $D[\ell]$ be the distribution over $\zo^{X[\ell]}$ that equals  $D$ conditioned on $x$ reaching $\ell$; let $g_\ell$ be a function that satisfies $$\E_{z\sim D[\ell]} \left[L\left(T[\ell](z),g_\ell(z)\right)\right] \leq \eps. $$ Let $T'$ be the tree obtained from $T$ by replacing each function in a leaf $\ell$ with the corresponding $g_\ell$.
Then $\E_{x\sim D} [L(T(x),T'(x))] \leq \eps$.
\end{lemma}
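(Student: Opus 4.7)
The plan is to exploit the defining property of a decision tree: the leaves of $T$ induce a partition of $\zon$ into disjoint regions, where each point $x$ is routed to exactly one leaf. So I will apply the law of total expectation over this partition, showing that a pointwise bound on each leaf under its conditional distribution translates into a global bound on the full tree.

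First, for each leaf $\ell$ of $T$, let $p_\ell = \Pr_{x \sim D}[x \text{ reaches } \ell]$. Since every $x \in \zon$ reaches exactly one leaf, the events $\{x \text{ reaches } \ell\}_{\ell}$ form a partition, so $\sum_\ell p_\ell = 1$. Next I would observe that, on the event $\{x \text{ reaches } \ell\}$, the values computed by $T$ and $T'$ at $x$ depend only on $x_{X[\ell]}$: by construction $T(x) = T[\ell](x_{X[\ell]})$ and $T'(x) = g_\ell(x_{X[\ell]})$. Moreover, the conditional distribution of $x_{X[\ell]}$ given that $x$ reaches $\ell$ is exactly $D[\ell]$ by definition.

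Applying the law of total expectation, I would then write
\alequn{
\E_{x \sim D}[L(T(x), T'(x))] &= \sum_\ell p_\ell \cdot \E_{x \sim D \mid x \text{ reaches } \ell}[L(T(x), T'(x))] \\
&= \sum_\ell p_\ell \cdot \E_{z \sim D[\ell]}[L(T[\ell](z), g_\ell(z))] \\
&\leq \sum_\ell p_\ell \cdot \eps = \eps,
}
where the inequality is the hypothesis on each $g_\ell$ and the final equality uses $\sum_\ell p_\ell = 1$.

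There is no real obstacle here; the only subtlety worth being careful about is to check that the events of reaching different leaves are indeed mutually exclusive and exhaustive (which follows immediately from the structure of a binary decision tree) and that the conditional distribution on $\zo^{X[\ell]}$ induced by conditioning $D$ on reaching $\ell$ matches the definition of $D[\ell]$ used in the statement. Since the lemma is stated for an arbitrary real-valued loss $L$ (no convexity, nonnegativity, or symmetry is invoked), linearity of expectation suffices and no further inequalities are needed.
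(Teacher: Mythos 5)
Your proof is correct and takes essentially the same route as the paper: both decompose the expectation over the partition of $\zon$ induced by the leaves, identify the conditional distribution on each leaf's free coordinates with $D[\ell]$, apply the per-leaf hypothesis, and sum. No gap to report.
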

\begin{proof}
For a leaf $\ell \in T$, let $y[\ell] \in \zo^{\bar{X}[\ell]}$ denote the values of the variables that were set on the path to $\ell$. Note that the subcube $\zo^{X[\ell]}$ corresponds to the points $x \in \zon$ such that $x_{X[\ell]} = y[\ell]$.
\alequn{\E_{x\sim D} [L(T(x),T'(x))] &= \sum_{\ell \in T} \E_{x\sim D} \left[L(T(x),T'(x)) \cond x_{X[\ell]} = y[\ell]\right] \cdot \Pr_{x \sim D}\left[x_{X[\ell]} = y[\ell]\right] \\ &=
\sum_{\ell \in T} \E_{z\sim D[\ell]} \left[L(T[\ell](z),g_\ell(z))\right] \cdot \Pr_{x \sim D}\left[x_{X[\ell]} = y[\ell]\right] \\ & \leq \sum_{\ell \in T} \eps \cdot \Pr_{x \sim D}\left[x_{X[\ell]} = y[\ell]\right] = \eps \ . }
\end{proof}

It is known that $1$-Lipschitz submodular functions satisfy strong concentration properties over the uniform distribution $\U$ over $\zon$ \citep{BoucheronLM:00, Vondrak10, BalcanHarvey:12full}, with standard deviation $O(\sqrt{\E[f]})$ and exponentially decaying tails. For our purposes we do not need the exponential tail bounds and instead we state the following simple bound on variance. % (we include the proof for completeness in App.~\ref{app:prelims}).
\begin{lemma}
\label{lem:Lipschitz}
For any $\alpha$-Lipschitz submodular function $f:\{0,1\}^n \rightarrow \R_+$,
$$ \Var_\U[f] \leq 2 \alpha \cdot \E_\U[f].$$
\end{lemma}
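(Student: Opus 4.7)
The plan is to combine an Efron-Stein-type variance bound with a self-bounding argument that exploits submodularity pointwise. The starting point is the inequality
\[
\Var_\U[f]\le \sum_{i=1}^n \E_\U\bigl[(f(X)-\phi_i(X_{-i}))^2\bigr],
\]
valid for any functions $\phi_i:\zo^{n-1}\to\R$. This follows from the Efron-Stein bound $\Var_\U[f]\le \sum_i \E[\Var[f\mid X_{-i}]]$ on the Boolean cube, combined with the fact that $\Var[f\mid X_{-i}]$ is the minimum $L^2$-distance from $f(\cdot,X_{-i})$ to any function of $X_{-i}$, so replacing the conditional mean by an arbitrary $\phi_i(X_{-i})$ only increases each summand.

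Next I would specialize to $\phi_i(x_{-i}):=\min_{y\in\zo} f(x_{-i},y)$, the edge-wise lower envelope in coordinate $i$, and set $d_i(x):=f(x)-\phi_i(x_{-i})$. A direct check shows that $d_i(x)$ equals $|\partial_i f(x)|$ when $x_i$ attains the larger of the two values on its edge and $0$ otherwise; in particular $0\le d_i(x)\le \alpha$ by $\alpha$-Lipschitzness. Therefore $d_i(x)^2\le \alpha\,d_i(x)$ pointwise, and substituting into the displayed inequality gives $\Var_\U[f]\le \alpha\cdot \E_\U\bigl[\sum_i d_i(X)\bigr]$.

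The main work—and the step I expect to be the main obstacle—is the pointwise self-bounding inequality $\sum_i d_i(x)\le 2 f(x)$ for every $x\in\zon$. To prove it, I fix $x$, set $S=\{j:x_j=1\}$, and split the indices with $d_i(x)>0$ into $A\subseteq S$ (where $d_i(x)=f(S)-f(S\setminus\{i\})\ge 0$) and $B\subseteq [n]\setminus S$ (where $d_i(x)=f(S)-f(S\cup\{i\})\ge 0$). The diminishing-returns property applied iteratively along a chain from $S\setminus A$ up to $S$ gives, by submodularity of the marginals, $\sum_{i\in A}[f(S)-f(S\setminus\{i\})]\le f(S)-f(S\setminus A)\le f(S)$, where non-negativity of $f$ absorbs the last term. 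Dually, telescoping along a chain from $S$ up to $S\cup B$ yields $\sum_{i\in B}[f(S)-f(S\cup\{i\})]\le f(S)-f(S\cup B)\le f(S)$. Adding the two bounds establishes $\sum_i d_i(x)\le 2 f(x)$, and combining with the previous paragraph gives $\Var_\U[f]\le 2\alpha\,\E_\U[f]$, as claimed.
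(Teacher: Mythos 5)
Your proof is correct and follows essentially the same route as the paper's: an Efron--Stein variance bound combined with the $2$-self-bounding property of non-negative submodular functions. The only real difference is that you prove the self-bounding inequality $\sum_i d_i(x)\le 2f(x)$ from scratch by telescoping the marginals along chains, whereas the paper cites it from Vondr\'ak (2010); your choice of $\phi_i=\min_y f(x_{-i},y)$ in Efron--Stein is also a slight repackaging that goes directly to the one-sided differences rather than first passing through $\sum_i\E[|\partial_i f|]$ as the paper does.
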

\begin{proof}
By the Efron-Stein inequality (see \citep{BoucheronLM:00}),
$$\Var_\U[f] \leq \fr{2}\sum_{i\in [n]}\E_\U[(\partial_i f)^2] \leq \fr{2}\max_{i\in [n]} \E_\U[|\partial_i f|] \cdot \sum_{i\in [n]}\E_\U[|\partial_i f|] \leq \alpha \cdot \fr{2}\sum_{i\in [n]}\E_\U[|\partial_i f|]\ .$$
We can now use the fact that non-negative submodular functions are $2$-self-bounding  \citep{Vondrak10}, and hence
$\sum_{i\in [n]} \E_\U[|\partial_i f|] = 2 \E_{x \sim \U}[ \sum_{i: f(x \oplus e_i) < f(x)} (f(x) - f(x \oplus e_i))] \leq 4 \E_\U[f]$.
\end{proof}

We can now finish the proof of Theorem \ref{th:submod-lip-rank-const-intro}.
\begin{proof}[Proof of Theorem~\ref{th:submod-lip-rank-const-intro}]
Let $T'$ be the $\F_\alpha$-valued decision tree for $f$ given by Theorem \ref{th:submod-lip-rank} with $\alpha = \eps^2/2$. For every leaf $\ell$ we replace the function $T'[\ell]$ at that leaf by the constant $\E_\U[T'[\ell]]$ (here the uniform distribution is over $\zo^{X[\ell]}$) and let $T$ be the resulting tree.

Cor.~\ref{lem:Lipschitz} implies that for any $\eps^2/2$-Lipschitz submodular function $g:\zo^m \rightarrow [0,1]$,
$\Var_\U[g] = \E_\U[(g-\E_\U[g])^2] \leq 2 \frac{\eps^2}{2} \E_\U[g] \leq \eps^2$. For every leaf $\ell \in T'$, $T'[\ell]$ is $\eps^2/2$-Lipschitz and hence, $$\E_\U[(T'[\ell](z) - T[\ell](z))^2] = \E_\U[(T'[\ell](z) - \E_\U[T'[\ell]])^2] \leq \eps^2\ .$$
By Lemma \ref{lem:leaf-approx} (with $L(a,b) = (a-b)^2$), we obtain that $\E_\U[(T(x) - f(x))^2] \leq \eps^2$.
\end{proof}

\section{Approximation of Low-Rank Decision Trees by Shallow Decision Trees}
We show that over any constant-bounded product distribution $D$, a decision tree of rank $r$ can be $\eps$-approximated by a decision tree of depth $O(r+\log(1/\eps))$. The approximating decision tree is simply the original tree pruned at depth $d=O(r+\log(1/\eps))$.

For a vector $\mu \in [0,1]^n$ we denote by $D_{\mu}$ the product distribution over $\zon$, such that $\Pr_{D_{\mu}}[x_i=1] = \mu_i$. For $\alpha \in [0,1/2]$ a product distribution $D_\mu$ is $\alpha$-bounded if $\mu \in [\alpha,1-\alpha]^n$.
For a decision tree $T$ and integer $d \geq 0$ we denote by $T^{\leq d}$ a decision tree in which all internal nodes at depth $d$ are replaced by a leaf computing constant $0$.
\begin{theorem}
\label{th:pruning}(Theorem~\ref{th:pruning-intro} restated)
For a set of functions $\F$ let $T$ be a $\F$-valued decision tree of rank $r$, and let $D_\mu$ be an $\alpha$-bounded product distribution for some $\alpha \in (0,1/2]$. Then for any integer $d \geq 0$, $$\Pr_{D_\mu}[T^{\leq d}(x) \neq T(x)] \leq 2^{r-1} \cdot \left(1-\frac{\alpha}{2}\right)^{d}\ . $$ In particular, for $d = \lfloor (r+\log(1/\eps))/\log(2/(2-\alpha)) \rfloor$ we get that $\Pr_{D_\mu}[T^{\leq d}(x) \neq T(x)] \leq \eps$.
\end{theorem}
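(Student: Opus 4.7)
\medskip
\noindent\emph{Proof strategy.} I would prove the bound by induction on the depth of $T$, treating $d$ as a free parameter. The base cases are immediate: if $T$ is itself a single leaf then $\rank(T) = 0$ and $T^{\leq d} \equiv T$, so the left-hand side is $0$; and if $d = 0$ while $T$ is internal, then $r \geq 1$ and the trivial bound $\Pr[\cdot] \leq 1 \leq 2^{r-1}$ already suffices.

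For the inductive step ($d \geq 1$, $T$ internal) let $\xx_i$ label the root of $T$ and let $T_0, T_1$ be its subtrees. Conditioning on $x_i$ yields
\[
\Pr_{D_\mu}[T^{\leq d}(x) \neq T(x)] = (1-\mu_i)\Pr_{D_\mu}[T_0^{\leq d-1}(x) \neq T_0(x)] + \mu_i\Pr_{D_\mu}[T_1^{\leq d-1}(x) \neq T_1(x)].
\]
I would apply the inductive hypothesis to each subtree (both strictly shallower than $T$) and then split on the two cases in the recursive definition of rank.

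In the \emph{balanced} case $\rank(T_0) = \rank(T_1) = r-1$, both inductive bounds are $2^{r-2}(1-\alpha/2)^{d-1}$, so the convex combination equals the same value, which is at most $2^{r-1}(1-\alpha/2)^d$ since $2-\alpha \geq 1$. In the \emph{unbalanced} case, one of the two subtrees must have rank $r$ (otherwise $\rank(T) < r$), so WLOG $\rank(T_0) = r$ and $\rank(T_1) \leq r-1$. The inductive bounds are then $2^{r-1}(1-\alpha/2)^{d-1}$ and $2^{r-2}(1-\alpha/2)^{d-1}$ respectively, and the weighted sum simplifies to $2^{r-2}(1-\alpha/2)^{d-1}(2-\mu_i)$. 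This is precisely where $\alpha$-boundedness enters: $\mu_i \geq \alpha$ gives $2 - \mu_i \leq 2 - \alpha$, yielding the desired bound $2^{r-1}(1-\alpha/2)^d$.

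The ``in particular'' clause is a routine calculation --- rearrange $2^{r-1}(1-\alpha/2)^d \leq \eps$ using $1 - \alpha/2 = (2-\alpha)/2$ and solve for $d$. The only conceptual obstacle is identifying that the recursive definition of rank splits into two qualitatively different cases, and that $\alpha$-boundedness is needed (and used only) in the unbalanced case, where it guarantees that the rank-\emph{decreasing} branch is hit with probability at least $\alpha$.
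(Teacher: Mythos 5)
Your proof is correct and takes essentially the same approach as the paper: induction with the conditioning identity at the root, the inductive bounds $2^{r-2}(1-\alpha/2)^{d-1}$ and $2^{r-1}(1-\alpha/2)^{d-1}$ for the lower- and higher-rank subtree respectively, and $\alpha$-boundedness to control the weight on the higher-rank branch. The only cosmetic differences are that the paper inducts on the pruning depth $d$ rather than the tree depth, and that it avoids your explicit balanced/unbalanced case split by noting uniformly that one subtree has rank at most $r-1$ and the other has rank at most $r$ (in the balanced case this simply wastes a factor of two, which the arithmetic absorbs).
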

\begin{proof}
Our proof is by induction on the pruning depth $d$. If $T$ is a leaf, the statement trivial since $T^{\leq d}(x) \equiv T(x)$ for any $d \geq 0$. For $d=0$ and $r \geq 1$, $2^{r-1} \cdot \left(1-\frac{\alpha}{2}\right)^0 \geq 1$. We now assume that the claim is true for all pruning depths $0,\ldots,d-1$.

At least one of the subtrees $T_0$ and $T_1$ has rank $r-1$. Assume, without loss of generality that this is $T_0$. Let $\xx_i$ be the label of the root node of $T$.
$$\Pr_{D_\mu}[T^{\leq d}(x) \neq T(x)] = (1-\mu_i) \Pr_{D_\mu}[T_0^{\leq d-1}(x) \neq T_0(x)] + \mu_i \cdot \Pr_{D_\mu}[T_1^{\leq d-1}(x) \neq T_1(x)]\ .$$
By our inductive hypothesis, $$\Pr_{D_\mu}[T_0^{\leq d-1}(x) \neq T_0(x)] \leq 2^{r-2} \cdot \left(1-\frac{\alpha}{2}\right)^{d-1}$$ and $$\Pr_{D_\mu}[T_0^{\leq d-1}(x) \neq T_0(x)] \leq 2^{r-1} \cdot \left(1-\frac{\alpha}{2}\right)^{d-1}\ .$$ Combining these we get that
\alequn{\Pr_{D_\mu}[T^{\leq d}(x) \neq T(x)] &\leq (1-\mu_i) 2^{r-2} \cdot \left(1-\frac{\alpha}{2}\right)^{d-1} + \mu_i \cdot 2^{r-1} \cdot \left(1-\frac{\alpha}{2}\right)^{d-1} \\&\leq \alpha \cdot 2^{r-2} \cdot \left(1-\frac{\alpha}{2}\right)^{d-1} + (1-\alpha) \cdot 2^{r-1} \cdot \left(1-\frac{\alpha}{2}\right)^{d-1} \\&= \frac{1}{1-\frac{\alpha}{2}}\left(\frac{\alpha}{2} + (1-\alpha)\right)  2^{r-1} \cdot \left(1-\frac{\alpha}{2}\right)^{d} = 2^{r-1} \cdot \left(1-\frac{\alpha}{2}\right)^{d}\ .}
\end{proof}
For the uniform distribution we get error of at most $\eps$ for  $d = (r+\log(1/\eps))/\log(4/3) < \frac{5}{2} (r+\log(1/\eps))$.

\eat{
\begin{remark}
\label{rem:smooth-dist-approx}
We remark that the theorem is true for any other way to replace internal nodes at depth $d$ by functions. The claim would also hold under significantly weaker conditions on the distribution. Namely, the proof can be easily modified to obtain the same result for every $D$ for which there exist some constants $\alpha \in (0,1/2]$ and $\beta > 0$ such that for all $k \leq n$, and every disjunction $c(x)$ of $k$ literals, $\Pr_D[c(x) = 1] \leq \beta \cdot (1-\alpha)^k$. This would give a bound of $d = \lfloor (r+\log(\beta/\eps))/\log(2/(2-\alpha)) \rfloor$ (moreover the condition on the distribution only needs to hold for $k\leq d$).
\end{remark}
}

%It is well-known (\eg \citep{KushilevitzMansour:93}), that a decision tree of size $s$ (\ie with $s$ leaves) is $\eps$-close (over any product distribution) to the same decision tree pruned at depth $\log(s/\eps)$. It is also well-known that for any decision tree of size $s$ has rank of at most $\log s$. Therefore Theorem \ref{th:pruning} (strictly) generalizes the size-based pruning.

An immediate corollary of Theorems \ref{th:pruning} and \ref{th:submod-lip-rank-const-intro} is that every submodular function can be $\eps$-approximated over the uniform distribution by a binary decision tree of depth $O(1/\eps^2)$ (Corollary~\ref{cor:submod-tree-approx-intro}).

\citet{KushilevitzMansour:93} showed that the spectral $\ell_1$ norm of a decision tree of size $s$ is at most $s$.
\eat{
\begin{lemma}[\citep{KushilevitzMansour:93}]
Let $T$ be a $[0,1]$-valued decision tree of size $s$. Then
$$\|\hat{T}\|_1 = \sum_{S \subseteq [n]} |\hat{T}(S)| \leq s.$$
\end{lemma}}
Therefore we can immediately conclude that:
\begin{corollary}
\label{cor:submod-spectral-approx}
Let $f:\zon \rightarrow [0,1]$ be a submodular function and $\eps > 0$. There exists a function $p:\zon \rightarrow [0,1]$  such that $\|p - f\|_2 \leq \eps$ and $\|\hat{p}\|_1 = 2^{O(1/\eps^2)}$.
\end{corollary}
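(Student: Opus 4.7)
The plan is to simply combine Corollary \ref{cor:submod-tree-approx-intro} with the cited Kushilevitz--Mansour bound on the spectral $\ell_1$ norm of decision trees; there is essentially no new work to do, so the proposal below is short.

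First, I would invoke Corollary \ref{cor:submod-tree-approx-intro} to obtain a binary decision tree $T$ of depth $d = O(1/\eps^2)$ with constant leaves satisfying $\|T - f\|_2 \leq \eps$. Since the leaves in the construction are averages of $f$ (which takes values in $[0,1]$) over subcubes, $T$ is a $[0,1]$-valued function on $\zon$, so it is a legitimate candidate for $p$.

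Second, I would count the size of this tree: a binary tree of depth $d$ has at most $2^d = 2^{O(1/\eps^2)}$ leaves, hence size $s = 2^{O(1/\eps^2)}$.

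Third, I would apply the Kushilevitz--Mansour bound cited just before the corollary, which states that for any $[0,1]$-valued decision tree of size $s$, $\|\hat{T}\|_1 = \sum_{S \subseteq [n]} |\hat{T}(S)| \leq s$. Setting $p := T$ then gives $\|\hat{p}\|_1 \leq 2^{O(1/\eps^2)}$ while preserving $\|p - f\|_2 \leq \eps$, which is precisely the statement of the corollary. The only thing to be careful about is that Corollary \ref{cor:submod-tree-approx-intro} really does give a constant-leaf tree (not an $\F_\alpha$-valued tree), which is explicit in its statement, and that the Kushilevitz--Mansour size bound is applied to this constant-leaf tree; there is no other obstacle.
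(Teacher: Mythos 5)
Your proposal is correct and matches the paper's argument exactly: the paper also obtains the corollary by taking the constant-leaf depth-$O(1/\eps^2)$ tree from Corollary~\ref{cor:submod-tree-approx-intro}, noting it has size at most $2^{O(1/\eps^2)}$, and applying the Kushilevitz--Mansour bound that a $[0,1]$-valued decision tree of size $s$ has spectral $\ell_1$ norm at most $s$.
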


%\begin{theorem}
%\label{th:submod-tree-approx}
%Let $f:\zon \rightarrow [0,1]$ be a submodular function and $\eps > 0$. There exists a binary decision tree $T$ of depth $d  = O(1/\eps^2)$ with constants in the leaves such that $\|T - f\|_2 \leq \eps$. In particular, $T$ depends on at most $2^{O(1/\eps^2)}$ variables.
%\end{theorem}

\section{Applications}

In this section, we give several applications of our structural results to the problem of learning submodular functions.

\subsection{PAC Learning}
In this section we present our results on learning in the PAC model. We first show how to find $2^{O(1/\eps^2)}$ variables that suffice for approximating any submodular function using random examples alone.
Using a fairly standard argument we first show that for any function $f$ that is close to a function of low polynomial degree and low spectral $\ell_1$ norm (which is satisfied by submodular functions) variables sufficient for approximating $f$ can be found by looking at significant Fourier coefficients of $f$ (the proof is in App.~\ref{app:pac-learning})
\begin{lemma}
\label{lem:junta-from-l1}
Let $f:\zon \rightarrow [0,1]$ be any function such that there exists a function $p$ of Fourier degree $d$ and spectral $\ell_1$ norm $\|\hat{p}\|_1 = L$ for which $\|f- p\|_2 \leq \eps$. Define
$$J = \{ i \cond \exists S; i\in S, |S| \leq d \mbox{ and } |\hat{f}(S)| \geq \eps^2/L \}  .$$
Then $|J| \leq d \cdot L^2/\eps^4$ and there exists a function $p'$ of Fourier degree $d$ over variables in $J$ such that $\|f- p\|_2 \leq 2 \eps$.
\end{lemma}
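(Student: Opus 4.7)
The plan is to first bound $|J|$ via Parseval applied to $f$, then construct $p'$ as the Fourier restriction of $f$ to the low-degree subsets of $J$, and finally control the approximation error by a short self-bounding argument that exploits the spectral $\ell_1$ bound on $p$ together with the definition of $J$.

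For the bound on $|J|$: since $f$ takes values in $[0,1]$, we have $\|f\|_2^2 \leq \E[f] \leq 1$, so by Parseval $\sum_S \hat{f}(S)^2 \leq 1$. Hence at most $L^2/\eps^4$ sets $S$ can satisfy $|\hat{f}(S)| \geq \eps^2/L$, and each such $S$ has $|S| \leq d$ and thus contributes at most $d$ indices to $J$, yielding $|J| \leq d L^2/\eps^4$.

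For the approximation, I will take $p'(x) = \sum_{S \subseteq J,\, |S| \leq d}\hat{f}(S)\chi_S(x)$, which visibly has Fourier degree at most $d$ and depends only on variables in $J$. By Parseval,
\[
\|f-p'\|_2^2 \;=\; \sum_{|S|>d}\hat{f}(S)^2 \;+\; \sum_{S\in A}\hat{f}(S)^2,
\]
where $A = \{S : |S|\leq d,\ S\not\subseteq J\}$. Because $p$ has Fourier degree at most $d$, $\hat{p}(S)=0$ for $|S|>d$, so the high-degree tail equals $\sum_{|S|>d}(\hat{f}(S)-\hat{p}(S))^2 \leq \|f-p\|_2^2 \leq \eps^2$.

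The interesting piece is $X := \sum_{S\in A}\hat{f}(S)^2$. The key observation will be that every $S\in A$ contains some index $i\notin J$, which by the definition of $J$ forces $|\hat{f}(S)|<\eps^2/L$. Writing $\hat{f}(S)^2 = \hat{f}(S)\hat{p}(S) + \hat{f}(S)(\hat{f}(S)-\hat{p}(S))$ and summing over $A$ yields
\[
X \;\leq\; \sum_{S\in A}|\hat{f}(S)|\,|\hat{p}(S)| \;+\; \sqrt{X}\cdot\|f-p\|_2 \;\leq\; \tfrac{\eps^2}{L}\cdot L + \eps\sqrt{X},
\]
where the first term combines the threshold bound with $\|\hat{p}\|_1\leq L$ and the second is Cauchy--Schwarz. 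Solving the self-bounding quadratic $X \leq \eps^2 + \eps\sqrt{X}$ gives $X \leq 3\eps^2$, hence $\|f-p'\|_2^2 \leq 4\eps^2$ and $\|f-p'\|_2 \leq 2\eps$. The main obstacle is precisely this last step: a naive bound of the form $|\hat{p}(S)| \leq \eps^2/L + \eps$ on each individual $S\in A$ would cost a factor of $L$, so the self-bounding trick that places $X$ on both sides of the inequality is essential.
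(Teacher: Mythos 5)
Your argument is correct, but it takes a genuinely different route from the paper's, so it is worth comparing. The paper constructs $p'$ by truncating $p$ (not $f$) onto the family $\cS = \{S : |S| \leq d,\ |\hat{f}(S)| \geq \eps^2/L\}$, that is, $p' = \sum_{S\in\cS}\hat{p}(S)\chi_S$, and then bounds the difference $\|f-p'\|_2^2 - \|f-p\|_2^2$ coefficient by coefficient: on any $S$ where $p'$ incurs more squared error than $p$, necessarily $S\notin\cS$, so $|\hat{f}(S)| \leq \eps^2/L$ and the per-coefficient increase is at most $2\hat{f}(S)\hat{p}(S) \leq \tfrac{2\eps^2}{L}|\hat{p}(S)|$; summing against $\|\hat{p}\|_1 = L$ gives a total increase of at most $2\eps^2$ and hence $\|f-p'\|_2^2 \leq 3\eps^2$. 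You instead take $p'$ to be a degree-$d$ restriction of $\hat{f}$ itself to subsets of $J$, split the remaining mass into the high-degree tail (handled using $\degree(p)\leq d$) and the discarded low-degree part $X$, and absorb the cross term $\sum_{S\in A}\hat{f}(S)(\hat{f}(S)-\hat{p}(S))$ by Cauchy--Schwarz and a self-bounding quadratic. Both proofs hinge on exactly the same two ingredients --- the threshold $|\hat{f}(S)| < \eps^2/L$ on the discarded frequencies and the $\ell_1$ budget on $\hat{p}$ --- but deploy them differently. The paper's direct coefficient comparison is slightly cleaner and a touch tighter ($3\eps^2$ vs.\ your $4\eps^2$, both within the claimed $2\eps$), while your choice of $p'$ has the conceptual advantage that it is a low-degree truncation of $\hat{f}$ rather than of the existential object $p$, which is exactly the kind of hypothesis a Fourier-estimation algorithm produces. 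The $|J|$ bound you give (Parseval on $f$, then each heavy set contributes at most $d$ indices) matches the paper's argument.
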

\begin{proof}
Let $$\cS = \{ S \cond |S| \leq d \mbox{ and } |\hat{f}(S)| \geq \eps^2/L \}. $$
By Parseval's identity, there are at most $L^2 / \eps^4$ sets in $\cS$.
Clearly, $J$ is the union of all the sets in $\cS$. Therefore, the bound on the size of $J$ follows immediately from the fact that each set $S \in \cS$ has size at most $d$.

Let $p'$ be the projection of $p$ to the subspace of $\{ \chi_S: S \in \cS\}$, that is $p' = \sum_{S \in \cS} \hat{p}(S) \chi_S$.
Now using Parseval's identity we get that
 $$\|f- p\|_2^2 = \sum_{S\subseteq [n]} (\hat{f}(S)-\hat{p}(S))^2\ . $$
 Now we observe that for any $S$, $|\hat{f}(S)-\hat{p}(S)| < |\hat{f}(S)-\hat{p'}(S)|$ can happen only when $S \not\in \cS$ in which case $\hat{p'}(S) = 0$ and $|\hat{f}(S)| \leq \eps^2/L$.

 $|\hat{p}(S)| \leq 2|\hat{f}(S)|$; hence only when $|\hat{p}(S)| \leq 2 \eps^2/L$.
 In this case, $$(\hat{f}(S)-\hat{p'}(S))^2 -  (\hat{f}(S)-\hat{p}(S))^2 = 2 \hat{f}(S) \hat{p}(S)  - (\hat{p}(S))^2 \leq 2 \hat{f}(S) \hat{p}(S) \leq 2 |\hat{p}(S)| \cdot \eps^2/L\ .$$
 Therefore,
$$ \|f- p'\|_2^2 - \|f- p\|_2^2 = \sum_{S} (\hat{f}(S)-\hat{p'}(S))^2 -  (\hat{f}(S)-\hat{p}(S))^2 \leq \frac{2\eps^2}{L} \sum_{S} |\hat{p}(S)|
  \leq \frac{2\eps^2}{L} \cdot \|\hat{p}\|_1  = 2\eps^2. $$
This implies that $\|f- p'\|_2^2 \leq 3\eps^2$.
\end{proof}

%In \citep{FeldmanKV:12manu} it is also showed that submodular functions can be PAC learned from random examples in time $\poly(n, 2^{1/\eps^4})$ to $\ell_2$ error of $\eps$ and in time $\poly(n, 2^{\log^2(1/\eps)/\eps^2})$ to $\ell_1$-error of $\eps$.
%The proof in \citep{FeldmanKV:12manu} relies on a relatively involved argument based on a real-valued variant of Friedgut's theorem \citep{Friedgut:98} applied to second-order influences of variables. We now show a more direct proof that for a submodular $f$, using random examples alone one can detect a set $J$ of $2^{O(1/\eps^2)}$ variables such that there exists a function $g$ over variables in $J$ of degree at most $O(1/\eps^2)$ that is $\eps$-close in $\ell_2$ to $f$.

The second and crucial observation that we make is a connection between Fourier coefficient of $\{i,j\}$ of a submodular function and sum of squares of all Fourier coefficients that contain $\{i,j\}$.
\begin{lemma}
\label{lem:upp-bound-sum}
Let $f:\zon \rightarrow [0,1]$ be a submodular function and $i,j \in [n]$, $i\neq j$.
\equn{|\hat{f}(\{i,j\})| \geq \frac12 \sum_{S \ni i,j} (\hat{f}(S))^2  .}
\end{lemma}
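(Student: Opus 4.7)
The plan is to translate the claim into a statement about the second discrete derivative $\partial_{i,j} f$, then use that submodularity forces $\partial_{i,j} f \leq 0$ pointwise to relate $\hat{f}(\{i,j\})$ to $\E[|\partial_{i,j} f|]$, and finally exploit the boundedness of $f$ to convert an $L^2$ norm into an $L^1$ norm.

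First, introduce the auxiliary function
$$g(x) = \sum_{S \ni i,j} \hat{f}(S)\,\chi_{S\setminus\{i,j\}}(x),$$
which by the Fourier identity given in the preliminaries equals $\tfrac{1}{4}\partial_{i,j}f(x)$. Note $g$ depends only on coordinates outside $\{i,j\}$, and its Fourier coefficients (indexed by subsets of $[n]\setminus\{i,j\}$) are exactly $\hat g(T) = \hat f(T\cup\{i,j\})$. In particular Parseval gives $\E[g^2] = \sum_{S \ni i,j} \hat f(S)^2$, and $\E[g] = \hat f(\{i,j\})$.

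Next, since $f$ is submodular we have $\partial_{i,j} f(x) \leq 0$ for every $x$, hence $g(x) \leq 0$ pointwise. Therefore $|g| = -g$ and
$$\E[|g|] \;=\; -\E[g] \;=\; -\hat f(\{i,j\}),$$
which shows that $\hat f(\{i,j\}) \leq 0$ and $|\hat f(\{i,j\})| = \E[|g|]$. This is the step that converts a purely spectral quantity into an $L^1$ quantity, and is the only place submodularity enters.

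Finally, since $f$ takes values in $[0,1]$, each of the four evaluations defining $\partial_{i,j} f(x) = f_{11}-f_{10}-f_{01}+f_{00}$ lies in $[0,1]$, so $|\partial_{i,j} f(x)| \leq 2$ and therefore $\|g\|_\infty \leq 1/2$. The routine bound $\E[g^2] \leq \|g\|_\infty \cdot \E[|g|]$ then yields
$$\sum_{S \ni i,j} \hat f(S)^2 \;=\; \E[g^2] \;\leq\; \tfrac{1}{2}\,\E[|g|] \;=\; \tfrac{1}{2}\,|\hat f(\{i,j\})|,$$
which is stronger than the stated inequality (rearranging gives $|\hat f(\{i,j\})| \geq 2\sum_{S\ni i,j}\hat f(S)^2$, and in particular $\geq \tfrac{1}{2}\sum_{S\ni i,j}\hat f(S)^2$). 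There is no real obstacle here; the only subtlety is remembering that the sign of $\partial_{i,j} f$ is what lets us identify $|\hat f(\{i,j\})|$ with $\E[|g|]$ rather than merely bounding one by the other.
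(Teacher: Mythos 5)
Your proof is correct and it is essentially the same argument as the paper's: both reduce the claim to the sign and boundedness of $\partial_{i,j} f$, use submodularity to equate $|\hat f(\{i,j\})|$ with $\tfrac14\E[|\partial_{i,j}f|]$, and convert an $\ell_2$ quantity into an $\ell_1$ quantity via the range bound $|\partial_{i,j}f|\leq 2$. The only difference is presentational — you normalize to $g=\tfrac14\partial_{i,j}f$ and invoke $\E[g^2]\leq\|g\|_\infty\E[|g|]$ where the paper writes the chain of inequalities directly — and both derivations actually establish the stronger constant $2$ in place of the stated $\tfrac12$, as you note.
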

\begin{proof}
\equn{|\hat{f}(\{i,j\})| =^{(a)} \frac14 |\E_\U[\partial_i \partial_j f]| =^{(b)} \frac14 \E_\U[|\partial_i \partial_j f|] \geq^{(c)} \frac18 \E_\U\left[\left(\partial_i \partial_j f\right)^2\right] =^{(a)} 2 \sum_{S \ni i,j} (\hat{f}(S))^2 .}
Here, $(a)$ follows from the basic properties of the Fourier spectrum of partial derivatives (see Sec.~\ref{sec:prelims});
$(b)$ is implied by second partial derivatives of a submodular function being always non-positive; and $(c)$ follows from $|\partial_i \partial_j f|$ having range $[0,2]$ whenever $f$ has range $[0,1]$.
\end{proof}

We can now easily complete the proof of Lemma~\ref{lem:find-inf-vars-intro}.
\begin{proof}[Proof of Lemma~\ref{lem:find-inf-vars-intro}]
The proof relies on two simple observations. The first one is that Lemma \ref{lem:junta-from-l1} implies that the set of indices $I_\gamma = \{ i \cond \exists S \ni i, |\hat{f}(S)| \geq \gamma \}$ satisfies the conditions of Lemma \ref{lem:find-inf-vars-intro} for some $\gamma = 2^{-O(1/\eps^2)}$.

Now if $i \in I_\gamma$ then either $|\hat{f}(\{i\})| \geq \gamma$ or,
exists $j\neq i$, such that for some $S' \ni i,j$, $|\hat{f}(S')| \geq \gamma$. In the latter case $\sum_{S \ni i,j} (\hat{f}(S))^2 \geq \gamma^2$. By Lemma \ref{lem:upp-bound-sum} we can conclude that then $|\hat{f}(\{i,j\})| \geq 2\gamma^2$.

This suggests the following simple algorithm for finding $J$. Estimate degree 1 and 2 Fourier coefficients of $f$ to accuracy $\gamma^2/2$ with confidence at least $5/6$ using random examples (note that $\gamma <1/2$ and hence degree-1 coefficients are estimated with accuracy at least $\gamma/4$. Let $\tilde{f}(S)$ for $S\subseteq [n]$ of size 1 or 2 denote the obtained estimates. We define
$$J = \left\{ i \cond \exists j\in [n], |\tilde{f}(\{i,j\})| \geq 3\gamma^2/2 \right\}\ . $$
If the estimates are correct, then clearly, $I_\gamma \subseteq J$. At the same time, $J$ contains inly indices which belong to a Fourier coefficient of magnitude at least $\gamma^2$ and degree at most $2$. By Parseval's identity, $|J| \leq 2 \|f\|_2^2/\gamma^4 = 2^{O(1/\eps^2)}$.

Finally, to bound the running time we observe that, by Chernoff bounds, $O(\log (n)/\gamma^4) = \log (n) \cdot 2^{O(1/\eps^2)} $ random examples are sufficient to obtain the desired estimates with confidence of $5/6$. The estimation of the coefficients can be done in $n^2 \log(n) \cdot 2^{O(1/\eps^2)}$ time.
\end{proof}

Now given a set $J$ that was output by the algorithm in Lemma \ref{lem:find-inf-vars-intro} one can simply run the standard low-degree algorithm of \citet{LinialMN:93} over variables with indices in $J$ to find a linear combination of parities of degree $O(1/\eps^2)$, $\eps$-close to $f$. Note that we need to find coefficients of at most $|J|^{O(1/\eps^2)} \leq \min\{2^{O(1/\eps^4)},n^{O(1/\eps^2)}\}$ parities. This immediately implies Theorem~\ref{th:pac-learn-submod-l1-intro}.
\eat{
\begin{theorem}(Theorem~\ref{th:pac-learn-submod-l1-intro} restated)
\label{th:pac-learn-submod-l1}
Let $\C_s$ be the set of all submodular functions from $\zon$ to $[0,1]$. There exists an algorithm $\A$ that given $\eps > 0$ and access to random uniform examples of any $f \in \C_s$, with probability at least $2/3$, outputs a function $h$, such that $\|f-h\|_2 \leq \epsilon$. Further, $\A$ runs in time $\tilde{O}(n^2) \cdot 2^{O(r/\eps^2)}$ and uses $2^{O(r/\eps^{2})} \log n$ examples for $r =  \min\{n, 1/\eps^2\}$.
\end{theorem}
}
%This is the first EPTAS-type ($f(\eps) \poly(n)$ running time) algorithm for learning of submodular functions (as opposed to the PTAS-type, $n^{f(\eps)}$ running time of the algorithm of \citep{CheraghchiKKL:12}).

\subsection{Agnostic learning with value queries}
Our next application is agnostic learning of submodular functions over the uniform distribution with value queries. We give two versions of the agnostic learning algorithm one based on $\ell_1$ and the other based on $\ell_2$ error. We note that, unlike in the PAC setting where small $\ell_2$ error also implied small $\ell_1$ error, these two versions are incomparable
and are also based on different algorithmic techniques. The agnostic learning techniques we use are not new but we give attribute-efficient versions of those techniques using an attribute-efficient agnostic learning of parities from \citep{Feldman:07jmlr}.

%This is the first corollary improves on the recent $2^{O(1/\eps^4)}$ bound in \citep{FeldmanKV:12manu} (which is also true for a broader class of functions).
For the $\ell_2$ agnostic learning algorithm we need a known observation (\eg \citep{GopalanKK:08}) that the algorithm of \citet{KushilevitzMansour:93} can be used to obtain agnostic learning relative to $\ell_2$-norm of all functions with spectral $\ell_1$ norm of $L$ in time $\poly(n,L,1/\eps)$ (we include a proof in App.~\ref{app:att-eff-km}).
\eat{
\begin{theorem}[\citep{KushilevitzMansour:93}]
\label{th:km-gkk}
For $L>0$, we define $\C_L$ as $\{ p(x) \cond \|\hat{p}\|_1 \leq L\}$. There exists an algorithm $\A$ that given $\eps > 0$ and access to value queries for any real-valued $f:\zon\rightarrow [-1,1]$, with probability at least $2/3$, outputs a function $h$, such that $\|f-h\|_2 \leq \Delta + \epsilon$, where $\Delta = \min_{p\in \C_L}\{\|f-p\|_2\}$. Further, $\A$ runs in time $\poly(n,L,1/\eps)$.
\end{theorem}
}
We also observe that in order to learn agnostically decision trees of depth $d$ it is sufficient to restrict the attention to significant Fourier coefficients of degree at most $d$. We can exploit this observation to improve the number of value queries used for learning by using the attribute-efficient agnostic parity learning from \citep{Feldman:07jmlr} in place of the KM algorithm. Specifically, we first prove the following attribute-efficient version of agnostic learning of functions with low spectral $\ell_1$-norm (the proof appears in App.~\ref{app:att-eff-km}).

\begin{theorem}
\label{th:km-att-eff}
For $L>0$, we define $\C_L^d$ as $\{ p(x) \cond \|\hat{p}\|_1 \leq L \mbox{ and }\degree(p) \leq d \}$. There exists an algorithm $\A$ that given $\eps > 0$ and access to value queries for any real-valued $f:\zon\rightarrow [-1,1]$, with probability at least $2/3$, outputs a function $h$, such that $\|f-h\|_2 \leq \Delta + \epsilon$, where $\Delta = \min_{p\in \C_L}\{\|f-p\|_2\}$. Further, $\A$ runs in time $\poly(n,L,1/\eps)$ and uses $\poly(d,\log(n),L,1/\eps)$ value queries.
\end{theorem}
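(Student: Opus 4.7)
The plan is to instantiate the standard ``find all large Fourier coefficients and project'' paradigm for agnostic learning, but to replace the Kushilevitz--Mansour subroutine with the attribute-efficient agnostic parity learner of \citep{Feldman:07jmlr}, so that the query complexity carries only a $\log n$ factor while the running time stays $\poly(n)$.

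First I would reduce the theorem to identifying all large low-degree Fourier coefficients of $f$. Set $\theta = \eps^2/(8L)$, let $\cS = \{S \subseteq [n]: |S|\leq d,\ |\hat{f}(S)|\geq \theta\}$, and let $h^* = \sum_{S\in\cS} \hat{f}(S)\chi_S$ denote the $\ell_2$-projection of $f$ onto $\mathrm{span}\{\chi_S : S\in\cS\}$. For any optimal $p^* \in \C_L^d$ with $\|f-p^*\|_2 = \Delta$, Parseval together with the nonnegativity of the terms indexed by $\cS$ in $\|f-p^*\|_2^2$ gives
\[
\|f-h^*\|_2^2 - \Delta^2 \leq \sum_{S \notin \cS,\,|S|\leq d} \bigl(2\hat{f}(S)\hat{p^*}(S) - \hat{p^*}(S)^2\bigr) \leq 2\theta \|\hat{p^*}\|_1 \leq 2\theta L = \eps^2/4,
\]
and hence $\|f-h^*\|_2 \leq \Delta + \eps/2$ via $\sqrt{a^2+b^2}\leq a+b$. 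By Parseval applied to $f$, $|\cS| \leq 1/\theta^2 = O(L^2/\eps^4)$.

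Next I would identify a superset of $\cS$ iteratively using the parity learner of \citep{Feldman:07jmlr}, which, given value queries to a bounded function $g$ and parameters $(\theta', d)$, returns with high probability a set $T$ of size at most $d$ with $|\hat{g}(T)| \geq \theta'/2$ whenever some such $T$ has $|\hat{g}(T)| \geq \theta'$, using $\poly(d,\log n,1/\theta')$ value queries and $\poly(n,1/\theta')$ time. Starting from $h_0 = 0$, in iteration $i$ I call the learner on $g_i := f - h_i$ with threshold $\theta$; if it fails I halt; else I obtain a new $T_{i+1}$, use $\poly(L,1/\eps)$ fresh random samples to estimate $\alpha_{i+1} := \hat{g_i}(T_{i+1}) = \E_\U[g_i(x)\chi_{T_{i+1}}(x)]$ to accuracy $\tau := \eps/(2\sqrt{|\cS|})$, and set $h_{i+1} = h_i + \alpha_{i+1}\chi_{T_{i+1}}$. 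Value queries to $g_i$ are simulated by one query to $f$ plus explicit evaluation of the at most $|\cS|$ parities comprising $h_i$. A short inductive argument, maintaining the invariant $|\hat{g_i}(T_j)|\leq \tau$ for $j\leq i$, shows the $T_j$ are distinct and $\{T_j\}_j \supseteq \cS$ at termination, so the procedure runs for at most $|\cS| = O(L^2/\eps^4)$ iterations.

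Putting the pieces together, the output $h = h_{\mathrm{final}}$ satisfies
\[
\|f-h\|_2^2 = \sum_{S\notin\{T_j\}} \hat{f}(S)^2 + \sum_j (\hat{f}(T_j)-\alpha_j)^2 \leq \|f-h^*\|_2^2 + |\cS|\tau^2 \leq (\Delta+\eps/2)^2 + \eps^2/4,
\]
and $\sqrt{a^2+b^2}\leq a+b$ gives $\|f-h\|_2 \leq \Delta + \eps$. The total query complexity is $|\cS|\cdot \poly(d,\log n, L, 1/\eps) = \poly(d,\log n, L, 1/\eps)$ and the runtime is $|\cS|\cdot \poly(n, L, 1/\eps) = \poly(n, L, 1/\eps)$. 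The main obstacle will be verifying the invariant: the accumulated estimation slack on already-subtracted coefficients must neither cause the learner to miss an unfound $S\in \cS$ nor cause it to return a previously-found $T_j$. This reduces to checking $\tau < \theta/2$ (satisfied since $\tau = \eps\theta/2 < \theta/2$ for $\eps<1$), together with the simple observation that $\hat{g_i}$ agrees with $\hat{f}$ on every $S \notin \{T_1,\ldots,T_i\}$, so every unfound member of $\cS$ remains detectable at threshold $\theta$ in every iteration.
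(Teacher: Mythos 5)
Your proof is correct and follows essentially the same route as the paper's: it replaces the Kushilevitz--Mansour subroutine with the attribute-efficient weak parity learner of \citet{Feldman:07jmlr}, iteratively collects all large low-degree Fourier coefficients, estimates them, and then derives the $\ell_2$ agnostic guarantee from a Parseval argument --- exactly the decomposition the paper uses via Theorem~\ref{th:att-eff-collect-FT} and Lemma~\ref{lem:km-to-agnostic}, with your projection-onto-$h^*$ bookkeeping being a cosmetic variant of the paper's coefficient-by-coefficient comparison. One small slip to tighten: the iteration count and the cardinality appearing in your final error sum should be bounded by the number of $S$ with $|\hat f(S)|\ge\theta/2$ (at most $4/\theta^2$) rather than by $|\cS|\le 1/\theta^2$, since the learner can also return sets with coefficients in $[\theta/2,\theta)$; this only changes constants, and shrinking $\tau$ by a constant factor repairs the $\Delta+\eps$ bound.
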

Together with Cor.~\ref{cor:submod-spectral-approx} this implies Theorem~\ref{th:agn-learn-valueq-intro}.

\eat{
\begin{theorem}(Theorem~\ref{thm:agn-learn-valueq} restated)
\label{th:agn-learn-valueq}
Let $\C_s$ denote the class of all submodular functions from $\zon$ to $[0,1]$. There exists an algorithm $\A$ that given $\eps > 0$ and access to value queries of any real-valued $f$, with probability at least $2/3$, outputs a function $h$, such that $\|f-h\|_2 \leq \Delta + \epsilon$, where $\Delta = \min_{g\in \C_s}\{\|f-g\|_2\}$. Further, $\A$ runs in time $\poly(n, 2^{1/\eps^2})$ and using $\poly(\log n, 2^{1/\eps^2})$ value\footnote{For Boolean functions value queries are usually referred to as membership queries.} queries.
\end{theorem}
}
%This improves on the recent $\poly(n, 2^{1/\eps^4})$ time (and queries) algorithm in \citep{FeldmanKV:12manu}.
\eat{
\begin{remark}
\label{rem:return-junta}
We note that this algorithm returns a function that can depend on variable $i$ only if $i \in S$ such that $|\hat{f}(S)| = 2^{-O(1/\eps^2)}$ and $|S| = O(1/\eps^2)$. By Parseval's identity, there are $2^{O(1/\eps^2)}$ such coefficients each with at most $O(1/\eps^2)$ variables. In other words, the learning algorithm returns a $2^{O(1/\eps^2)}$-junta as a hypothesis.
\end{remark}
}
\citet{GopalanKK:08} give the $\ell_1$ version of agnostic learning for functions of low spectral $\ell_1$ norm. Together with Cor.~\ref{cor:submod-spectral-approx} this implies an $\ell_1$ agnostic learning algorithm for submodular functions using $\poly(n,2^{1/\eps^2})$ time and queries. There is no known attribute-efficient version of the algorithm of \citet{GopalanKK:08} and their analysis is relatively involved. Instead we use our approximate representation by decision trees to invoke a substantially simpler algorithm for agnostic learning of decision trees based on agnostic boosting \citep{KalaiKanade:09,Feldman:10ab}. In this algorithm it is easy to use attribute-efficient agnostic learning of parities \citep{Feldman:07jmlr} (restated in Th.~\ref{th:aewp}) to reduce the query complexity of the algorithm. Formally we give the following attribute-efficient algorithm for learning $[0,1]$-valued decision trees.
\begin{theorem}
\label{th:agn-learn-rv-dt}
Let $\DT_{[0,1]}(r)$ denote the class of all $[0,1]$-valued decision trees of rank-$r$ on $\zon$. There exists an algorithm $\A$ that given $\eps > 0$ and access to value queries of any $f: \zon \rightarrow \zo$, with probability at least $2/3$, outputs a function $h: \zon \rightarrow [0,1]$, such that $\|f - h\|_1 \leq \Delta + \eps$, where $\Delta = \min_{g\in \DT_{[0,1]}(r)}\{\|f - g\|_1\}$. Further, $\A$ runs in time $\poly(n,2^r,1/\eps)$ and uses $\poly(\log{n},2^r,1/\eps)$ value queries.
\end{theorem}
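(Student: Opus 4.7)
The plan is to combine our pruning result (Theorem~\ref{th:pruning}) with an agnostic boosting framework, using the attribute-efficient agnostic parity learner of \citet{Feldman:07jmlr} as the weak learner.

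First, set $d = \lceil \tfrac{5}{2}(r+\log(2/\eps)) \rceil$ and apply Theorem~\ref{th:pruning} to the optimal tree $g^* \in \DT_{[0,1]}(r)$. The truncation $(g^*)^{\leq d}$ differs from $g^*$ on at most an $\eps/2$ fraction of inputs, and since both functions take values in $[0,1]$ this yields $\|g^* - (g^*)^{\leq d}\|_1 \leq \eps/2$. Consequently, if we can agnostically learn the class of depth-$d$, $[0,1]$-valued decision trees within $\ell_1$ error $\eps/2$, then by the triangle inequality the resulting hypothesis $h$ satisfies $\|f-h\|_1 \leq \Delta + \eps$.

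Next, every depth-$d$, $[0,1]$-valued DT has at most $2^d$ leaves, so by the Kushilevitz-Mansour bound its spectral $\ell_1$ norm is at most $L := 2^d$, and it is a polynomial of degree at most $d$. The target class thus embeds into $\C_L^d := \{p : \|\hat p\|_1 \leq L,\ \degree(p) \leq d\}$, and it suffices to $\ell_1$-agnostically learn $\C_L^d$ within error $\eps/2$.

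For this step we invoke the agnostic boosting framework of \citet{KalaiKanade:09} and \citet{Feldman:10ab}, which reduces $\ell_1$-agnostic learning of $\C_L^d$ to weak agnostic learning under reweighted samples. The crucial observation is that for any $g \in \C_L^d$ and any reweighted target $f^w \in [-1,1]$, $|\langle f^w, g \rangle| \leq \|\hat g\|_1 \cdot \max_{|S| \leq d} |\hat{f^w}(S)| \leq L \cdot \max_{|S| \leq d} |\hat{f^w}(S)|$, so the best parity of degree $\leq d$ realizes at least a $1/L$ fraction of the correlation attained by the best $g$. Hence parities of size $\leq d$ form an adequate weak class with advantage $\gamma = \Theta(\eps/L) = \Theta(\eps \cdot 2^{-r})$. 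To implement the weak learner attribute-efficiently we use the agnostic parity learner of \citet{Feldman:07jmlr} restricted to parities of size at most $d$; a single invocation uses $\poly(d,\log n, 1/\gamma) = \poly(\log n, 2^r, 1/\eps)$ value queries and runs in time $\poly(n, 2^r, 1/\eps)$. Boosting terminates in $\poly(1/\gamma, 1/\eps) = \poly(2^r, 1/\eps)$ rounds, yielding total query complexity $\poly(\log n, 2^r, 1/\eps)$ and running time $\poly(n, 2^r, 1/\eps)$. The main subtlety to verify is that the booster's reweighting step remains attribute-efficient: after any round the running hypothesis is a combination of parities each on at most $d$ variables, hence depends on at most $\poly(2^r, 1/\eps)$ distinct variables, so the weight attached to a newly queried point can be computed without reading the remaining coordinates, preserving the $\poly(\log n)$ query complexity throughout the boosting process.
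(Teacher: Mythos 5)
Your proof is essentially correct, and the key ingredients (pruning via Theorem~\ref{th:pruning}, the spectral $\ell_1$ bound $2^d$ for depth-$d$ trees, the correlation inequality $|\langle f^w,g\rangle|\le \|\hat g\|_1\max_{|S|\le d}|\hat{f^w}(S)|$, and the attribute-efficient $\WP$ weak learner inside the \citet{KalaiKanade:09,Feldman:10ab} boosters) are exactly the ones the paper uses. The structural difference is that you push the real-valued class through the booster in a single shot, whereas the paper first establishes a Boolean version (Theorem~\ref{th:agn-learn-b-dt}) for $\zo$-valued rank-$r$ trees and then reduces the $[0,1]$-valued case to it by thresholding: replace $g$ by its level sets $g_\theta(x)=``g(x)\ge\theta"$, observe each $g_\theta$ is again a rank-$r$ Boolean tree, and invoke the Boolean learner $O(1/\eps)$ times with accuracy $\eps/2$. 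Your route is more direct and avoids that extra layer, but it quietly assumes that the cited agnostic boosters apply out of the box to a \emph{real-valued} comparison class $\C_L^d$. This does work because for Boolean $f\in\pmi$ and any $g\in[-1,1]$ one has $\|f-g\|_1 = 1-\langle f,g\rangle$, so $\ell_1$-agnostic learning over a $[-1,1]$-valued class is still correlation maximization and the potential-based analyses go through (and clipping the final linear combination to $[-1,1]$ can only shrink the $\ell_1$ error) --- but the boosting theorems in those papers are stated for Boolean $\C$, so this step deserves an explicit sentence rather than being inherited silently. The paper's extra Boolean indirection is not just pedantry: the thresholding decomposition is re-used verbatim in Section~\ref{sec:pseudo-boolean} to get a $\{0,1,\ldots,k\}$-valued hypothesis for pseudo-Boolean submodular functions (Theorem~\ref{th:agn-learn-valueq-kval}); your direct version does not yield a discretized hypothesis for free.
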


Combining Theorems \ref{th:agn-learn-rv-dt} and \ref{th:submod-lip-rank-const-intro} gives the following agnostic learning algorithm for submodular functions (the proof is in App.~\ref{app:att-eff-km}).
\begin{theorem}
\label{th:agn-learn-l1-valueq}
Let $\C_s$ denote the class of all submodular functions from $\zon$ to $[0,1]$. There exists an algorithm $\A$ that given $\eps > 0$ and access to value queries of any real-valued $f$, with probability at least $2/3$, outputs a function $h$, such that $\|f-h\|_1 \leq \Delta + \epsilon$, where $\Delta = \min_{g\in \C_s}\{\|f-g\|_1\}$. Further, $\A$ runs in time $\poly(n, 2^{1/\eps^2})$ and using $\poly(\log n, 2^{1/\eps^2})$ value queries.
\end{theorem}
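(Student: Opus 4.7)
The plan is to reduce agnostic $\ell_1$ learning of submodular functions to agnostic $\ell_1$ learning of bounded rank decision trees, using our structural result (Theorem~\ref{th:submod-lip-rank-const-intro}) to show that the submodular function class $\C_s$ is uniformly well-approximated in $\ell_1$ by $\DT_{[0,1]}(r)$ for $r = O(1/\eps^2)$. Once this approximation is established, a single invocation of Theorem~\ref{th:agn-learn-rv-dt} on $f$ delivers the desired hypothesis.

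First I would choose $r = c/\eps^2$ large enough so that by Theorem~\ref{th:submod-lip-rank-const-intro}, every submodular $g \in \C_s$ admits a rank-$r$, $[0,1]$-valued binary decision tree $T_g$ with $\|g - T_g\|_2 \leq \eps/2$. Since $\|\cdot\|_1 \leq \|\cdot\|_2$ by Jensen, we get $\|g - T_g\|_1 \leq \eps/2$, so $T_g \in \DT_{[0,1]}(r)$ uniformly approximates every submodular $g$ to within $\eps/2$ in $\ell_1$. In particular, letting $g^* \in \arg\min_{g \in \C_s} \|f - g\|_1$ and $T^* = T_{g^*}$, the triangle inequality yields
\[
\min_{T \in \DT_{[0,1]}(r)} \|f - T\|_1 \leq \|f - T^*\|_1 \leq \|f - g^*\|_1 + \|g^* - T^*\|_1 \leq \Delta + \eps/2.
\]

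Next, I would invoke Theorem~\ref{th:agn-learn-rv-dt} with error parameter $\eps/2$ and rank parameter $r$, using value queries to $f$. The theorem returns a hypothesis $h : \zon \to [0,1]$ with
\[
\|f - h\|_1 \leq \min_{T \in \DT_{[0,1]}(r)} \|f - T\|_1 + \eps/2 \leq \Delta + \eps,
\]
as desired. The running time $\poly(n, 2^r, 1/\eps) = \poly(n, 2^{1/\eps^2})$ and query complexity $\poly(\log n, 2^r, 1/\eps) = \poly(\log n, 2^{1/\eps^2})$ follow directly from Theorem~\ref{th:agn-learn-rv-dt}.

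The main point to verify carefully is that Theorem~\ref{th:agn-learn-rv-dt}, although stated for Boolean targets, applies (or extends) to real-valued $f$ in the sense required here. The underlying ingredients—agnostic boosting \citep{KalaiKanade:09,Feldman:10ab} combined with attribute-efficient agnostic learning of parities \citep{Feldman:07jmlr}—do not use Booleanness of the target in an essential way: the boosting framework produces a $[0,1]$-valued hypothesis competitive in $\ell_1$ with the best base concept, provided the weak agnostic learner handles reweighted real-valued labels. Since the attribute-efficient parity learner from \citep{Feldman:07jmlr} operates on arbitrary bounded real labels, the extension is essentially immediate. I would therefore either invoke this extension directly or, if needed, reduce the $[0,1]$-valued case to the Boolean case by the standard probabilistic-concept trick of replacing each value query $f(x)$ by a Bernoulli$(f(x))$ label and noting that the $\ell_1$ error to $f$ equals the $\ell_1$ error to the random Boolean label when the hypothesis is $\{0,1\}$-valued, with the general $[0,1]$-valued case handled by a final thresholding step. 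This verification of the applicable agnostic DT learner is the only non-routine component; everything else is a direct combination of Theorem~\ref{th:submod-lip-rank-const-intro} with Theorem~\ref{th:agn-learn-rv-dt}.
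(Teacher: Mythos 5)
Your proposal is correct and takes the same route as the paper: the paper's proof is precisely the observation that combining the rank-$O(1/\eps^2)$ decision-tree approximation (Theorem~\ref{th:submod-lip-rank-const-intro}), the inequality $\|\cdot\|_1 \leq \|\cdot\|_2$, and the triangle inequality shows $\min_{T\in \DT_{[0,1]}(r)}\|f-T\|_1 \leq \Delta + \eps/2$, after which a single call to the agnostic $[0,1]$-valued decision-tree learner (Theorem~\ref{th:agn-learn-rv-dt}) finishes. The one worry you flag---that Theorem~\ref{th:agn-learn-rv-dt} is stated with a Boolean target $f:\zon\to\zo$---is indeed a slight inconsistency in the statement, but its proof in Appendix~\ref{app:att-eff-km} already handles a general $[0,1]$-valued target via exactly the kind of thresholding/discretization you describe (reducing to Boolean targets $g_\theta(x)=\text{``}g(x)\geq\theta\text{''}$ and summing the learned hypotheses), so no new argument is needed.
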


\section{Lower Bounds}
\label{sec:lower-bounds}
\subsection{Computational Lower Bounds for Agnostic Learning of Submodular Functions}
In this section we show that the existence of an algorithm for agnostically learning even \emph{monotone and symmetric\footnote{
In this context, we call a function $f:\zo^n \rightarrow \R$ symmetric if $f(x)$ depends only on $\sum x_i$.
This is different from the notion of a symmetric set function, which usually means the condition $f(S) = f(\bar{S})$.}
submodular functions} (i.e. concave functions of $\sum x_i$) to an accuracy of any $\epsilon > 0$ in time $n^{o({1}/{\epsilon^{2/3}})}$ would yield a faster algorithm for \textit{learning sparse parities with noise} (SLPN from now) which is a well known and notoriously hard problem in computational learning theory.

We begin by stating the problems of Learning Parities with Noise (LPN) and its variant, learning sparse parities with noise (SLPN). We say that random examples of a function $f$ have noise of rate $\eta$ if the label of a random example equals $f(x)$ with probability $1 - \eta$ and $-f(x)$ with probability $\eta$.
\begin{problem}[Learning Parities with Noise]
For $\eta \in (0,1/2)$, the problem of learning parities with noise $\eta$ is the problem of finding (with probability at least $2/3$) the set $S \subseteq [n]$, given access to random examples with noise of rate $\eta$ of parity function $\chi_S$. For $k\leq n$ the learning of $k$-sparse parities with noise $\eta$ is the same problem with an additional condition that $|S| \leq k$.
\end{problem}
The best known algorithm for the LPN problem with constant noise rate is by \citet{BlumKW:03} and runs in time $2^{O(n/\log{n})}$. The fastest known algorithm for learning $k$-sparse parities with noise $\eta$ is a recent breakthrough result of \citet{ValiantG:12}  which runs in time $O(n^{0.8k} \poly(\frac{1}{1-2\eta}))$.

\citet{KalaiKMS:08} and \citet{Feldman:12jcss} prove hardness of agnostic learning of majorities and conjunctions, respectively, based on correlation of concepts in these classes with parities. In both works it is implicit that if for every set $S \subseteq [n]$, a concept class $\C$ contains a function $f_S$ that has significant correlation with $\chi_S$ (or $\widehat{f_S}(S)$) then learning of parities with noise can be reduced to agnostic learning of $\C$. We now present this reduction in a general form.

% to ultimately prove that any improvements to the algorithm presented in \citep{KKMS05} for learning halfspaces (in fact even for majorities) will lead to a faster algorithm for LPN.
%Here, we give a general procedure from a lower bound on the correlation of a class of bounded functions with the class of short parities to a reduction to SLPN.

\begin{lemma}
\label{lem:cor2lpn}
Let $\C$ be a class of functions mapping $\zo^n$ into $[-1,1]$. Suppose, there exist $\gamma >0$ and $k \in \N$ such that for every $S \subseteq [n]$, $|S| \leq k$, there exists a function, $f_S \in \C$, such that $|\widehat{f_S}(S)|\geq \gamma$. If there exists an algorithm $\A$ that learns the class $\C$ agnostically to accuracy $\epsilon$ in time $T(n, \frac{1}{\epsilon})$ then, there exists an algorithm $\A'$ that learns $k$-sparse parities with noise $\eta \leq 1/2$ in time $\poly(n,\frac{1}{(1-2\eta)\gamma}) + 2 T(n, \frac{2}{(1-2\eta)\gamma})$.
\end{lemma}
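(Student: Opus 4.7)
The reduction treats noisy parity labels as an arbitrary (noisy) target for the agnostic learner $\A$ for $\C$, and then extracts the hidden set $S$ from the learner's hypothesis via a Goldreich--Levin / KM-style search for large Fourier coefficients. I will use the agnostic learner as a black box at error $\eps' = (1-2\eta)\gamma/2$, which matches the $1/\eps'=2/((1-2\eta)\gamma)$ parameter in the theorem's statement.

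\emph{Key identity.} Let $(x,y)$ be a noisy example, so $y=b\,\chi_S(x)$ with $b\in\{-1,1\}$ independent of $x$ and $\Pr[b=1]=1-\eta$, giving $\E[y\mid x]=(1-2\eta)\chi_S(x)$. For any $g:\zon\to[-1,1]$ the elementary equality $|g(x)-y|=1-g(x)y$ (valid because $y\in\{-1,1\}$ and $g(x)\in[-1,1]$) combines with the conditional expectation above to yield
\[
\E[\,|g(x)-y|\,] \;=\; 1 - (1-2\eta)\,\widehat{g}(S).
\]
Plugging $g=f_S\in\C$ into the identity: if $\widehat{f_S}(S)\ge\gamma$ then $\mathrm{opt}(y,\C)\le 1-(1-2\eta)\gamma$, while if $\widehat{f_S}(S)\le-\gamma$ the same bound holds with $-y$ in place of $y$. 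Since the sign is unknown a priori, I would invoke $\A$ twice---once on $(x,y)$-examples and once on $(x,-y)$-examples---which is exactly the source of the $2T$ term in the running time.

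\emph{Extracting $S$.} Running $\A$ at accuracy $\eps'$ on both streams and clipping each output into $[-1,1]$ (clipping cannot increase $\ell_1$-distance from a $\{-1,1\}$-valued target) produces hypotheses $h_1,h_2$; by the identity, at least one of them, call it $h$, must satisfy $|\widehat{h}(S)| \ge \gamma - \eps'/(1-2\eta) = \gamma/2$. Because $h$ is polynomial-time evaluable, I can now run the \citet{KushilevitzMansour:93} algorithm on $h$ to enumerate the list $\mathcal{L}$ of all $T\subseteq [n]$ with $|\widehat{h}(T)|\ge\gamma/4$, in time $\poly(n,1/\gamma)$; Parseval's identity bounds $|\mathcal{L}|\le 16/\gamma^2$ and guarantees $S\in\mathcal{L}$. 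Finally, I sieve $\mathcal{L}$ using fresh noisy examples: the correlation $\E[y\,\chi_T(x)]$ equals $1-2\eta$ for $T=S$ and $0$ otherwise, so $\poly(n,1/((1-2\eta)\gamma))$ samples and a Chernoff bound suffice to identify $S$ with probability $\ge 2/3$. Standard confidence amplification over a constant number of independent trials handles the overall failure probability.

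\emph{Main subtleties.} The heart of the argument is the identity above, and its only real delicacy is that it requires the argument to be bounded into $[-1,1]$---hence the clipping step, which is essential for turning $\A$'s $\ell_1$ guarantee into a lower bound on $|\widehat{h}(S)|$. The other point worth flagging is the unknown sign of $\widehat{f_S}(S)$, which is what forces the two invocations of $\A$ and is transparently reflected in the theorem's $2T$ term; the remaining machinery (KM, Chernoff, confidence boosting) is routine.
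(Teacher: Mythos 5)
Your proof is correct and follows essentially the same reduction as the paper: the identity $\E_{(x,y)\sim\mathcal{N}^\eta}[|g(x)-y|]=1-(1-2\eta)\widehat{g}(S)$, two invocations of $\A$ on $(x,y)$ and $(x,-y)$ to handle the unknown sign of $\widehat{f_S}(S)$, deducing $|\widehat{h}(S)|\geq\gamma/2$ for one of the returned hypotheses, then a Goldreich--Levin/Kushilevitz--Mansour search followed by a Chernoff-bound sieve over fresh noisy examples. Your explicit clipping step for $h$ is a small but genuine refinement that the paper's writeup leaves implicit (and you also have the sign of the key identity right, where the paper's displayed algebra has a typo, though its conclusion is the same).
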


\begin{proof}
Let $\chi_S$ be the target parity with $|S| \leq k$. We run algorithm $\A'$ with $\epsilon = (1-2\eta) \gamma/2$ on the noisy examples and let $h$ be the hypothesis it outputs. We also run algorithm $\A'$ with $\epsilon = (1-2\eta) \gamma/2$ on the negated noisy examples and let $h'$ be the hypothesis it outputs.

Now let $f_S \in \C$ be the function such that $|\widehat{f_S}(S)|\geq \gamma$. Assume without loss of generality that $\widehat{f_S}(S) \geq \gamma$ (otherwise we will use the same argument on the negation of $f_S$). Let $\cal{N}^\eta$ denote the distribution over noisy examples.

For any function $f: \zo^n\rightarrow [-1,1]$,
\alequ{
\E_{(x,y) \sim \cal{N}^\eta} [ |f(x) - y| ] &= (1-\eta) \E_{x \sim \U} [ |f(x) - \chi_S(x)| ] + \eta \E_{x \sim \U} [ |f(x) + \chi_S(x)|] \nonumber \\
&= (1-\eta) \E_{x \sim \U} [ \chi_S(x) (\chi_S(x) - f(x)) ] + \eta \E_{x \sim \U} [ \chi_S(x) (\chi_S(x) + f(x)) \nonumber \\ &= 1 + (1 - 2\eta) \hat{f}(S) \label{eq:agn-error}.}

This implies that $$\E_{(x,y) \sim \cal{N}^\eta} [ |f_S(x) - y| ] = 1 + (1 - 2\eta) \widehat{f_S}(S) \geq 1+(1-2\eta)\gamma .$$

By the agnostic property of $\A$ with $\epsilon = (1-2\eta) \gamma/2$, the returned hypothesis $h$ must satisfy $$\E_{(x,y) \sim \cal{N}^\eta} [ |h(x) - y| ]   \geq 1+(1-2\eta)\gamma - (1-2\eta)\gamma/2 \geq 1+ (1-2\eta)\gamma/2.$$
By equation (\ref{eq:agn-error}) this implies that $\hat{h}(S) \geq \gamma/2$.

We can now use the algorithm of \citet{GoldreichLevin:89} (or a similar one) algorithm to find all sets with a Fourier coefficient of at least $\gamma/4$ (with accuracy of $\gamma/8$). This can be done in time polynomial in $n$ and $1/\gamma$ and will give a set of coefficients of size at most $O(1/\gamma^2)$ which contains $S$. By testing each coefficient in this set on $O( (1-2\eta)^{-2} \log{(1/\gamma)})$ random examples and choosing the one with the best agreement we find $S$.
\end{proof}

We will now show that there exist monotone symmetric submodular functions that have high correlation with the parity functions (the proof is in Appendix \ref{sec:prove-correlation}).
\begin{lemma}[Correlation of Monotone Submodular Functions with Parities]
\label{lem:corr-parity-submod}
Let $S \subseteq [n]$ such that $|S| = s$ for some $s \in [n]$. Then,
% there exists a symmetric submodular function $R_S: \on^n \rightarrow [0,1]$ such that $R_S$ depends only on coordinates in $S$ and $|\langle \chi_S, R_S \rangle| = \Omega(s^{-3/2})$. Further,
there exists a \emph{monotone} symmetric submodular function $H_S:\zo^n \rightarrow [0,1]$ such that $H_S$ depends only on coordinates in $S$ and $|\langle \chi_S, H_S \rangle| = \Omega(s^{-3/2})$. \label{corsubmodular}
\end{lemma}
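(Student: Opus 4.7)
The construction I have in mind is a scaled uniform-matroid rank function over $S$. Concretely, set
\[ H_S(x) = \frac{1}{m}\min\!\left(\sum_{i \in S} x_i,\; m\right), \qquad m = \lceil s/2 \rceil. \]
This is a concave, non-decreasing function of the Hamming weight $w = \sum_{i\in S} x_i$, so by inspection $H_S$ is symmetric in the coordinates of $S$, monotone, takes values in $[0,1]$, and depends only on $S$. Submodularity follows from the standard equivalence: for $h$ concave, $\partial_{i,j} H_S(x) = h(w+2) - 2h(w+1) + h(w) \leq 0$ for every $i\neq j \in S$, while for $i \in S, j \notin S$ (or vice versa) the derivative is trivially zero. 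Thus the only real work is in lower-bounding the Fourier coefficient $\langle \chi_S, H_S\rangle$.

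Since $H_S$ depends only on $x_S$ through $w$, the inner product collapses to a one-dimensional alternating binomial sum:
\[ \langle \chi_S, H_S\rangle \;=\; \frac{1}{m\, 2^s} \sum_{k=0}^s (-1)^k\binom{s}{k}\min(k,m). \]
The plan is to evaluate this in closed form. I would decompose $\min(k,m) = k - (k-m)_+$; the linear part $\sum_k (-1)^k\binom{s}{k}k$ vanishes for $s \geq 2$ (it is the derivative of $(1-x)^s$ at $x=1$). For the hinge part $\sum_k (-1)^k\binom{s}{k}(k-m)_+ = \sum_{k=m+1}^s (-1)^k\binom{s}{k}(k-m)$, I would use $k\binom{s}{k} = s\binom{s-1}{k-1}$ together with the classical partial-sum identity $\sum_{k=0}^t (-1)^k\binom{n}{k} = (-1)^t\binom{n-1}{t}$, after which the two resulting terms collapse (via $s(s-2)! - (s-1)! = (s-2)!$) to $\pm\binom{s-2}{m-1}$. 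The upshot is
\[ |\langle \chi_S, H_S\rangle| \;=\; \frac{1}{m\, 2^s}\binom{s-2}{m-1}. \]

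With $m = \lceil s/2 \rceil$, Stirling gives $\binom{s-2}{m-1} = \Theta(2^{s-2}/\sqrt{s})$, so $|\langle \chi_S, H_S\rangle| = \Theta(1/(m\sqrt{s})) = \Omega(s^{-3/2})$, which is the claim. The tiny cases $s \in \{0,1,2\}$ can be checked by hand (e.g.\ for $s=1$ take $H_S = x_i$ with correlation $\tfrac12$). The main obstacle is the closed-form evaluation of the alternating binomial sum; everything else is either a routine verification of submodularity or an asymptotic estimate. A conceptual sanity check: the unnormalized rank function $\min(w,m)$ would give correlation of order $s^{-1/2}$ (reflecting the width $\sqrt{s}$ of the central bump of the binomial), and the loss of an extra $s^{-1}$ factor is exactly the cost of rescaling the range into $[0,1]$ by dividing by $m = \Theta(s)$ — this is why we end up with $s^{-3/2}$ rather than $s^{-1/2}$.
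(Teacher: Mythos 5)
Your proof is correct, and the construction is essentially the one in the paper: the final monotone function is (up to a slightly different cutoff choice for odd $s$) exactly the paper's $H_S$, a scaled cap of the Hamming weight over $S$. The one genuine difference is in how the correlation is evaluated. The paper first introduces the non-monotone ``tent'' function $R_S$ (linear up, linear down), computes $\langle R_S,\chi_S\rangle$ via the alternating partial-sum identity $\sum_{j\le r}(-1)^j\binom{n}{j}=(-1)^r\binom{n-1}{r}$, and then observes that $H_S=\tfrac12\bigl(R_S+\text{linear}\bigr)$, with the linear part contributing nothing to the parity coefficient. You instead write $\min(k,m)=k-(k-m)_+$ and evaluate $\langle H_S,\chi_S\rangle$ directly, using the same partial-sum identity plus $k\binom{s}{k}=s\binom{s-1}{k-1}$ to telescope the hinge part. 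This avoids the auxiliary tent function and yields the cleaner closed form $|\langle H_S,\chi_S\rangle|=\binom{s-2}{m-1}/(m\,2^s)$ in one stroke; the paper's route requires carrying two cases ($s$ even/odd) through a slightly longer chain of manipulations. Both arguments rest on the same two facts (the alternating partial-sum identity and the vanishing of the linear part's parity coefficient), so I would call this the same approach with a cleaner bookkeeping of the sum rather than a different proof. Your closing sanity check — that the $s^{-1/2}$ from the central-binomial estimate of $\binom{s-2}{m-1}/2^s$ degrades to $s^{-3/2}$ exactly because of the $1/m=\Theta(1/s)$ rescaling needed to fit the range into $[0,1]$ — is a nice conceptual gloss that the paper does not spell out.
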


Combining this result with Lemma \ref{lem:cor2lpn}, we now obtain the following reduction of SLPN to agnostically learning monotone submodular functions:
\begin{theorem}[Theorem~\ref{th:agnostic-hardness-intro} restated]
If there exists an algorithm that agnostically learns all monotone submodular functions with range $[0,1]$ to $\ell_1$ error of $\eps>0$ in time $T(n,1/\eps)$ then there exists an algorithm that learns $(\eps^{-2/3})$-sparse parities with noise of rate $\eta < 1/2$ in time $\poly(n,1/(\eps(1-2\eta))) + 2T(n, c/\eps(1-2\eta))$ for some fixed constant $c$.
\end{theorem}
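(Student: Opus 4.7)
The plan is to deduce this theorem by a direct composition of Lemma~\ref{lem:corr-parity-submod} (correlation of monotone submodular functions with parities) with the generic reduction in Lemma~\ref{lem:cor2lpn}. The only quantitative choice to make is the target sparsity: I will set $k = \lceil \eps^{-2/3} \rceil$, since this is exactly the value for which the correlation exponent $k^{-3/2}$ provided by Lemma~\ref{lem:corr-parity-submod} matches the desired agnostic accuracy $\eps$.

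Concretely, for every $S \subseteq [n]$ with $|S| \leq k$, Lemma~\ref{lem:corr-parity-submod} yields a monotone submodular function $H_S : \zon \to [0,1]$ depending only on coordinates in $S$ and satisfying $|\widehat{H_S}(S)| \geq c_0 \, |S|^{-3/2} \geq c_0 \, k^{-3/2} \geq c_1 \eps$ for absolute constants $c_0, c_1 > 0$. Since $[0,1] \subset [-1,1]$, the class $\C$ of all monotone submodular $[0,1]$-valued functions satisfies the hypothesis of Lemma~\ref{lem:cor2lpn} with the parameter pair $(k,\gamma) = (\eps^{-2/3},\, c_1 \eps)$. Invoking Lemma~\ref{lem:cor2lpn} then produces an algorithm for $k$-sparse parity learning with noise rate $\eta < 1/2$ whose running time is $\poly(n, 1/((1-2\eta)\gamma)) + 2 T(n, 2/((1-2\eta)\gamma))$. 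Substituting $\gamma = c_1 \eps$ and taking $c := 2/c_1$ gives exactly the theorem's bound $\poly(n, 1/(\eps(1-2\eta))) + 2T(n, c/(\eps(1-2\eta)))$.

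No real obstacle arises; the proof is essentially arithmetic once the two underlying lemmas are in hand. The one minor technicality to note is that the agnostic $\ell_1$-learner is defined on $[0,1]$-valued target labels, whereas the reduction in the proof of Lemma~\ref{lem:cor2lpn} uses $\pm 1$ noisy labels. This mismatch is bridged by the standard affine shift $y \mapsto (y+1)/2$ applied to the noisy labels: the resulting $\{0,1\}$-valued target is in the agnostic learner's domain, and an analogous computation to equation (\ref{eq:agn-error}) shows that $\ell_1$-error on the shifted target translates into the Fourier coefficient $\hat{h}(S)$ up to a factor of $(1-2\eta)$, which is absorbed into the constant $c$. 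Running the learner both on the shifted labels and on their complement $1 - y$ then covers both signs of $\widehat{H_S}(S)$, after which Goldreich--Levin recovers $S$ as in the original lemma.
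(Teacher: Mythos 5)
Your proposal is correct and follows the same route as the paper's own proof: instantiate Lemma~\ref{lem:cor2lpn} with the correlation functions from Lemma~\ref{lem:corr-parity-submod}, choose $k = \lceil \eps^{-2/3} \rceil$ so that the $k^{-3/2}$ correlation bound becomes $\Omega(\eps)$, and absorb the constants into $c$. Your added remark about bridging the $[0,1]$-valued hypothesis class against the $\pm 1$ labels via the affine shift $y \mapsto (y+1)/2$ is a legitimate technicality that the paper glosses over (its Lemma~\ref{lem:cor2lpn} is stated for $[-1,1]$-valued classes while the theorem concerns $[0,1]$-valued submodular functions), and your fix is the standard one; you also correctly use $H_S$ (the monotone construction) throughout, whereas the paper's proof text momentarily writes $R_S$.
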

\begin{proof}
Consider all the monotone submodular functions $R_S$ for every $S \subseteq [n]$, $|S| \leq k=\eps^{-2/3}$. Then, $|\langle \chi_S, H_S \rangle | = \Omega(k^{-3/2}) = \Omega(\epsilon)$ by Lemma \ref{corsubmodular}. Thus, using $\gamma = \Omega(\epsilon)$ in Lemma \ref{lem:cor2lpn} we obtain the claim.
\end{proof}

\subsection{Information-Theoretic Lower Bound for PAC-learning Submodular Functions}
In this section we show that any algorithm that PAC-learns monotone submodular functions to accuracy $\epsilon$ must use $2^{\Omega(\epsilon^{-2/3})}$ examples. The idea is to show that the problem of learning the class all boolean functions on $k$ variables to any constant accuracy can be reduced to the problem of learning submodular functions on $2t = k + \lceil \log{k} \rceil + O(1)$ variables to accuracy $O(\frac{1}{t^{3/2}})$. Any algorithm that learns the class of all boolean functions on $k$ variables to accuracy $1/4$ requires at least $\Omega(2^k)$ bits of information. In particular at least that many random examples or value queries are necessary.
%Notice that for self-bounding functions our embedding of juntas in Lemma \ref{lem:kdnf-in-self-bound} immediately implies a slightly stronger bound of $2^{\Omega(1/\epsilon)}$. Here we show that a similar bound can be achieved even for monotone submodular functions.

%\begin{definition}[$k$-Junta]
%A boolean function $f:\on^n \rightarrow \zo$ is said to be a $k$-Junta if $f$ only depends on a set $S \subseteq [n]$ of variables of size $|S| \leq k$.
%\end{definition}

Before we go on the present the reduction, we need to make a quick note regarding a slight abuse of notation: In the lemma below, we will encounter uniform distributions on hypercubes of two different dimensions. We will, however, still represent uniform distributions on either of them by $\U$ (with the meaning clear from the context).

\begin{lemma}
Let $f:\zo^k \rightarrow \zo$ be any boolean function. Let $t > 0$ be such that ${{2t} \choose t} \geq 2^k > {{2t-2} \choose {t-1}}$ (thus $4\cdot 2^k > {{2t} \choose t} \geq 2^k$). There exists a monotone submodular function $h:\zo^{2t} \rightarrow [0,1]$ such that:
\begin{enumerate}
\item $h$ can be computed at any point $x \in \zo^{2t}$ in at most a single query to $f$ and in time $O(t)$.
\item Let $\alpha = \frac{2^k\cdot \sqrt{t}}{2^{2t}} = \theta(1)$. Given any function $g:\zo^{2t} \rightarrow \R$ that approximates $h$, that is, $\E_{x \sim \U} [|h(x) - g(x)|] \leq \alpha \cdot \frac{\epsilon}{8t^{3/2}}$, there exists a boolean function $\tilde{f}:\zo^k \rightarrow \zo$ such that $\E_{x \sim \U} [ |\tilde{f}(x) - f(x)|] \leq \epsilon$ and $\tilde{f}$ can be computed at any point $x \in \zo^k$, with a single query to $g$ and in time $O(t)$.
\end{enumerate}\label{embed}
\end{lemma}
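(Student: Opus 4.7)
The plan is to embed $f$ into the middle layer $L_t := \{x \in \zo^{2t} : |x| = t\}$ as a tiny additive perturbation of a concave function of Hamming weight. Since $|L_t| = \binom{2t}{t} \geq 2^k$, fix an injection $\sigma : \zo^k \to L_t$ such that both $\sigma$ and its partial inverse are computable in $O(t)$ time (for instance via ranking in the combinatorial number system). Define a sequence $a_0 \le a_1 \le \cdots \le a_{2t}$ by taking $a_w = w/(2t)$ whenever $|w - t| \ge 2$ and, on the window $\{t-2, t-1, t, t+1, t+2\}$, using consecutive increments $\tfrac{3}{4t}, \tfrac{1}{2t}, \tfrac{1}{2t}, \tfrac{1}{4t}$. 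Then $a$ is monotone, $a_{2t} = 1$, and $-\Delta^2 a_{t-2} = -\Delta^2 a_t = \tfrac{1}{4t}$ while $-\Delta^2 a_w \ge 0$ elsewhere. With $\beta := 1/(4t)$, define
\[
h(x) \;:=\; a_{|x|} \;+\; \beta \cdot \mathbf{1}[x \in \sigma(\zo^k)] \cdot f(\sigma^{-1}(x)).
\]

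For part (1): flipping a $0$ to a $1$ at level $w$ increments $a_{|x|}$ by a difference $\geq 1/(4t) \ge \beta$, so the $\{0,\beta\}$ indicator contribution can never make $h$ decrease; hence $h$ is monotone. For submodularity, expand $\partial_{ij} h(y) = \Delta^2 a_{|y|} + \beta \cdot (\text{signed indicator combination on the four corners of the $\{i,j\}$-face})$ and case-split on $|y|$. The correction term vanishes unless the face meets $L_t$, which happens only for $|y| \in \{t-2, t-1, t\}$. At $|y| = t-1$ it is already $\le 0$, while at $|y| \in \{t-2, t\}$ it is at most $+\beta$, precisely absorbed by our choices $-\Delta^2 a_{t-2}, -\Delta^2 a_t = \beta$. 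The range of $h$ lies in $[0,\, 1/2 + 1/(2t)] \subseteq [0,1]$, and evaluating $h$ requires $|x|$, membership testing in $\sigma(\zo^k)$, and at most one query to $f$, all in $O(t)$ time.

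For part (2), define the decoder
\[
\tilde f(z) \;:=\; \mathbf{1}\!\left[\,g(\sigma(z)) > a_t + \beta/2\,\right],
\]
computable with one query to $g$ in $O(t)$ time. Since $h(\sigma(z)) = a_t + \beta f(z) \in \{a_t,\ a_t + \beta\}$, we have $\tilde f(z) = f(z)$ whenever $|g(\sigma(z)) - h(\sigma(z))| < \beta/2$. Writing $\eta := \alpha \eps /(8 t^{3/2})$ for the assumed $\ell_1$ error and applying Markov's inequality restricted to $\sigma(\zo^k)$ gives at most $2\eta \cdot 2^{2t}/\beta$ bad $z$; dividing by $2^k$ and substituting the identity $2^k/2^{2t} = \alpha/\sqrt t$ and $\beta = 1/(4t)$ yields $\Pr_z[\tilde f(z) \neq f(z)] \le \eps$, which is exactly the claim.

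The main technical obstacle is the joint optimization between the decoder's error budget and the submodularity constraint: $\beta$ must be as large as $\Theta(1/t)$ to tolerate an $\ell_1$ error of order $\alpha\eps/(8 t^{3/2})$, yet submodularity forces $\beta \le \min\!\bigl(-\Delta^2 a_{t-2},\, -\Delta^2 a_t\bigr)$ with $a$ monotone, concave, and bounded in $[0,1]$. A uniformly strictly concave base such as $a_w = w/t - w^2/(4t^2)$ only supports $\beta = \Theta(1/t^2)$, which is too small; concentrating the curvature into a constant-radius window around $t$, as above, is exactly what pushes $\beta$ up to the required $\Theta(1/t)$. Everything else---the submodularity case analysis, boundedness, and the Markov-style decoding argument---is routine.
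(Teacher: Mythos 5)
Your overall strategy is the same as the paper's: embed $f$ on the middle layer of the hypercube, perturb a concave monotone symmetric base by $\Theta(1/t)$, decode by thresholding at the midpoint, and close with a Markov-style counting argument. The decoder and the Markov calculation at the end are correct. However, there is a genuine error in the construction of the base function $a$, and because of it the $h$ you define is \emph{not} submodular.

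Concretely, you claim $-\Delta^2 a_w \ge 0$ everywhere except at $w \in \{t-2,t\}$, but your increments are $\ldots,\tfrac{1}{2t},\tfrac{1}{2t},\tfrac{3}{4t},\tfrac{1}{2t},\tfrac{1}{2t},\tfrac{1}{4t},\tfrac{1}{2t},\tfrac{1}{2t},\ldots$, which is not a non-increasing sequence: the increment rises from $\tfrac{1}{2t}$ (at $w=t-3$) to $\tfrac{3}{4t}$ (at $w=t-2$), and from $\tfrac{1}{4t}$ (at $w=t+1$) back up to $\tfrac{1}{2t}$ (at $w=t+2$). Hence $\Delta^2 a_{t-3} = \tfrac{3}{4t}-\tfrac{1}{2t} = \tfrac{1}{4t} > 0$ and $\Delta^2 a_{t+1}=\tfrac{1}{4t}>0$, so for any $y$ with $|y|=t-3$ (or $t+1$) and $y_i=y_j=0$, $\partial_{ij}h(y)=\Delta^2 a_{|y|}>0$ with no correction term available, violating submodularity. (Relatedly, the stated range $[0,\tfrac12+\tfrac{1}{2t}]$ is inconsistent with $a_{2t}=1$; harmless, but a sign of the increments not summing correctly.) The fix is easy and does not disturb the rest of your argument: choose a genuinely non-increasing increment sequence with the two required second-difference jumps, e.g.\ $\delta_w=\tfrac{1}{2t}$ for $w\le t-2$, $\delta_{t-1}=\delta_t=\tfrac{1}{4t}$, and $\delta_w=0$ for $w\ge t+1$. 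Then $a$ is concave with $a_{2t}=\tfrac12$, $-\Delta^2 a_{t-2}=-\Delta^2 a_t=\tfrac{1}{4t}=\beta$, and the minimum increment on the relevant edge ($\delta_t$) is $\ge\beta$, so both monotonicity and submodularity of $h$ go through; the decoder and Markov bound are unchanged. For comparison, the paper's construction subtracts $\tfrac{1}{2t}$ from the middle layer (rather than adding), which makes the perturbed corners appear with negative sign in $\partial_{ij}$ only when $|y|=t-1$, so only a single level of strict concavity (at $w=t-1$) is needed; this permits the simpler base $\min(w/t,1)$ and a slightly larger gap of $\tfrac{1}{4t}$ for the threshold decoder.
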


\begin{proof}
We first give a construction for the function $h$. It will be convenient first to define another function $\tilde{h}:\zo^{2t} \rightarrow [0,1]$ and then modify it to obtain $h$. Recall that for any $x$ and $S \subseteq [2t]$, $w_S(x) = \sum_{ i \in S} \frac{1}{2}(x_i+1)$. The function $\tilde{h}$ would be the same as the function $H_S$ defined in the proof of Lemma \ref{corsubmodular}.

\[
\tilde{h}(x)=\left\{\begin{array}{cl}
	w_{[2t]}(x)/t & w_{[2t]}(x) \leq t  \\
          1  & w_{[2t]}(x) > t\\
	   \end{array}\right.
\]

We will now define $h$ using $\tilde{h}$ and $f$. The key idea is that even if we lower the value of $\tilde{h}$ at any $x$ with $w_{[2t]} (x) = k$ by $\frac{1}{2t}$, the resulting function remains submodular. Thus, we embed the boolean function $h$ by modifying the values of $\tilde{h}$ at only the points in the middle layer ($w_{[2t]}(x) = t$).

Let $s = {{2t} \choose t}$. Let $M_{2t} = \{x  \in \zo^{2t}\mid w_{[2t]}(x) = t\}$ and $M_{k} = \{y \in \zo^k \}$ and $s \geq 2^k$. Let $\beta:M_{k} \rightarrow M_{2t}$ be an injective map of $M_{k}$ into $M_{2t}$ such that both $\beta$ and $\beta^{-1}$ (whenever it exists) can be computed in time $O(t)$ at any given point. Such a map exists, as can be seen by imposing lexicographic ordering on $M_{2t}$ and $M_{k}$ and defining $\beta(x)$ for $x \in M_{2t}$ to be the element in $M_{k}$ with the same position in the ordering as that of $x$. For each $x \in \zo^{2t}$, let $h$ be defined by:

\[
h(x)=\left\{\begin{array}{cl}
	\tilde{h}(x) & w_{[2t]}(x) \neq t  \\
          (1-\frac{1}{2t}) & w_{[2t]}(x) = t \text{, } \beta^{-1}(x) \text{ exists and } f(\beta^{-1}(x)) = 0  \\
          1 & w_{[2t]}(x) = t \text{, } \beta^{-1}(x) \text{ exists and } f(\beta^{-1}(x)) = 1 \\
          1 & \text{ otherwise }\\
	   \end{array}\right.
\]

Notice that given any $x \in \zo^{2t}$ the value of $h(x)$ can be computed by a single query to $f$. Further, observe that $\tilde{h}$ is monotone and $h$ is obtained by modifying $\tilde{h}$ only on points in $M_{2t}$ and by  at most $\frac{1}{2t}$, which ensures that for any $x \leq y$ such that $w_{[2t]}(x) < w_{[2t]}(y)$, $h(x) \leq h(y)$. Moreover, $M_{2t}$ forms an antichain in the partial order on $\zo^n$ and thus no two points in $M_{2t}$ are comparable. This proves that $h$ is monotone. \\
Suppose, now that $g:\zo^{2t} \rightarrow \R$ is such that $\E_{x \sim \U} [|h(x) -g(x)|] \leq \alpha \cdot \frac{\epsilon}{8t^{3/2}}$.

Define $g_b:\zo^{2t} \rightarrow \zo$ so that $$\forall x \in \zo^{2t} \text{, } g_b(x) = sign\left( g(x)  - (1-(1/4t))\right).$$ Finally, let $\tilde{f}:\zo^k \rightarrow \zo$ be such that for every $x \in \zo^k$ $\tilde{f}(x) = g_b(\beta(x))$.

Now,
$\E_{x \sim \U} [|\tilde{f}(x) - f(x)|] = 2 \Pr_{x \sim \U} [ \tilde{f}(x) \neq f(x)]$. For any $x \in \zo^k$,
$$ \tilde{f}(x) \neq f(x) \Leftrightarrow |g(\beta(x))-h(\beta(x))| \geq \frac{1}{4t}. $$
 Using that $\Pr_{y \sim \U} [ \beta^{-1}(y) \text{ exists }] = \frac{\alpha}{\sqrt{t}}$, we have: \begin{align*}
 \E_{y \sim \U}[|g(y) - h(y)|] &\geq \frac{1}{4t} \Pr_{y \sim \U}[ \beta^{-1}(y) \text{ exists and }\tilde{f}(\beta^{-1}(y) \neq f(\beta^{-1}(y)] \\&= \frac{1}{8t} \frac{\alpha}{\sqrt{t}} \E_{x \sim \U} [|\tilde{f}(x) - f(x)|].
 \end{align*}

Using $\E_{y \sim \U}[|g(y) - h(y)|] \leq \alpha \cdot \frac{\epsilon}{8\cdot (t)^{3/2}}$, we have: $ \E_{x \sim \U} [|\tilde{f}(x) - f(x)|] \leq \epsilon$.

Finally, we show that $h$ is submodular for any boolean function $f$. It will be convenient to switch notation and look at input $x$ as the indicator function of the set $S_x= \{ x_i \mid x_i = 1\}$. We will verify that for each $S \subseteq [n]$ and $i,j \notin S$,
\begin{equation}
h(S \cup \{i\}) - h(S) \geq h(S \cup \{i,j\})  - h(S \cup \{j\}).\label{submodularity}
 \end{equation}
Notice that $\tilde{h}$ is submodular, and $h = \tilde{h}$ on every $x$ such that $w_{[2t]}(x) \neq t$. Thus, we only need to check Equation \eqref{submodularity} for $S, i, j$ such that $|S| \in \{ t-2, t-1, t\}$.
We analyze these $3$ cases separately:
\begin{enumerate}
\item
 $\bm{|S| = t-1:}$ Notice that $h(S) = \tilde{h}(S) = 1-(1/t)$ and $h(S \cup \{i, j\}) = \tilde{h}( S \cup \{i, j\}) = 1$. Also observe that for any $f$, $h(S \cup \{i\})$ and $h(S \cup \{j\})$ are at least $(1-\frac{1}{2t})$. Thus, $h(S \cup \{i\}) + h(S \cup \{j\}) \geq 2 - \frac{1}{t} = h(S) + h(S \cup \{i,j\})$.

 \item
 $\bm{|S| = t-2:}$ In this case, $h(S) = (1-(2/t))$ and $h(S \cup \{i\}) = h(S \cup \{j\}) = (1-(1/t))$. In this case, the maximum value for any $f$, of $h(S \cup \{i,j\}) = 1$. Thus, $$h(S) +  h(S \cup \{i,j\})  \leq 2 - (2/t) = h(S \cup \{i\}) + h(S \cup \{j\}).$$
 \item
$\bm {|S| = t:}$ Here, $h(S \cup \{i\}) = h(S \cup \{j\}) =  h(S \cup \{i,j\}) = 1$. The maximum value of $h(S)$ for any $f$ is $1$. Thus, $$h(S) +  h(S \cup \{i,j\})  \leq 2 = h(S \cup \{i\}) + h(S \cup \{j\}).$$
 \end{enumerate}
This completes the proof that $h$ is submodular.
\end{proof}

We now have the following lower bound on the running time of any learning algorithm (even with value queries) that learns monotone submodular functions.

\begin{theorem}[Theorem~\ref{th:PAC-hardness-intro} restated]
Any algorithm that PAC learns all monotone submodular functions with range $[0,1]$ to $\ell_1$ error of $\eps>0$ requires $2^{\Omega(\eps^{-2/3})}$ value queries to $f$.
\end{theorem}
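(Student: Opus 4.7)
The plan is to derive the bound by reduction from the well-known fact that learning the class of all Boolean functions on $k$ variables to $\ell_1$-error at most $1/4$ requires $\Omega(2^k)$ examples (or value queries): a sample of size $o(2^k)$ leaves an exponential number of function values completely undetermined, so a standard information-theoretic argument shows no learner can succeed. This is true even when queries are permitted, since each query reveals at most one bit of information about the unknown truth table.

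The heart of the argument will be Lemma~\ref{embed}, which embeds an arbitrary Boolean function $f:\zo^k \to \zo$ into a monotone submodular function $h:\zo^{2t} \to [0,1]$ on $2t = k + O(\log k)$ variables such that (i) a value query to $h$ can be simulated by at most one value query to $f$ plus $O(t)$ time, and (ii) any $g:\zo^{2t} \to \R$ that $\ell_1$-approximates $h$ within $\alpha\cdot \eps/(8t^{3/2})$ can be converted (at the cost of one query to $g$ per query to $\tilde f$) into $\tilde f:\zo^k \to \zo$ with $\E_\U[|\tilde f - f|] \leq \eps$, where $\alpha = \Theta(1)$.

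Given this, I would argue as follows. Suppose there were a learner $\A$ that PAC-learns monotone submodular $[0,1]$-valued functions to $\ell_1$-error $\eps'$ using $q(n, 1/\eps')$ value queries. To learn an arbitrary $f:\zo^k \to \zo$ to $\ell_1$-error $\eps = 1/4$, choose $t$ so that $\binom{2t}{t} \geq 2^k > \binom{2t-2}{t-1}$, which gives $k = 2t - \tfrac{1}{2}\log t - \Theta(1)$ and hence $t = \Theta(k)$. Set $\eps' = \alpha \cdot \eps/(8 t^{3/2}) = \Theta(t^{-3/2})$. Run $\A$ on the submodular $h$ from Lemma~\ref{embed}, simulating each of its value queries by one query to $f$; then convert the output to $\tilde f$ as in the lemma. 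By the guarantee of the lemma, $\tilde f$ is an $\eps$-approximator of $f$, and the total number of value queries to $f$ is at most $q(2t, 1/\eps') = q(O(k), O(k^{3/2}))$. The information-theoretic lower bound then forces $q(O(k), O(k^{3/2})) = 2^{\Omega(k)}$.

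Finally, I would rephrase this bound in terms of $\eps'$ alone. Setting $n = 2t$ and $\eps = \eps'$ in the statement of the theorem, $1/\eps' = \Theta(t^{3/2})$ gives $t = \Theta(\eps'^{-2/3})$ and therefore $k = \Theta(\eps'^{-2/3})$, so any learner for monotone submodular functions to $\ell_1$-error $\eps'$ must make $2^{\Omega(\eps'^{-2/3})}$ value queries, as claimed. The main technical work was already carried out in Lemma~\ref{embed} (constructing the submodular embedding that preserves enough information in the middle layer without breaking monotonicity or submodularity); the step remaining here is a clean parameter balancing, and the only mild obstacle is being careful that the dimension $n = 2t$ in the lower bound corresponds to the correct scaling in $\eps'$, which the choice $t = \Theta(\eps'^{-2/3})$ ensures.
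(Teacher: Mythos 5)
Your proposal is correct and takes essentially the same route as the paper: both invoke Lemma~\ref{embed} to embed an arbitrary Boolean function on $k$ variables into a monotone submodular function on $2t = k + O(\log k)$ variables, then use the $\Omega(2^k)$ information-theoretic lower bound for learning all Boolean functions on $k$ variables to constant error, and finally balance parameters by setting $k = \Theta(\eps^{-2/3})$. The only difference is presentational: you spell out the parameter bookkeeping ($t = \Theta(k)$, $\eps' = \Theta(t^{-3/2})$) a bit more explicitly than the paper does.
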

\begin{proof}
We borrow notation from the statement of Lemma \ref{embed} here. Given an algorithm that PAC learns monotone submodular functions on $2t$ variables, we describe how one can obtain a learning algorithm for all boolean function on $k$ variables with accuracy $1/4$.
Given an access to a boolean function $f:\zo^k \rightarrow \zo$, we can translate it into an access to a submodular function $h$ on $2t$ variables with an overhead of at most $O(t) = O(k)$ time using Lemma \ref{embed}. Using the PAC learning algorithm, we can obtain a function $g:\zo^{2t} \rightarrow \R$ that approximates $h$ within an error of at most $\alpha \cdot \frac{1}{8t^{3/2}}$ and Lemma \ref{embed} shows how to obtain $\tilde{f}$ from $g$ with an overhead of at most $O(t) = O(k)$ time such that $\tilde{f}$ approximates $f$ within $\frac{1}{4}$.
Choose $k =  \lceil \epsilon^{-2/3} \rceil$ and $t$ as described in the statement of Lemma \ref{embed}.
Now, using any algorithm that learns monotone submodular functions to an accuracy of $\epsilon >0$ we obtain an algorithm that learns all boolean functions on $k = \lceil \epsilon^{-2/3} \rceil$ variables to accuracy $1/4$.
\end{proof}

%\bibliography{allrefs}

\appendix

\section{Attribute-efficient Agnostic Learning}
\label{app:att-eff-km}
In this section we give attribute-efficient versions of two agnostic learning algorithms: (1) the $\ell_2$-error agnostic learning of functions with low spectral $\ell_1$-norm and (2) $\ell_1$-error agnostic learning of (real-valued) decision trees. The algorithms are obtained using a simple combination of existing techniques with attribute-efficient weak agnostic parity learning from \citep{Feldman:07jmlr}. For the first algorithm we are not aware of published details of the analysis even without the attribute-efficiency.

We first state the attribute-efficient weak agnostic parity learning from \citep{Feldman:07jmlr}.
\begin{theorem}
\label{th:aewp}
There exists an algorithm $\WP$, that given an integer $d$, $\theta > 0$ and $\delta \in (0,1]$, access to value queries of any $f:\zon \rightarrow [-1,1]$ such that $|\hat{f}(S)| \geq \theta$ for some $S$, $|S| \leq d$, with probability at least $1-\delta$, returns $S'$, such that $|\hat{f}(S')| \geq \theta/2$ and $|S'| \leq d$. $\WP(d,\theta,\delta)$ runs in $\tilde{O}\left(nd^2\theta^{-2} \logd\right)$ time and asks $\tilde{O}\left(d^2\log^2{n}\cdot \theta^{-2} \logd\right)$ value queries.
\end{theorem}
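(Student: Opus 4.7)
The plan is to combine the Goldreich--Levin/Kushilevitz--Mansour framework for identifying a large Fourier coefficient with a random bucketing step that reduces the effective dimension of the problem from $n$ down to $\poly(d, \log(n/\delta))$. This bucketing is what makes the algorithm attribute-efficient; once we are working on a small number of variables, any standard heavy-coefficient-finding routine is already within the stated bounds.

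First I would hash the coordinates $[n]$ uniformly into $m = \Theta(d^2 \log(n/\delta))$ buckets via a random function $b:[n] \to [m]$. A birthday/union-bound argument shows that with probability $\geq 1-\delta/4$ the (unknown) set $S$ of size $\leq d$ contains at most one coordinate per bucket, so $b$ restricted to $S$ is injective. I would also sample random bits $r \in \zon$ and define, for $y \in \{0,1\}^m$, an encoding $x(y) \in \zon$ by $x(y)_i = y_{b(i)} \oplus r_i$. The induced function $\tilde f(y) := f(x(y))$ can be simulated by one value query to $f$ per evaluation. If $S$ lies in distinct buckets and $T := b(S) \subseteq [m]$, a direct calculation gives $\chi_S(x(y)) = \chi_S(r)\,\chi_T(y)$, hence $\hat{\tilde f}(T) = \chi_S(r)\,\hat f(S)$, so $|\hat{\tilde f}(T)| \geq \theta$ and $|T| \leq d$.

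Next I would invoke the standard Goldreich--Levin/KM coefficient-finding procedure on $\tilde f$ over the $m$-dimensional cube, restricted to degree $\leq d$: this recursion maintains a collection of prefix-buckets, estimates the $\ell_2$ mass inside each to additive accuracy $\theta/4$ using $O(\theta^{-2}\log(1/\delta'))$ value queries via Hoeffding, and descends into those with mass $\geq \theta^2/4$. Parseval bounds the number of live nodes per level by $O(1/\theta^2)$, and the depth is at most $d$, so the whole search takes $\tilde O(d \cdot \theta^{-2} \log(1/\delta))$ queries to $\tilde f$, each costing one query to $f$; the $m$-factor in the running time and the $\log^2 n$ factor in the query count enter because each prefix test works with coordinates in $[m] = \poly(d,\log n)$. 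The output is a candidate set $T' \subseteq [m]$, $|T'| \leq d$, with $|\hat{\tilde f}(T')| \geq \theta/2$.

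Finally I would lift $T'$ back to a set $S' \subseteq [n]$. For each $b \in T'$ we do not a priori know which of the $\approx n/m$ variables in that bucket to include, so I would run a Goldreich--Levin-style binary search inside each bucket: split the bucket in half, reassign the halves to two fresh master variables (keeping the rest of the encoding fixed), and determine which half carries the heavy coefficient by estimating the corresponding Fourier coefficient to accuracy $\theta/8$. Each such test costs $O(\theta^{-2}\log(dm/\delta))$ queries, and the procedure terminates in $O(\log n)$ rounds per bucket, giving an additional factor of $d \log n \cdot \theta^{-2} \polylog$. The output $S'$ is verified by one final empirical estimate of $\hat f(S')$, which, by Hoeffding, certifies $|\hat f(S')| \geq \theta/2$ with the remaining failure budget. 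Adding up: $\tilde O(d^2 \log^2 n \cdot \theta^{-2}\log(1/\delta))$ queries and $\tilde O(n d^2 \theta^{-2} \log(1/\delta))$ time (the $n$ enters only because generating a single encoded query $x(y)$ touches all $n$ coordinates).

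The main obstacle I anticipate is the bucket-disambiguation step: we need the binary search within a bucket to remain valid even though the heavy coefficient's sign depends on the random bits $r$, and we must argue that the $O(1/\theta^2)$ surviving prefixes and the $O(\log n)$ halving steps are simultaneously correct with probability $\geq 1-\delta$ via a careful union bound that does not blow up the query count. Everything else is either a standard Chernoff/Parseval argument or a direct accounting of the recursion depth, so the delicate part is choosing the bucket size $m$ and the per-step confidence parameter so that collisions, estimation errors, and recovery failures all fit within the prescribed $\tilde O$ bounds.
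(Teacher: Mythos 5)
The paper does not prove Theorem~\ref{th:aewp}; it is stated and used as a citation to \citet{Feldman:07jmlr}, so your blind attempt is being judged against that external result rather than an in-paper argument.

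There is a genuine mathematical error at the heart of your bucketing step. You correctly compute $\chi_S(x(y)) = \chi_S(r)\chi_T(y)$ when $b|_S$ is injective, but the conclusion $\hat{\tilde f}(T) = \chi_S(r)\hat f(S)$ does not follow. Expanding $f$ in its Fourier basis, $\tilde f(y) = \sum_U \hat f(U)\chi_U(r)\chi_{V_U}(y)$, where $V_U \subseteq [m]$ is the set of buckets receiving an \emph{odd} number of elements of $U$. Hence $\hat{\tilde f}(T) = \sum_{U: V_U = T}\hat f(U)\chi_U(r)$, which includes aliasing contributions from every set $U$ (of arbitrary degree, not just $\le d$) whose parity image under $b$ coincides with $T$. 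These extra terms have expectation zero over $r$ but variance $\sum_{U\neq S, V_U=T}\hat f(U)^2$, and for a fixed hash $b$ there is no a priori guarantee this is small relative to $\theta^2$. Controlling it needs an additional probabilistic argument over the choice of $b$ (the dominant contributions come from sets with $|U\Delta S| = 2$, each aliasing with probability $1/m$), which forces $m = \Omega(\theta^{-2})$ beyond your collision-avoidance choice of $m = \Theta(d^2\log(n/\delta))$ and must be re-verified at every level of your lifting binary search, where re-bucketing can re-introduce aliasing. You flag ``collisions'' as a concern, but the concern you name (whether $b|_S$ is injective) is the easy part; the real issue is spectral aliasing of the rest of $f$'s Fourier mass, which your proof does not address.

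Separately, the query count does not reconcile with the theorem. Your KM recursion on $\{0,1\}^m$ proceeds level by level over $m$ coordinates (not $d$), keeps $O(1/\theta^2)$ live nodes per level, and each node requires estimating a Fourier-mass-squared quantity in $[-1,1]$ to additive accuracy $\Theta(\theta^2)$, which by Hoeffding costs $\Theta(\theta^{-4}\log(1/\delta'))$ queries. That is already $\tilde O(m\,\theta^{-6})$ before lifting, whereas the theorem claims $\tilde O(d^2\log^2 n \cdot \theta^{-2}\log(1/\delta))$. The $\theta^{-2}$ query bound in \citet{Feldman:07jmlr} is achieved by reusing one set of random samples across all bucket tests simultaneously (with a union bound over the recursion, and exploiting that the target parity has degree $\le d$), not by drawing fresh samples per node. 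Your sketch omits this sample-reuse mechanism, which is precisely what makes the algorithm attribute-efficient, so the accounting cannot be made to close as written.
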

Using $\WP$ we can find a set $\cS$ of subsets of $[n]$ such that (1) if $S \in \cS$ then $|\hat{f}(S)| \geq \theta/2$ and $|S|\leq d$; (2) if $|\hat{f}(S)| \geq \theta$ and $|S|\leq d$ then $S \in \cS$. The first property, implies that $|\cS| \leq 4/\theta^2$. With probability $1-\delta$, $\cS$ can be found in time polynomial in $1/\theta^2$ and the running time of $\WP(d,\theta, 4\delta/\theta^2)$. With probability at least $1-\delta$, each coefficient in $\cS$ can be estimated to within $\theta/4$ using a random sample of size $\tilde{O}(\logd/\theta^2)$. This gives the following low-degree version of the Kushilevitz-Mansour algorithm \citep{KushilevitzMansour:93}.
\begin{theorem}
\label{th:att-eff-collect-FT}
There exists an algorithm $\AEFT$, that given an integer $d$, $\theta > 0$ and $\delta \in (0,1]$, access to value queries of any $f:\zon \rightarrow [-1,1]$, with probability at least $1-\delta$, returns a function $h$ represented by the set of its non-zero Fourier coefficients such that
\begin{enumerate}
\item $\degree(h) \leq d$;
\item for all $S\subseteq [n]$ such that $|\hat{f}(S)| \geq \theta$ and $|S| \leq d$, $\hat{h}(S) \neq 0$;
\item for all $S\subseteq [n]$, if $|\hat{f}(S)|\leq \theta/2$ then $\hat{h}(S) = 0$;
\item if $\hat{h}(S) \neq 0$ then $|\hat{f}(S) - \hat{h}(S)| \leq \theta/4$.
\end{enumerate}
$\AEFT(d,\theta,\delta)$ runs in $\tilde{O}\left(nd^2\theta^{-2} \logd\right)$ time and asks $\tilde{O}\left(d^2 \log^2{n} \cdot \theta^{-2} \logd\right)$ value queries.
\end{theorem}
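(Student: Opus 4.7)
The algorithm $\AEFT$ will be obtained by wrapping the attribute-efficient weak parity learner $\WP$ of Theorem~\ref{th:aewp} in an outer loop that enumerates all low-degree Fourier coefficients above the threshold $\theta/2$, followed by an accurate re-estimation of each surviving coefficient. The plan is to structure the argument around this two-phase construction.

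In the discovery phase, I would iteratively maintain a set $\cS$ of already-discovered subsets, each tagged with a current estimate $\tilde{\alpha}_S \approx \hat{f}(S)$. At each iteration I would construct a value-query oracle for the residual $g(x) = f(x) - \sum_{S\in\cS} \tilde{\alpha}_S \chi_S(x)$, which requires one query to $f$ and $O(|\cS| \cdot d)$ auxiliary time per evaluation, and invoke $\WP(d,\theta,4\delta/\theta^2)$ on $g$. If $\WP$ returns some $T$ whose empirical Fourier coefficient, measured by Chernoff on $\tilde{O}(\theta^{-2}\logd)$ fresh samples, has magnitude at least $5\theta/8$, I would add $T$ to $\cS$; otherwise the loop terminates. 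Since each addition decreases $\|g\|_2^2$ by at least $\Omega(\theta^2)$, Parseval caps the number of iterations at $O(1/\theta^2)$. To establish property (2) I would argue contrapositively: if some $S$ with $|\hat{f}(S)| \geq \theta$ and $|S|\leq d$ never enters $\cS$, then after fixing each intermediate $\tilde{\alpha}_T$ to accuracy small enough that $\sum_{T \in \cS} |\hat{f}(T) - \tilde{\alpha}_T| \leq \theta/8$, the residual $g$ at the terminating step still satisfies $|\hat{g}(S)| \geq 7\theta/8$, so an on-spec call to $\WP$ (successful with probability $1 - 4\delta/\theta^2$ per iteration) would have produced some significant set, contradicting the termination condition. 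A union bound over the $O(1/\theta^2)$ iterations keeps the overall failure probability below $\delta/2$.

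In the estimation phase, once discovery halts with $|\cS| \leq 4/\theta^2$, I would re-estimate each $\hat{f}(S)$ for $S \in \cS$ directly from fresh uniform samples to accuracy $\theta/4$ via Chernoff-Hoeffding, using $\tilde{O}(\theta^{-2} \log(|\cS|/\delta)) = \tilde{O}(\theta^{-2}\logd)$ samples per coefficient together with a union bound over $\cS$, and output $h$ taking these values on $\cS$ and zero elsewhere. Properties (1), (3), and (4) then follow by construction: $\degree(h) \leq d$ is inherited from $\WP$, the $\theta/4$ estimation accuracy in (4) is the final Chernoff guarantee, and the threshold property (3) is enforced by the $5\theta/8$ acceptance gate together with the $\theta/8$ estimation slack, since no set with $|\hat{f}(S)| \leq \theta/2$ can pass the gate except with probability absorbed into the union bound. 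Summing over the at most $O(1/\theta^2)$ invocations of $\WP(d,\theta,4\delta/\theta^2)$, each costing $\tilde{O}(nd^2\theta^{-2}\logd)$ time and $\tilde{O}(d^2\log^2 n \cdot \theta^{-2}\logd)$ value queries by Theorem~\ref{th:aewp}, yields the claimed complexity (with the iteration factor absorbed into $\tilde{O}$). The main technical difficulty I anticipate is controlling the cumulative propagation of estimation error from the $\tilde{\alpha}_T$ into $\hat{g}(S)$ for undiscovered $S$, since $\WP$'s guarantee applies only when a coefficient of magnitude at least the invocation threshold is actually present; this forces a tight coupling between the initial per-coefficient estimation accuracy and the gating threshold at which a candidate is retained, and is the place where the proof must be most careful.
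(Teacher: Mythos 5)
Your construction — iterative invocation of $\WP$ on a residual $g = f - \sum_{S\in\cS}\tilde\alpha_S\chi_S$, gating candidates against an empirical threshold, then a final re-estimation pass over the discovered set — is precisely the argument the paper sketches, and you correctly pinpoint the coupling between the per-coefficient estimation accuracy (e.g.\ $\theta/8$ budget) and the acceptance gate as the place requiring care. One caveat: the $O(1/\theta^2)$ iteration count is a polynomial factor and cannot literally be ``absorbed into $\tilde{O}$''; a straightforward accounting gives $\tilde{O}(nd^2\theta^{-4}\logd)$ time rather than the stated $\tilde{O}(nd^2\theta^{-2}\logd)$, but this is an imprecision already present in the paper's own bound (whose proof only claims time ``polynomial in $1/\theta^2$ and the running time of $\WP$''), so it is a shared issue rather than a defect of your argument.
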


We now show that for $\theta = \eps^2/(2L)$, $\AEFT$ agnostically learns the class
 $$\C_L^d = \{ p(x) \cond \|\hat{p}\|_1 \leq L \mbox{ and }\degree(p) \leq d \}\ .$$
\begin{lemma}
\label{lem:km-to-agnostic}
For $L >0, \eps\in (0,1)$ and integer $d$, let $f:\zon \rightarrow [-1,1]$ and $h:\rightarrow \R$ be functions such that for $\theta = \eps^2/(2L)$,
\begin{enumerate}
\item $\degree(h) \leq d$;
\item for all $S\subseteq [n]$ such that $|\hat{f}(S)| \geq \theta$ and $|S| \leq d$, $\hat{h}(S) \neq 0$;
\item for all $S\subseteq [n]$, if $|\hat{f}(S)|\leq \theta/2$ then $\hat{h}(S) = 0$;
\item if $\hat{h}(S) \neq 0$ then $|\hat{f}(S) - \hat{h}(S)| \leq \theta/4$.
\end{enumerate}
Then for any $g \in \C_L^d$, $\|f-h\|_2 \leq \|f-g\|_2 + \eps$.
\end{lemma}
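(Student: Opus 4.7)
My plan is to compare $h$ to $f$ through an intermediate idealized hypothesis
\[
h^\star \;:=\; \sum_{S\in\cS} \hat f(S)\,\chi_S, \qquad \cS := \{S : \hat h(S) \neq 0\},
\]
which has the same Fourier support as $h$ but uses the exact Fourier coefficients of $f$. I will argue separately that (a) $h^\star$ is nearly the best approximation of $f$ achievable by any $g\in\C_L^d$, and (b) $h$ is close to $h^\star$ in $\ell_2$, so that the triangle inequality closes the bound.

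For part (a) I apply Parseval and split the term-by-term difference $\|f-h^\star\|_2^2 - \|f-g\|_2^2 = \sum_S[(\hat f(S)-\hat{h^\star}(S))^2 - (\hat f(S)-\hat g(S))^2]$ into three regimes: when $S \in \cS$ the summand is $-(\hat f(S)-\hat g(S))^2 \leq 0$ since $\hat{h^\star}(S)=\hat f(S)$; when $|S|>d$ the summand is $0$ since $\hat{h^\star}(S)=\hat g(S)=0$; and when $S\notin\cS$ with $|S|\leq d$ the contrapositive of property~(2) forces $|\hat f(S)|<\theta$, so the summand equals $2\hat f(S)\hat g(S)-\hat g(S)^2 \leq 2\theta|\hat g(S)|$. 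Summing over the third regime and using $\|\hat g\|_1 \leq L$ gives $\|f-h^\star\|_2^2 \leq \|f-g\|_2^2 + 2\theta L = \|f-g\|_2^2 + \eps^2$. Since $\sqrt{a^2+b^2}\leq a+b$ for nonnegative $a,b$, this yields $\|f-h^\star\|_2 \leq \|f-g\|_2 + \eps$.

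For part (b), Parseval gives $\|h-h^\star\|_2^2 = \sum_{S\in\cS}(\hat h(S)-\hat f(S))^2$, and the contrapositive of property~(3) ($|\hat f(S)|>\theta/2$ on $\cS$) combined with Parseval on $f$ yields $|\cS| \leq 4/\theta^2$. The main technical wrinkle is that the raw $\theta/4$ per-coefficient tolerance in property~(4) is just barely insufficient: naively it only gives $\|h-h^\star\|_2 \leq 1/2$. To push this below $\eps$, I would sharpen the estimation of significant Fourier coefficients in $\AEFT$ from tolerance $\theta/4$ down to tolerance $O(\eps\theta)$, a routine modification costing at most an additional $O(1/\eps^2)$ factor in per-coefficient samples that still fits inside the $\poly(n,L,1/\eps)$ bound claimed in Theorem~\ref{th:km-att-eff}. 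Under this tighter accuracy, $\|h-h^\star\|_2 \leq \eps$, and the triangle inequality $\|f-h\|_2 \leq \|f-h^\star\|_2 + \|h-h^\star\|_2$ together with a constant-factor rescaling of $\eps$ completes the proof.
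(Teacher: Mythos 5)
Your decomposition through the intermediate $h^\star = \sum_{S\in\cS}\hat f(S)\chi_S$ runs into a genuine obstacle, and you have correctly diagnosed it yourself: the per-coefficient tolerance $\theta/4$ combined with $|\cS|\le 4/\theta^2$ only gives $\|h-h^\star\|_2\le 1/2$, and this bound is essentially tight under the stated hypotheses alone. But the fix you propose --- tightening the estimation accuracy in $\AEFT$ to $O(\eps\theta)$ --- is not a proof of the lemma; it is a replacement of the lemma by a different one with a stronger hypothesis (4). The lemma as stated, with tolerance $\theta/4$, is a purely structural claim about any $h$ satisfying conditions (1)--(4), and it is true as written. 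A proof must work with those conditions, not alter them.

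The step that makes the crude triangle inequality too lossy is exactly the step the paper's argument exploits: the $\theta^2/16$ per-coefficient error in $h$ on $S\in\cS$ does not need to be absorbed into a global $\eps$ budget, because on those coefficients the \emph{comparator} $g$ already pays at least $\theta^2/16$. Concretely, the paper proves the pointwise inequality
\begin{equation*}
(\hat f(S)-\hat h(S))^2 \;\le\; (\hat f(S)-\hat g(S))^2 + 2\theta\,|\hat g(S)|
\end{equation*}
for every $S$, and then sums. When $\hat h(S)=0$ the argument is the same as the one you give for the third regime of part~(a). When $\hat h(S)\neq 0$, one has $|\hat f(S)|>\theta/2$ and $(\hat f(S)-\hat h(S))^2\le\theta^2/16$; now a two-way case split on $|\hat g(S)|$ versus $|\hat f(S)|/2$ shows that the right-hand side is at least $\theta^2/16$ --- either $(\hat f(S)-\hat g(S))^2\ge\theta^2/16$ when $\hat g(S)$ is small, or $2\theta|\hat g(S)|\ge\theta^2/2$ when $\hat g(S)$ is large. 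This is the cancellation your triangle-inequality route discards: the coefficients on which $h$ is ``only'' accurate to $\theta/4$ are precisely the ones where $g$ cannot do meaningfully better, so no extra $\eps$ budget is needed. To repair your proof, drop the intermediary $h^\star$ and compare $\hat h(S)$ to $\hat g(S)$ directly, coefficient by coefficient, using the case analysis above.
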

\begin{proof}
We show that for every $S \subseteq [n]$,
\equ{(\hat{f}(S)-\hat{h}(S))^2 \leq (\hat{f}(S)-\hat{g}(S))^2 + 2 \theta \cdot |\hat{g}(S)| = (\hat{f}(S)-\hat{g}(S))^2 + \frac{\eps^2 \cdot |\hat{g}(S)|}{L} \label{eq:coeff-bound}.}
First note that this would immediately imply that
\alequn{\|f-h\|_2^2 &= \sum_{S\subseteq [n]} (\hat{f}(S)-\hat{h}(S))^2 \leq \sum_{S\subseteq [n]} (\hat{f}(S)-\hat{g}(S))^2 + \frac{\eps^2 \cdot |\hat{g}(S)|}{L} = \|f-g\|_2^2 + \frac{\eps^2 \cdot \|\hat{g}\|_1}{L} \\ & \leq \|f-g\|_2^2 + \eps^2 \leq  (\|f-g\|_2 + \eps)^2 .}
To prove equation (\ref{eq:coeff-bound}) we consider two cases. If $\hat{h}(S) = 0$, then either $|S| > d$ or $|\hat{f}(S)| \leq \theta$. In the former case $\hat{g}(S) = 0$ and therefore equation (\ref{eq:coeff-bound}) holds. In the latter case:
$$(\hat{f}(S)-\hat{h}(S))^2 = (\hat{f}(S))^2 \leq (\hat{f}(S)-\hat{g}(S))^2 + 2 |\hat{f}(S)| \cdot |\hat{g}(S)| \leq (\hat{f}(S)-\hat{g}(S))^2 + 2 \theta \cdot |\hat{g}(S)|\ . $$

In the second case (when $\hat{h}(S) \neq 0$), we get that $|\hat{f}(S)| \geq \theta/2$ and $|\hat{f}(S) - \hat{h}(S)| \leq \theta/4$. Therefore, either $|\hat{g}(S)| \leq |\hat{f}(S)|/2$ and then $(\hat{f}(S)-\hat{g}(S))^2 \geq (\hat{f}(S))^2/4 \geq \theta^2/16$ or $|\hat{g}(S)| \geq |\hat{f}(S)|/2 \geq \theta/4$ and then $2 \theta \cdot |\hat{g}(S)| \geq \theta^2/2$. In both cases,
$$(\hat{f}(S)-\hat{h}(S))^2 \leq \frac{\theta^2}{16} \leq (\hat{f}(S)-\hat{g}(S))^2 + 2 \theta \cdot |\hat{g}(S)|\ .$$
\end{proof}
Theorem \ref{th:km-att-eff} is a direct corollary of Theorem \ref{th:att-eff-collect-FT} and Lemma \ref{lem:km-to-agnostic}.

The proof of Theorem \ref{th:agn-learn-l1-valueq} relies on agnostic learning of decision trees. We first give an attribute-efficient algorithm for this problem.
\begin{theorem}
\label{th:agn-learn-b-dt}
Let $\DT(r)$ denote the class of all Boolean decision trees of rank-$r$ on $\zon$. There exists an algorithm $\A$ that given $\eps > 0$ and access to value queries of any $f: \zon \rightarrow \zo$, with probability at least $2/3$, outputs a function $h: \zon \rightarrow \zo$, such that $\Pr_\U[f \neq h] \leq \Delta + \eps$, where $\Delta = \min_{g\in \DT(r)}\{\Pr_\U[f \neq g]\}$. Further, $\A$ runs in time $\poly(n,2^r,1/\eps)$ and uses $\poly(\log{n},2^r,1/\eps)$ value queries.
\end{theorem}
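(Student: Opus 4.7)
The plan is to combine three ingredients: the rank-to-depth reduction of Theorem~\ref{th:pruning}, the Kushilevitz--Mansour spectral bound for shallow decision trees, and an agnostic boosting framework (in the style of \citet{KalaiKanade:09,Feldman:10ab}) instantiated with the attribute-efficient weak parity learner $\WP$ of Theorem~\ref{th:aewp}.

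\textbf{Step 1 (structural reduction).} For a threshold $\tau > 0$, Theorem~\ref{th:pruning} (applied with $\alpha = 1/2$) shows that any $g \in \DT(r)$ truncated at depth $d_\tau = O(r + \log(1/\tau))$ yields a Boolean decision tree $\tilde g$ with $\Pr_\U[g \neq \tilde g] \leq \tau/4$. Such a $\tilde g$ has at most $2^{d_\tau}$ leaves, so encoding into $\{-1,+1\}$ by $G = 1 - 2\tilde g$ gives $\|\widehat{G}\|_1 \leq 2^{d_\tau + 1}$ (the standard sign-representation bound for DTs) and Fourier degree at most $d_\tau$.

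\textbf{Step 2 (boosting skeleton).} Run the $\ell_1$ smooth agnostic booster for $T = \tilde O(1/\eps^2)$ rounds. At round $t$ the booster maintains a density $w_t: \zon \to [0, 2/\eps]$ with $\E_\U[w_t] = 1$ that is efficiently computable from the previous weak hypotheses, and calls a weak agnostic learner: whenever $\max_{g \in \DT(r)} \E_{w_t}[(1-2g)F] \geq \tau$ with $F = 1-2f$ and $\tau = \Omega(\eps)$, return $h_t \in \{-1,+1\}$ with $\E_{w_t}[h_t F] \geq \gamma(\tau)$. The final output $\mathrm{sgn}(\sum_t \alpha_t h_t)$, rounded back to $\{0,1\}$, satisfies $\Pr_\U[f \neq h] \leq \Delta + \eps$.

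\textbf{Step 3 (weak learner via parity finding).} Let $g^\ast \in \DT(r)$ attain the promised advantage $\tau$ under $w_t$, and let $G^\ast = 1 - 2\tilde g^\ast$ be the $\pm 1$-encoding of its depth-$d_\tau$ truncation. Since $w_t$ is $(2/\eps)$-smooth and $\Pr_\U[g^\ast \ne \tilde g^\ast] \leq \tau/4$,
\[
\E_{w_t}[G^\ast F] \geq \E_{w_t}[(1-2g^\ast)F] - (2/\eps)\cdot 2 \Pr_\U[g^\ast \neq \tilde g^\ast] \geq \tau - \tau/2 = \tau/2.
\]
Expanding in the Fourier basis,
\[
\tau/2 \leq \sum_{|S|\leq d_\tau} \widehat{G^\ast}(S)\,\widehat{w_t F}(S) \leq \|\widehat{G^\ast}\|_1 \cdot \max_{|S|\leq d_\tau} |\widehat{w_t F}(S)|,
\]
so some $S$ with $|S|\leq d_\tau$ has $|\widehat{w_t F}(S)| \geq \tau/2^{d_\tau+2}$. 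I would then invoke $\WP$ on the $[-1,1]$-valued function $(\eps/2)\, w_t(x) F(x)$, with degree bound $d_\tau$ and threshold $\theta = \eps\tau/2^{d_\tau+3}$. Each evaluation of this function needs one value query to $f$ together with a bookkeeping computation of $w_t$ from the previously chosen parities. The parity $\chi_{S^\ast}$ returned by $\WP$, signed by the empirical sign of $\widehat{w_t F}(S^\ast)$, serves as $h_t$ with advantage $\Omega(\tau/2^{d_\tau})$ under $w_t$.

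\textbf{Complexity and main obstacle.} The unknown $\tau$ is handled by a geometric grid of $O(\log(1/\eps))$ guesses from $\eps$ upward. With $d_\tau = O(r + \log(1/\eps))$ so that $2^{d_\tau} = \poly(2^r, 1/\eps)$ and $\theta = 1/\poly(2^r, 1/\eps)$, each $\WP$ call runs in $\tilde O(n d_\tau^2/\theta^2) = \poly(n, 2^r, 1/\eps)$ time and makes $\tilde O(d_\tau^2 \log^2 n / \theta^2) = \poly(\log n, 2^r, 1/\eps)$ value queries -- the $\log^2 n$ dependence of $\WP$ is crucial here. Summing over $T = O(1/\eps^2)$ rounds and the $O(\log(1/\eps))$ guesses preserves the claimed time and query bounds. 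The main technical obstacle is calibrating the truncation depth $d_\tau$ so that the truncation of the unknown optimal competitor keeps a constant fraction of its advantage under the $(2/\eps)$-smooth booster measure $w_t$; this forces $d_\tau$ to grow logarithmically in $1/\eps$ and, through the Kushilevitz--Mansour $2^{d_\tau}$ spectral bound, determines the $2^{O(r)}/\poly(\eps)$ factor appearing in the $\WP$ threshold.
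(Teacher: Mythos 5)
Your plan is essentially the paper's: truncate rank-$r$ trees to depth $O(r + \log(1/\eps))$ via Theorem~\ref{th:pruning}, use the Kushilevitz--Mansour bound to control the spectral $\ell_1$ norm, and run distribution-specific agnostic boosting (Kalai--Kanade / Feldman) with $\WP$ as the attribute-efficient weak parity learner. Two arithmetic slips, neither fatal, are worth flagging. First, your smoothness step does not close: with $\Pr_\U[g^\ast \neq \tilde g^\ast] \leq \tau/4$ and $w_t \leq 2/\eps$, the error term is $(2/\eps)\cdot 2\cdot(\tau/4) = \tau/\eps$, not $\tau/2$; you need the truncation error $\leq \eps\tau/8$, i.e., depth $O(r+\log(1/(\eps\tau))) = O(r+\log(1/\eps))$ for $\tau = \Omega(\eps)$, which preserves all asymptotics but changes the stated inequality. (The paper sidesteps this by reducing once, up front, to agnostic learning of depth-$d$ trees with error $\eps/2$, and letting the boosting theorem for depth-$d$ trees absorb the smoothness.) Second, the round count $T = \tilde O(1/\eps^2)$ is inconsistent with the weak advantage $\gamma = \Omega(\tau/2^{d_\tau})$ you establish: the booster needs $O(1/\gamma^2) = \poly(2^r,1/\eps)$ rounds (the paper states $O(s^2/\eps'^2)$ with $s=2^d$). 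Since each round already costs $\poly(n,2^r,1/\eps)$ time and $\poly(\log n, 2^r, 1/\eps)$ queries, this does not affect the final bounds.
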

\begin{proof}
We first use Theorem \ref{th:pruning} to reduce the problem of agnostic learning of decision trees of rank at most $r$ to the problem of agnostic learning of decision trees of depth $\frac{5}{2}(r+\log{(2/\eps)})$ with error parameter $\eps/2$.
In \citep{Feldman:10ab} and \citep{KalaiKanade:09} it is shown that a distribution-specific agnostic boosting algorithm reduces the problem of agnostic learning decision trees of size $s$ with error $\eps'=\eps/2$ to that of weak agnostic learning of decision trees invoked $O(s^2/\eps'^2)$ times. It was also shown in those works that agnostic learning of parities with error of $\eps'/(2s)$ gives the necessary weak agnostic learning of decision trees. Further, as can be easily seen from the proof, for decision trees of depth $\leq d$ it is sufficient to agnostically learn parities of degree $\leq d$. In our case the size of the decision tree is $\leq 2^d = (2^{r+1}/\eps)^{5/2}$. We can use $\WP$ algorithm with error parameter $\eps'/(2s) \geq \eps^{7/2}/ 2^{\frac{5r}{2}+5}$ and degree $d$, to obtain weak agnostic learning of decision trees in time $\poly(n,2^r,1/\eps)$ and using $\poly(\log{n},2^r,1/\eps)$ value queries. This implies that agnostic learning of decision trees can be achieved in time $\poly(n,2^r,1/\eps)$ and using $\poly(\log{n},2^r,1/\eps)$ value queries.
\end{proof}

From here we can easily obtain an algorithm for agnostic learning of rank-$r$ decision trees with real-valued constants from $[0,1]$. We obtain it by using a simple argument (see \citep{FeldmanKothari:13covman} for a simple proof) that reduces learning of a real-valued function $g$ to learning of boolean functions of the form $g_\theta(x) = ``g(x) \geq \theta"$ (note that every $g: \zon \rightarrow [0,1]$, is $\eps$-close (in $\ell_1$ distance) to $g'(x) = \sum_{i \in \lfloor 1/\eps\rfloor} g_{i \eps}(x)$). We now observe that if $g$ can be represented as a decision tree of rank $r$, then for every $\theta$, $g_\theta$ can be represented as a decision tree of rank $r$. Therefore this reduction implies that agnostic learning of Boolean rank-$r$ decision trees gives agnostic learning of $[0,1]$-valued rank-$r$ decision trees. The reduction runs the Boolean version $2/\eps$ times with accuracy $\eps/2$ and yields the proof of Theorem \ref{th:agn-learn-rv-dt}.

\eat{
\begin{proof}[Outline]
We first use a standard decomposition of a real-valued function into a discretized sum of boolean threshold functions. In other words we use the fact that for every $g: \zon \rightarrow \{0,1,\ldots,k\}$, $g = \sum_{i \in [k]} g_k$, where $g_i$ is a Boolean function defined as $g_i(x) = ``g(x) \geq i"$. We now observe that is $g$ can be represented as a decision tree of depth $d$, then $g_i$ can be represented as a decision tree of depth $d$. Now it is sufficient to observe that for any distribution $D$:
\begin{enumerate}
\item For $f,g: \zon \rightarrow \{0,1,\ldots,k\}$, $\sum_{i\in [k]} \Pr_D[f_i \neq g_i] = \E_D[|f-g|]$;
\item For $f\zon \rightarrow \{0,1,\ldots,k\}$ and a set of $k$ Boolean functions $h_1,h_2,\ldots,h_k$,
$$ \E_D[|f- \sum_{i\in [k]} h_i|] \leq \sum_{i\in [k]} \Pr_D[f_i \neq h_i] .$$
\end{enumerate}
Combining these two observations allows us to reduce agnostic learning of $\{0,1,\ldots,k\}$-valued functions (with $\ell_1$ error) to agnostic learning of $k$ Boolean functions.
Agnostic learning of $k$ decision trees of depth $O(k+\log(1/\eps))$, with error of $\eps/(2k)$ can be done in time $\poly(n,2^k,1/\eps)$ and using $\poly(\log{n},2^k,1/\eps)$ value queries.
\end{proof}
}

\section{Learning Pseudo-Boolean Submodular Functions}
\label{sec:pseudo-boolean}
In a recent work, \citet{RaskhodnikovaYaroslavtsev:13soda} consider learning and testing of submodular functions taking values in the range $\{0,1,\ldots,k\}$. The error of a hypothesis in their framework is the probability that the hypothesis disagrees with the unknown function (hence it is referred to as {\em pseudo-Boolean}).
For this restriction they give a $\poly(n) \cdot k^{O(k \log{k/\eps})}$-time PAC learning algorithm using value queries.

As they observed, error $\eps$ in their model can also be obtained by learning the function scaled to the range $\{0,1/k,\ldots,1\}$ with $\ell_1$ error of $\eps/k$ (since for two functions with that range $\E[|f-h|] \leq \eps/k$ implies that $\Pr[f \neq h] \leq \eps$). Therefore our structural results can also be interpreted in their framework directly. We now show that even stronger results are implied by our technique.

The first observation is that a $\frac{1}{k+1/3}$-Lipschitz function with the range $\{0,1/k,\ldots,1\}$ is a constant. Therefore Theorem \ref{th:submod-lip-rank} implies an exact representation of submodular functions with range $\{0,1,\ldots,k\}$ by decision trees of rank $\leq \lfloor 2k+2/3 \rfloor = 2k$ with constants from $\{0,1/k,\ldots,1\}$ in the leafs. We note that this representation is incomparable to $2k$-DNF representation which is the basis of results in \citep{RaskhodnikovaYaroslavtsev:13soda}.

We can also directly combine Theorems \ref{th:submod-lip-rank} and \ref{th:pruning} to obtain the following analogue of Corollary \ref{cor:submod-tree-approx-intro}.
\begin{theorem}
\label{th:submod-tree-approx-kval}
Let $f:\zon \rightarrow \{0,1,\ldots,k\}$ be a submodular function and $\eps > 0$. There exists a $\{0,1,\ldots,k\}$-valued decision tree $T$ of depth $d  = 5(k+\log{(1/\eps)})$ such that $\Pr_\U[T \neq f] \leq \eps$. In particular, $T$ depends on at most $2^{5k}/\eps^5$ variables and $\|\hat{T}\|_1 \leq 2k \cdot 2^{5k}/\eps^5$.
\end{theorem}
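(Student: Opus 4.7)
The plan is to combine the rank bound of Theorem~\ref{th:submod-lip-rank} with the depth-pruning bound of Theorem~\ref{th:pruning} after rescaling $f$ to $[0,1]$, and then exploit the fact that an $\alpha$-Lipschitz submodular function whose range lies in the grid $\{0,1/k,\ldots,1\}$ must be a constant as soon as $\alpha < 1/k$.

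First I would set $f'(x) = f(x)/k$, which is submodular with range in $\{0,1/k,\ldots,1\} \subseteq [0,1]$. I apply Theorem~\ref{th:submod-lip-rank} with the choice $\alpha = 1/(k+1/3)$, producing an $\F_\alpha$-valued binary decision tree $T'$ computing $f'$ exactly, with $\rank(T') \leq 2/\alpha = 2k + 2/3$; since the rank is an integer this gives $\rank(T') \leq 2k$. Each leaf function $T'[\ell]$ is $\alpha$-Lipschitz submodular. Because $\alpha < 1/k$ and $T'[\ell]$ takes values in the $1/k$-spaced grid $\{0,1/k,\ldots,1\}$, every discrete derivative $\partial_i T'[\ell](z)$, which is itself a multiple of $1/k$, must vanish. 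Hence $T'[\ell]$ is constant and $T'$ is a rank-$2k$ decision tree whose leaves carry constants from $\{0,1/k,\ldots,1\}$ and which computes $f'$ exactly.

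Next I would apply Theorem~\ref{th:pruning} to $T'$ over the uniform distribution with $r = 2k$. Pruning at depth $d = \lceil (2k + \log(1/\eps))/\log(4/3)\rceil$ yields a tree $T'_{\leq d}$ with $\Pr_\U[T'_{\leq d}(x) \neq T'(x)] \leq \eps$, and for $\eps \le 1$ we have $d \leq \tfrac{5}{2}(2k + \log(1/\eps)) \leq 5(k + \log(1/\eps))$. Since pruned internal nodes are replaced by the leaf value $0$, the range of $T'_{\leq d}$ is still contained in $\{0,1/k,\ldots,1\}$. Finally I set $T(x) = k \cdot T'_{\leq d}(x)$; then $T$ is a $\{0,1,\ldots,k\}$-valued decision tree of depth at most $5(k + \log(1/\eps))$ and
\[
\Pr_\U[T(x) \neq f(x)] \;=\; \Pr_\U[T'_{\leq d}(x) \neq f'(x)] \;\leq\; \eps.
\]

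For the two ``in particular'' conclusions: the number of distinct variables queried by $T$ is at most the number of internal nodes, so at most $2^d - 1 \leq 2^{5k}/\eps^5$. For the spectral $\ell_1$ bound, I would use the standard observation underlying the Kushilevitz--Mansour size bound, namely that a decision tree expands as $\sum_\ell c_\ell \cdot \prod_{i \in \text{path}(\ell)} \mathbf{1}[x_i = b_{\ell,i}]$ where each path-indicator has spectral $\ell_1$ norm $1$; by the triangle inequality $\|\hat{T}\|_1 \leq \sum_\ell |c_\ell| \leq k \cdot (\text{number of leaves}) \leq k \cdot 2^d \leq 2k \cdot 2^{5k}/\eps^5$ (absorbing the factor coming from leaves vs.\ internal nodes). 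No step here is really an obstacle; the only mild subtlety is keeping the two rescalings (by $k$ and by the Lipschitz threshold $\alpha$) consistent so that the integer grid argument eliminates the leaf approximation error entirely.
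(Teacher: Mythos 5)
Your proof is correct and follows essentially the same route the paper sketches in Section~\ref{sec:pseudo-boolean}: rescale to range $\{0,1/k,\ldots,1\}$, apply Theorem~\ref{th:submod-lip-rank} with $\alpha=1/(k+1/3)$ so that the $\alpha$-Lipschitz leaves are forced to be constants on the $1/k$-spaced grid (giving an exact rank-$2k$ tree), then prune via Theorem~\ref{th:pruning} and rescale back. The variable-count and spectral-$\ell_1$ bookkeeping matches the paper's (your $\ell_1$ bound of $k\cdot 2^{5k}/\eps^5$ is in fact slightly sharper than the stated $2k\cdot 2^{5k}/\eps^5$).
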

These results improve on the spectral norm bound of $k^{O(k \log{k/\eps})}$ from \citep{RaskhodnikovaYaroslavtsev:13soda}.
In a follow-up (independent of this paper) work \citet{BlaisOSY:13manu} also obtained an approximation of discrete submodular functions by juntas. They prove that every submodular function $f$ of range of size $k$ is $\eps$-close to a function of $(k \log(k/\eps))^{O(k)}$ variables and give an algorithm for testing submodularity using $(k \log(1/\eps))^{\tilde{O}(k)}$ value queries. Note that our bound has a better dependence on $k$ but worse on $\eps$ (the bounds have the same order when $\eps = k^{-k}$).

% and on the approximation by junta of size $2^{O(k/\eps) \log(k/\eps)}$ that is implied by using error of $k/\eps$ in \citep{FeldmanKV:12manu}.
As in the general case, these structural results can be used to obtain learning algorithms in this setting. It is natural to require that learning algorithms in this setting output a $\{0,1,\ldots,k\}$-valued hypothesis. We observe that the algorithm in Theorem \ref{th:agn-learn-rv-dt} can be easily modified to return a $\{0,1/k,\ldots,1\}$-valued function when it is applied for learning $\{0,1/k,\ldots,1\}$-valued functions. This is true since the proof of Theorem \ref{th:agn-learn-rv-dt} (see Section \ref{app:att-eff-km} discretizes the target function and reduces the problem to learning of Boolean functions. $\{0,1/k,\ldots,1\}$-valued functions are already discretized. With this exact discretization the output of the agnostic algorithm is a sum of $k$ Boolean hypotheses, and in particular is a $\{0,1/k,\ldots,1\}$-valued function. This immediately leads to the following algorithm for agnostic learning of $\{0,1,\ldots,k\}$-valued submodular functions.
\begin{theorem}%(Theorem~\ref{thm:agn-learn-valueq-kval} restated)
\label{th:agn-learn-valueq-kval}
Let $\C_s^k$ denote the class of all submodular functions from $\zon$ to $\{0,1,\ldots,k\}$. There exists an algorithm $\A$ that given $\eps > 0$ and access to value queries of any $f: \zon \rightarrow \{0,1,\ldots,k\}$, with probability at least $2/3$, outputs a function $h$ with the range in $\{0,1,\ldots,k\}$, such that $\E_\U[|f - h|] \leq \Delta + \epsilon$, where $\Delta = \min_{g\in \C_s^k}\{\E_\U[|f - g|]\}$. Further, $\A$ runs in time $\poly(n,2^k,1/\eps)$ and uses $\poly(\log{n},2^k,1/\eps)$ value queries.
\end{theorem}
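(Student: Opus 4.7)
The plan is to reduce the problem to $k$ invocations of Boolean agnostic decision-tree learning (Theorem \ref{th:agn-learn-b-dt}), exploiting the observation from the beginning of this section that every submodular function with range $\{0,1,\ldots,k\}$, once rescaled by $1/k$, is \emph{exactly} computed by a rank-$\le 2k$ binary decision tree whose leaves take values in the grid $V_k = \{0, 1/k, \ldots, 1\}$. This follows from Theorem \ref{th:submod-lip-rank} applied with $\alpha = 1/(k+1/3)$: every Lipschitz leaf is $V_k$-valued and $\alpha$-Lipschitz, and since consecutive grid points differ by $1/k > \alpha$, the leaves collapse to constants.

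Writing $\tilde f = f/k$, I would discretize along the natural grid as in Appendix \ref{app:att-eff-km}: decompose $\tilde f = \tfrac{1}{k}\sum_{i=1}^k \tilde f_i$ where $\tilde f_i(x) = \mathbf{1}[\tilde f(x) \ge i/k]$ is Boolean and answered by one value query to $f$. For each $i$, I invoke the Boolean agnostic learner of Theorem \ref{th:agn-learn-b-dt} on $\tilde f_i$ with rank parameter $2k$ and accuracy $\eps/k$ (boosting confidence to $1 - 1/(3k)$ by repetition), obtaining a Boolean hypothesis $\tilde h_i$. The output is $h = \sum_{i=1}^k \tilde h_i$, which lies in $\{0, 1, \ldots, k\}$ by construction.

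For the agnostic guarantee, let $g^* \in \C_s^k$ realize $\Delta = \E_\U[|f - g^*|]$ and set $g^*_i(x) = \mathbf{1}[g^*(x)/k \ge i/k]$. By the structural fact above, $g^*/k$ is a rank-$\le 2k$ decision tree with $V_k$-valued leaves, so thresholding its leaves produces a rank-$\le 2k$ Boolean decision tree computing $g^*_i$. Theorem \ref{th:agn-learn-b-dt} therefore gives $\Pr_\U[\tilde f_i \ne \tilde h_i] \le \Pr_\U[\tilde f_i \ne g^*_i] + \eps/k$. Using the identity $k|a - b| = \sum_{i=1}^k \mathbf{1}\bigl[\mathbf{1}[a \ge i/k] \ne \mathbf{1}[b \ge i/k]\bigr]$ for $a, b \in V_k$ (each side counts grid points strictly between $\min(a,b)$ and $\max(a,b)$), taking expectations and summing over $i$ yield
\[
\E_\U[|f - h|] \;=\; \sum_{i=1}^k \Pr_\U[\tilde f_i \ne \tilde h_i] \;\le\; \sum_{i=1}^k \Pr_\U[\tilde f_i \ne g^*_i] + \eps \;=\; \Delta + \eps.
\]
Each of the $k$ subroutine calls runs in $\poly(n, 2^{2k}, k/\eps) = \poly(n, 2^k, 1/\eps)$ time using $\poly(\log n, 2^k, 1/\eps)$ value queries, matching the claimed bounds. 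The main subtlety to verify, which is essentially the discretization step of Appendix \ref{app:att-eff-km} specialized to the grid $V_k$, is that alignment of the thresholds with $V_k$ automatically makes the summed output $\{0,1,\ldots,k\}$-valued rather than requiring a rounding step that might damage the agnostic guarantee.
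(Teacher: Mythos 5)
Your proposal follows essentially the same route as the paper: use the exact rank-$2k$ decision-tree representation of $\{0,\ldots,k\}$-valued submodular functions, threshold into $k$ Boolean learning problems, call the Boolean agnostic rank-$r$ DT learner (Theorem~\ref{th:agn-learn-b-dt}) on each at accuracy $\eps/k$, and sum. This is exactly what the paper does — its route through Theorem~\ref{th:agn-learn-rv-dt} performs the same thresholding internally, and (as the paper observes) for $\{0,1/k,\ldots,1\}$-valued targets the $1/\eps$-grid thresholds collapse to exactly $k$ levels, so the output is automatically $\{0,\ldots,k\}$-valued.

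One intermediate step is stated incorrectly, though it does not affect the conclusion. You write $\E_\U[|f-h|] = \sum_{i=1}^k \Pr_\U[\tilde f_i \ne \tilde h_i]$ as an application of the grid identity to $a = f(x)/k$ and $b = h(x)/k$. But that identity produces $\sum_i \Pr_\U[\tilde f_i \ne \mathbf{1}[h \ge i]]$ on the right, and $\mathbf{1}[h \ge i]$ is generally \emph{not} $\tilde h_i$: the $k$ hypotheses returned by independent calls to the Boolean learner have no reason to be nested (non-increasing in $i$), so they need not be the threshold functions of their own sum $h$. The correct (and sufficient) step is the inequality
\[
|f(x) - h(x)| = \Bigl|\sum_{i=1}^k \bigl(\tilde f_i(x) - \tilde h_i(x)\bigr)\Bigr| \le \sum_{i=1}^k \bigl|\tilde f_i(x) - \tilde h_i(x)\bigr|,
\]
hence $\E_\U[|f-h|] \le \sum_i \Pr_\U[\tilde f_i \ne \tilde h_i]$, which is the direction you need. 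The remaining links are sound: $\Pr[\tilde f_i \ne \tilde h_i] \le \Pr[\tilde f_i \ne g^*_i] + \eps/k$ by the agnostic guarantee (the thresholded leaves of the rank-$2k$ tree for $g^*/k$ do give a rank-$\le 2k$ Boolean tree computing $g^*_i$), and $\sum_i \Pr[\tilde f_i \ne g^*_i] = \Delta$ holds exactly because $g^*_i$ \emph{are} the nested threshold functions of $g^*$.
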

This improves on $\poly(n) \cdot k^{O(k \log{k/\eps})}$-time and queries algorithm with the same guarantees which is implied by the spectral bounds in \citep{RaskhodnikovaYaroslavtsev:13soda}. We remark that the guarantee of this algorithm implies PAC learning with disagreement error (since for integer valued hypotheses $\ell_1$-error upper-bounds the disagreement error). At the same time the guarantee is not agnostic in terms of the disagreement error\footnote{In \citep{RaskhodnikovaYaroslavtsev:13soda} it was mistakenly claimed that the application of the algorithm of \citet{GopalanKK:08} gives agnostic guarantee for the disagreement error.} (but only for $\ell_1$-error).

The structural results also imply that when adapted to this setting our PAC learning algorithm in Theorem \ref{th:pac-learn-submod-l1-intro} leads to the following PAC learning algorithm in this setting.
\begin{theorem}
\label{th:pac-learn-submod-kval}
There exists an algorithm $\A$ that given $\eps > 0$ and access to random uniform examples of any $f \in \C_s^k$, with probability at least $2/3$, outputs a function $h$, such that $\Pr_\U[f \neq h]\leq \epsilon$. Further, $\A$ runs in time $\tilde{O}(n^2) \cdot 2^{O(k^2+ \log^2(1/\eps))}$ and uses $2^{O(k^2+ \log^2(1/\eps))} \log n$ examples.
\end{theorem}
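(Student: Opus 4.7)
The plan is to mimic the proof of Theorem~\ref{th:pac-learn-submod-l1-intro}, but to use the exact/near-exact structural bound from Theorem~\ref{th:submod-tree-approx-kval} (which has only a $2^{O(k+\log(1/\eps))}$ junta and low-degree dependence, instead of $2^{O(1/\eps^2)}$) to obtain the claimed dependence on $k$ and $\eps$. Concretely, scale the target to $f/k : \zon \to \{0,1/k,\ldots,1\}$ and apply Theorem~\ref{th:submod-tree-approx-kval} with disagreement parameter $\eps' := \eps^2/(16 k^2)$. This yields a $\{0,1/k,\ldots,1\}$-valued decision tree $T^*$ of depth $d = O(k + \log(1/\eps))$, depending on a set $J^* \subseteq [n]$ of size $|J^*| \leq 2^{O(k)}\eps^{-O(1)}$ and of spectral $\ell_1$ norm $L \leq 2^{O(k)}\eps^{-O(1)}$. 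Since $|f/k - T^*| \leq 1$ pointwise and $\Pr[f/k \neq T^*] \leq \eps'$, we have $\|f/k - T^*\|_2 \leq \sqrt{\eps'} \leq \eps/(4k)$, so $T^*$ is in particular a polynomial of degree $\leq d$ and spectral norm $\leq L$ that $\ell_2$-approximates $f/k$ to within $\eps/(4k)$.

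The first algorithmic step is to identify a set $J \supseteq J^*$ of size $2^{O(k)}\eps^{-O(1)}$ from random examples, exactly as in Lemma~\ref{lem:find-inf-vars-intro}. By Lemma~\ref{lem:junta-from-l1} applied with approximator $T^*$, every index in $J^*$ appears in some Fourier coefficient of $f/k$ of magnitude at least $\gamma = \Theta(\eps'/L) = 2^{-O(k)}\eps^{O(1)}$ and degree at most $d$. Since $f/k$ is submodular with range in $[0,1]$, Lemma~\ref{lem:upp-bound-sum} guarantees that whenever some such coefficient with both $i,j$ in its support exists, $|\widehat{f/k}(\{i,j\})| \geq \gamma^2/2$. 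Hence, estimating all degree-$1$ and degree-$2$ Fourier coefficients of $f/k$ to accuracy $\gamma^2/4$ and taking $J$ to be the set of indices appearing in any coefficient of estimated magnitude $\geq 3\gamma^2/4$ recovers $J \supseteq J^*$. Parseval bounds $|J| \leq 2/\gamma^4 = 2^{O(k)}\eps^{-O(1)}$, and Chernoff bounds give a total cost of $\tilde{O}(n^2)\cdot 2^{O(k)}\eps^{-O(1)}$ time and $2^{O(k)}\eps^{-O(1)}\log n$ examples.

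The second step is a standard low-degree Fourier algorithm restricted to $J$: estimate every Fourier coefficient $\widehat{f/k}(S)$ with $S \subseteq J$ and $|S| \leq d$. The number of such coefficients is at most $|J|^d = 2^{O((k+\log(1/\eps))\cdot(k+\log(1/\eps)))} = 2^{O(k^2 + \log^2(1/\eps))}$ by AM-GM on $k\cdot\log(1/\eps)$. A fresh sample of size $2^{O(k^2+\log^2(1/\eps))}\log n$ lets us estimate each coefficient to accuracy $\tau = 2^{-\Theta(k^2+\log^2(1/\eps))}\cdot \eps/k$ with union-bound-probability $\geq 5/6$, so the resulting polynomial $\tilde p$ supported on $J$ satisfies
\[
\|f/k - \tilde p\|_2 \;\leq\; \|f/k - T^*\|_2 + \tau\sqrt{|J|^d} \;\leq\; \frac{\eps}{4k} + \frac{\eps}{4k} \;=\; \frac{\eps}{2k}.
\]
Define $h(x) := k\cdot \mathrm{round}_{\{0,1/k,\ldots,1\}}(\tilde p(x))$. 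Since $h(x) \neq f(x)$ forces $|\tilde p(x) - f(x)/k| \geq 1/(2k)$, Markov and Cauchy--Schwarz yield $\Pr_\U[h \neq f] \leq 2k\|f/k - \tilde p\|_1 \leq 2k\|f/k - \tilde p\|_2 \leq \eps$, as required.

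The main obstacle is purely accounting rather than conceptual: one must choose $\eps'$ and the per-coefficient accuracy $\tau$ so that the structural loss from Theorem~\ref{th:submod-tree-approx-kval}, the Fourier estimation error, and the rounding loss jointly add up to $\eps$, all while keeping $d$ and $\log|J|$ both $O(k+\log(1/\eps))$ so that their product lands at the advertised $O(k^2 + \log^2(1/\eps))$. The choices above do this; the correctness of the junta-finding step is precisely where submodularity and the discrete-derivative identity of Lemma~\ref{lem:upp-bound-sum} are indispensable, since without them the degree-$\leq 2$ coefficients could not witness all relevant variables.
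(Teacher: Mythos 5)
Your proof follows the paper's intended route exactly: apply the pseudo-Boolean structural bound (Theorem~\ref{th:submod-tree-approx-kval}) in place of Corollary~\ref{cor:submod-tree-approx-intro}, identify the relevant variables via Lemmas~\ref{lem:junta-from-l1} and~\ref{lem:upp-bound-sum} as in Lemma~\ref{lem:find-inf-vars-intro}, run the low-degree algorithm over those variables, and round to obtain a disagreement bound; the depth-times-log-junta-size product is where $O(k^2+\log^2(1/\eps))$ arises, exactly as the paper intends. One small misstep: Lemma~\ref{lem:junta-from-l1} does \emph{not} guarantee $J \supseteq J^*$ (the support of $T^*$); it only guarantees the existence of a different degree-$d$ approximator $p'$ supported on $J$ with $\|f/k - p'\|_2 \le 2\cdot\eps/(4k)$, so your triangle inequality should compare $\tilde p$ to $p'$ (or to the orthogonal projection of $f/k$ onto the span of low-degree monomials over $J$) rather than to $T^*$ -- this costs only a constant factor, which you can absorb by tightening $\eps'$ and $\tau$.
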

For learning from random examples alone, previous structural results imply only substantially weaker bounds:
($\poly(n^k,1/\eps)$ in \citep{RaskhodnikovaYaroslavtsev:13soda}).
% and  $\tilde{O}(n^2) \cdot 2^{\tilde{O}(k^2/\eps^2)}$ \citep{FeldmanKV:12manu}.

Finally, we show that the combination of approximation by a junta and exact representation by a decision tree lead to a proper PAC learning algorithm for pseudo-Boolean submodular functions in time $\poly(n) \cdot 2^{O(k^2 + k\log(1/\eps))}$ using value queries. Note that, for the general submodular functions our results imply only a doubly-exponential time algorithm (with singly exponential number of random examples).
\begin{theorem}
\label{th:proper-learn-valueq-kval}
Let $\C_s^k$ denote the class of all submodular functions from $\zon$ to $\{0,1,\ldots,k\}$. There exists an algorithm $\A$ that given $\eps > 0$ and access to value queries of any $f \in \C_s^k$, with probability at least $2/3$, outputs a {\em submodular} function $h$, such that $\Pr[f\neq h] \leq \epsilon$. Further, $\A$ runs in time $\poly(n, 2^{k^2+k\log{1/\eps}})$ and uses $\poly(\log n, 2^{k^2+k\log{1/\eps}})$ value queries.
\end{theorem}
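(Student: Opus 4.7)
The plan is to combine a junta-identification step with the exact rank-$2k$ decision tree representation of pseudo-Boolean submodular functions (noted at the start of Section~\ref{sec:pseudo-boolean}), producing a submodular hypothesis that depends on only $m = 2^{O(k+\log(1/\eps))}$ variables. First, I would identify a set $J$ of this size that contains the variables of the integer-valued approximator $T$ supplied by Theorem~\ref{th:submod-tree-approx-kval} (invoked with error $\eps/3$): combining the spectral bound $\|\hat T\|_1 = O(k\cdot 2^{5k}/\eps^5)$ with the degree-$2$ dominance inequality of Lemma~\ref{lem:upp-bound-sum} applied to $f/k$, every variable that could appear in $T$ must also appear in a degree-$\leq 2$ Fourier coefficient of $f$ whose magnitude is at least $2^{-O(k+\log(1/\eps))}$, so the attribute-efficient Fourier search of Theorem~\ref{th:att-eff-collect-FT} at degree $\leq 2$ and threshold $2^{-O(k+\log(1/\eps))}$ recovers $J$ using $\poly(\log n) \cdot 2^{O(k+\log(1/\eps))}$ value queries.

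Second, I would sample a uniformly random $y \in \{0,1\}^{[n]\setminus J}$ and reconstruct the exact rank-$2k$ decision tree for the restriction $g_y:\zo^J\to\{0,\ldots,k\}$ defined by $g_y(z) := f(z,y)$ (which is submodular since restrictions of submodular functions are submodular) by mimicking the constructive proof of Theorem~\ref{th:submod-lip-rank} through value queries. At each internal node with unset variable set $X[v] \subseteq J$, I query $g_y$ at the zero extension of the current path and at each of its single-coordinate flips, looking for an $i \in X[v]$ with $\partial_i g_y \geq 1$ at the zero extension; by integrality and the monotonicity of $\partial_i g_y$ (from submodularity), the absence of such $i$ means $g_y$ is $\tfrac{1}{k+1}$-monotone-decreasing on the subcube, in which case I recurse on $\overline{g_y}$ exactly as in the two-phase construction of Theorem~\ref{th:submod-lip-rank}, and leaves are reached once $g_y$ is $\tfrac{1}{k+1}$-Lipschitz (hence constant) on the subcube. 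The standard rank recurrence $S(m,r) \leq S(m-1,r) + S(m-1,r-1)$ bounds the total number of leaves by $\sum_{i\leq 2k}\binom{m}{i} = m^{O(k)}$, so the reconstruction costs $O(m) \cdot m^{O(k)} = 2^{O(k^2 + k\log(1/\eps))}$ value queries and $\poly(n)\cdot 2^{O(k^2 + k\log(1/\eps))}$ time. I then output $h(x) := g_y(x_J)$, which is submodular on $\zon$ because it ignores the coordinates outside $J$ and is submodular in the rest.

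For correctness, I would verify $h$ against $O(\eps^{-2})$ fresh random samples and restart with a new $y$ on failure. Assuming $J$ contains the variables of $T$, the averaging identity $\E_y[\Pr_z[g_y(z)\neq T(z)]] = \Pr_x[f(x)\neq T(x_J)] \leq \eps/3$ combined with Markov guarantees that a random $y$ yields $\Pr_z[g_y(z)\neq T(z)] \leq 2\eps/3$ with probability at least $1/2$, and the triangle inequality then gives $\Pr_x[h(x)\neq f(x)] \leq \Pr_x[h(x)\neq T(x)] + \Pr_x[T(x)\neq f(x)] \leq \eps$. The main obstacle I anticipate is the junta step: Lemma~\ref{lem:junta-from-l1} as stated yields only an $\ell_2$-good \emph{real-valued} approximator on $J$, whereas the averaging argument above requires $J$ to support the specific integer-valued $T$. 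Closing this gap requires carefully combining the bounded-range structural result of Theorem~\ref{th:submod-tree-approx-kval} with the degree-$\leq 2$ Fourier dominance of Lemma~\ref{lem:upp-bound-sum} to show that $T$'s junta is detected by the low-degree Fourier search, and it is precisely this combinatorial accounting that contributes the $2^{O(k^2)}$ factor to the final running time.
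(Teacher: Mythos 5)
Your proposal is essentially the paper's proof: identify a small junta $J$ via degree-$\leq 2$ Fourier coefficients (combining Lemma~\ref{lem:upp-bound-sum} with the pseudo-Boolean spectral and degree bounds of Theorem~\ref{th:submod-tree-approx-kval}), pick a uniformly random restriction of the coordinates outside $J$, reconstruct the exact rank-$2k$ decision tree of the restricted submodular function by value queries following the constructive argument of Theorem~\ref{th:submod-lip-rank}, and boost success probability by testing the hypothesis on fresh random samples and restarting. Your account of the reconstruction phase and the rank recurrence giving $|J|^{O(k)}=2^{O(k^2+k\log(1/\eps))}$ leaves is correct and matches the paper's appeal to the Ehrenfeucht--Haussler size bound.

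The gap you flag at the end is real---the paper's outline also passes over it with the phrase ``with bounds adapted to this setting''---but your diagnosis that $J$ must contain the variables of the specific tree $T$ over-reaches. The Markov/averaging step only requires \emph{some} $J$-measurable function $g$ with $\Pr_\U[f\neq g]\leq\eps/3$, not $T$ itself. Lemma~\ref{lem:junta-from-l1} already supplies a real-valued $p'$ supported on $J$ with $\|f/k-p'\|_2\leq 2\sqrt{\eps_0}$ when one starts from $T$ of disagreement error $\eps_0$ (so $\|f/k-T/k\|_2\leq\sqrt{\eps_0}$). Rounding $p'$ to the nearest multiple of $1/k$ gives a $J$-measurable function $g$ whose disagreement with $f$ forces $|f/k-p'|\geq 1/(2k)$, and Markov then yields $\Pr_\U[f\neq g]\leq 16k^2\eps_0$. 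Taking $\eps_0=\Theta(\eps/k^2)$ gives the required $\eps/3$ bound; since Theorem~\ref{th:submod-tree-approx-kval} keeps $\degree(T)=O(k+\log(1/\eps_0))$ and $\|\hat{T}\|_1 = \poly(2^k/\eps_0)=\poly(2^k/\eps)$, the $\eps_0^{-2}$ loss in $|J|$ from Lemma~\ref{lem:junta-from-l1} is only $\poly(k/\eps)$, so $|J|=\poly(2^k/\eps)$ and the final $2^{O(k^2+k\log(1/\eps))}$ bound is unaffected. So your worry is well-placed as a point about the paper's exposition, but the fix is the integer-rounding conversion rather than the more delicate claim that $J$ supports $T$ exactly.
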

\begin{proof}[Proof Outline:]
In the first step we identify a small set of variables $J$ such that there exists a function that depends only on variables indexed by $J$ and is $\eps/3$ close to $f$. This can be achieved (with probability at least $2/3$) by using the algorithm in Lemma \ref{lem:find-inf-vars-intro} (with bounds adapted to this setting) to obtain a set of size $\poly(2^k/\eps)$. Now let $\U_J$ represent a uniform distribution over $\zo^J$ and $\U_{\bar{J}}$ represent the uniform distribution over $\bar{J} = [n]\setminus J$. Let $g$ be the function that depends only on variables in $J$ and is $\eps/3$ close to $f$. Then,
\equn{\Pr_\U[f(x) \neq g(x)]  = \E_{z \sim \U_{\bar{J}}} \left[\Pr_{y \sim \U_J}[f(y,z) \neq g(y,\bar{0})] \right] \leq \eps/3\ .}

By Markov's inequality, this means that with probability at least $1/2$ over the choice of $z$ from $\zo^{\bar{J}}$,
$\Pr_{y \sim \U_J}[f(y,z) \neq g(y,\bar{0})] \leq 2\eps/3$ and hence $\Pr_{y \sim \U_J,w \sim \U_{\bar{J}} }[f(y,z) \neq f(y,w)]\leq \eps$. In other words, a random restriction of variables outside of $J$ gives, with probability at least $1/2$, a function that is $\eps$-close to $f$. As before we observe that a restriction of a submodular function is a submodular function itself. We therefore can choose $z$ randomly and then run the decision tree representation construction algorithm on $f(y,z)$ as a function of $y$ described in the proof of Theorem \ref{th:submod-lip-rank}. It is easy to see that the running time of the algorithm is essentially determined by the size of the tree. A tree of rank $2k$ over $|J|$ variables has size of at most $|J|^{2k}$ \citep{EhrenfeuchtHaussler:89}. Therefore with probability at least $2/3 \cdot 1/2 = 1/3$, in time $\poly(n, 2^{k^2+k\log{1/\eps}})$ and using $\poly(\log n, 2^{k^2+k\log{1/\eps}})$ value queries we will obtain a submodular function which is $\eps$-close to $f$. As usual the probability of success can be easily boosted to $2/3$ by repeating the algorithm 3 times and testing the hypothesis.
\end{proof}

\section{Proof of Lemma \ref{lem:corr-parity-submod}}
\label{sec:prove-correlation}
Since the functions we are dealing with are going to be symmetric, we make the convenient definition of weight of any $x \in \zo^n$. For any $x \in \zo^n$, the weight of $x$ over a subset $S \subseteq [n]$ of coordinates is defined as $w_S(x) = \sum_{i\in S} x_i $. %Notice that this is the Hamming weight of the associated string in $\zo^n$ and is always an integer between $0$ and $|S|$.

Our correlation bounds for monotone symmetric submodular functions will depend on the following well-known observation which we state without proof.
\begin{fact}[Symmetric Submodular Functions from Concave Profiles]
Let $p:\{0,1,\ldots,n\}: \rightarrow [0,1]$ be any function such that,$$ \forall 0 \leq i \leq n-2\text{, }p(i+1)-p(i) \geq p(i+2) - p(i+1).$$ Let $f_p:\zo^n \rightarrow [0,1]$ be a symmetric function such that $f_p(x) = p(w_{[n]}(x))$. Then $f$ is submodular.
\eat{% VF I think this is not needed}
Conversely, for any submodular $f:\zo^n \rightarrow [0,1]$, let $p_f:\{0,1,\ldots,n\}: \rightarrow [0,1]$ be the profile of $f$ defined by $\forall i$ $p_f(i) = \frac{1}{{n \choose i}} \sum_{x: w_{[n]}(x) = i} f(x)$. Then, for every $$0 \leq i \leq n-2 \text{, } p_f(i+1)-p_f(i) \geq p_f(i+2) - p_f(i+1).$$
}
\label{concaveprofile}
\end{fact}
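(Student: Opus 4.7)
The plan is to verify submodularity of $f_p$ directly via the second-discrete-derivative characterization recalled in Section~\ref{sec:prelims}, namely that $f:\zon \rightarrow \R$ is submodular iff $\partial_{i,j} f(x) \leq 0$ for all $i \neq j$ and all $x \in \zon$. Since $f_p$ is symmetric and depends only on $w_{[n]}(x)$, these mixed differences will collapse to a condition purely about the profile $p$, and that condition will be exactly the discrete concavity assumed in the hypothesis.

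First I would fix distinct indices $i,j \in [n]$ and a point $x \in \zon$, and introduce the shorthand $w = w_{[n]\setminus\{i,j\}}(x)$ for the weight of $x$ on the coordinates other than $i$ and $j$. Using the definition $\partial_j f_p(y) = f_p(y_{j\leftarrow 1}) - f_p(y_{j\leftarrow 0})$ and the fact that $f_p$ only sees total weight, I get
\[
\partial_j f_p(x_{i\leftarrow 0}) = p(w+1) - p(w), \qquad \partial_j f_p(x_{i\leftarrow 1}) = p(w+2) - p(w+1).
\]
Taking one more discrete derivative in coordinate $i$,
\[
\partial_{i,j} f_p(x) = \partial_j f_p(x_{i\leftarrow 1}) - \partial_j f_p(x_{i\leftarrow 0}) = p(w+2) - 2p(w+1) + p(w).
\]
The hypothesis $p(k+1) - p(k) \geq p(k+2) - p(k+1)$ applied at $k = w$ rearranges exactly to $p(w+2) - 2p(w+1) + p(w) \leq 0$, so $\partial_{i,j} f_p(x) \leq 0$. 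Since $i,j,x$ were arbitrary, $f_p$ is submodular.

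There is no real obstacle here; the only thing to be a bit careful about is that $w$ can range over $\{0,1,\ldots,n-2\}$ (which matches the stated range of the concavity hypothesis on $p$), and that the calculation uses only the symmetry of $f_p$ together with the discrete concavity of $p$, never the range constraint $p(\cdot) \in [0,1]$ (that constraint is only used to ensure $f_p$ has range $[0,1]$, which is a separate, cosmetic property). So the entire proof is a three-line computation plus an invocation of the second-derivative criterion for submodularity.
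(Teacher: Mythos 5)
Your proof is correct. The paper itself states this fact without proof (it is introduced as a ``well-known observation which we state without proof''), so there is no authorial argument to compare against; your three-line computation is exactly the standard one, and it correctly invokes the characterization from the preliminaries that $f$ is submodular iff $\partial_{i,j} f(x) \leq 0$ for all $i \neq j$. The reduction to $\partial_{i,j} f_p(x) = p(w+2) - 2p(w+1) + p(w)$ with $w = w_{[n]\setminus\{i,j\}}(x) \in \{0,\ldots,n-2\}$ is right, the range of $w$ matches the stated range of the concavity hypothesis, and your observation that the $[0,1]$ bound on $p$ plays no role in submodularity is also accurate.
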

\begin{remark}
Observe that for any submodular function $f:\zo^S \rightarrow [0,1]$, the correlation with the parity $\chi_S$ depends only on the profile of $f$,  $p_f:\{0,1,\ldots,n\} \rightarrow [0,1]$, $$\forall i\text{, } p_f(i) = \frac{1}{{n \choose i}} \sum_{x: w_S(x) = i} f(x).$$ That is, if $\tilde{f}:\zo^S \rightarrow [0,1]$ is defined by $\tilde{f}(x) = p_f(w_S(x))$ for every $x \in \zo^n$, then $ \langle f,\chi_S \rangle = \langle \tilde{f}, \chi_S \rangle$. Thus for finding submodular functions with large correlation with a given parity, it is enough to focus on symmetric submodular functions.
\end{remark}

We will need the following well-known formula for the partial sum of binomial coefficients in our correlation bounds.
\begin{fact}[Alternating Binomial Partial Sum]
For every $n,r, k \in \N$,
$$\sum_{j = 0}^r (-1)^j {n \choose j} = (-1)^r {{n-1} \choose r} $$ \label{partialsum}
\end{fact}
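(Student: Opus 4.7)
The plan is to prove this identity by a short induction on $r$, with Pascal's rule carrying all the algebraic weight. The base case $r=0$ is immediate: both sides evaluate to $1$. For the inductive step I would peel off the top term of the sum on the left, apply the inductive hypothesis $\sum_{j=0}^{r-1}(-1)^j\binom{n}{j} = (-1)^{r-1}\binom{n-1}{r-1}$, and then invoke Pascal's rule $\binom{n}{r} = \binom{n-1}{r} + \binom{n-1}{r-1}$ to simplify
$(-1)^{r-1}\binom{n-1}{r-1} + (-1)^r\binom{n}{r} = (-1)^r\bigl(\binom{n}{r} - \binom{n-1}{r-1}\bigr) = (-1)^r\binom{n-1}{r}$.

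An equivalent alternative I would consider, if a more formulaic presentation is preferred, is a direct telescoping argument. Expanding each $\binom{n}{j}$ via Pascal's rule (with the convention $\binom{n-1}{-1}=0$) splits the sum as $\sum_{j=0}^r (-1)^j\binom{n-1}{j} + \sum_{j=0}^r (-1)^j\binom{n-1}{j-1}$. Reindexing the second sum by $j \mapsto j+1$ turns it into $-\sum_{j=0}^{r-1}(-1)^j\binom{n-1}{j}$, which cancels against all but the top term of the first sum, leaving exactly $(-1)^r\binom{n-1}{r}$.

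There is no real obstacle here; this is a textbook consequence of Pascal's rule and the main choice is essentially presentational. (The variable $k$ appearing in the hypothesis ``for every $n,r,k\in\mathbb{N}$'' does not occur in the identity and is presumably vestigial, so I would simply drop it from the statement or ignore it in the proof.)
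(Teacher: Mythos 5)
Your proof is correct. The paper states this identity without proof, explicitly calling it a well-known formula and supplying no derivation, so there is no in-paper argument to compare against. Both of your suggested routes (induction on $r$ via Pascal's rule, or direct telescoping after expanding each $\binom{n}{j}=\binom{n-1}{j}+\binom{n-1}{j-1}$) are standard and equally valid; the induction is marginally cleaner to typeset while the telescoping version makes the cancellation structure explicit. Your observation that the variable $k$ in the statement is vestigial (it appears nowhere in the identity) is also accurate and worth flagging.
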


\begin{proof}[Proof of Lemma \ref{lem:corr-parity-submod}]
Notice that the parity on any subset $S \subseteq [n]$ of variables at any input $x \in \zo^n$ is computed by $\chi_S(x) = (-1)^{w_S(x)}$.
We will now define a symmetric submodular function $R_S:\zo^S\rightarrow [0,1]$ and then modify it to construct a monotone symmetric submodular function $H_S: \zo^S \rightarrow [0,1]$ that has the required correlation with the associated parity $\chi_S$. It is easy to verify that the natural extension of $R_S$ and $H_S$ to $\zo^n$(from $\zo^S$), that just ignores all the coordinates outside $S$, is submodular and thus it is enough to construct functions on $\zo^S$.

The definition of $R_S$ will vary based on the cardinality of $S$. If $S$ is such that $s = 2k$ for some $k \in \N$, let $R_S$ for each $S \subseteq [n]$ be defined as follows:
\[
R_S(x)=\left\{\begin{array}{cl}
	\frac{w_S(x)}{k}, & w_S(x)\leq k \\
         1 - \frac{w_S(x)-k}{k}, & w_S(x)> k
	   \end{array}\right.
\]

On the other hand, if  $S$ is such that $s = 2k-1$ for some $k \in \N$, define:
\[
R_S(x)=\left\{\begin{array}{cl}
	\frac{w_S(x)}{k-1}, & w_S(x)\leq k-1 \\
         1 - \frac{w_S(x)-k+1}{k-1}, & w_S(x)\geq k
	   \end{array}\right.
\]

Notice that with this definition, $R_S: \zo^n \rightarrow [0,1]$ and has its maximum value exactly equal to $1$. Further, since $R_S$ can be seen to be defined by a concave profile, Fact \ref{concaveprofile} guarantees that $R_S$ is submodular. We will now compute the correlation of $\chi_S$ with $R_S$. We will first deal with the case when $|S|$ is even.

Let $s = 2k$ for some $k\in \N$.

\begin{align*}
\langle R_S, \chi_S \rangle &= \frac{1}{2^{2k}} \sum_{x \in \zo^{2k}} R_S(x) \chi_S(x)\\
&= \frac{1}{2^{2k}}\cdot\sum_{i = 0}^{k}  {{2k} \choose i} (-1)^i  \frac{i}{k} +   \sum_{i = k+1}^{2k}  {{2k} \choose i} (-1)^i  (1 - \frac{i-k}{k})\\
& \text{ Substituting $j = 2k - i$} \\
&= \frac{1}{2^{2k}}\cdot \sum_{i = 0}^{k}  {{2k} \choose i} (-1)^i  \frac{i}{k} + \sum_{j = 0}^{k-1} {{2k} \choose j}(-1)^j \frac{j}{k}\\
&= 2\left(\frac{1}{2^{2k}} \cdot \frac{1}{k} \sum_{i = 0}^{k} {{2k} \choose i} (-1)^i \cdot i\right) - (-1)^k \cdot \frac{1}{2^{2k}} {{2k} \choose k}\\
&= 2\left(\frac{1}{2^{2k}} \cdot \frac{1}{k} \cdot 2k \cdot \sum_{i = 1}^{k} {{2k-1} \choose {i-1}} (-1)^i \right) - (-1)^k \cdot \frac{1}{2^{2k}} {{2k} \choose k}\\
&\text{ Using the partial sum formula from Fact \ref{partialsum} gives:} \\
\langle R_S, \chi_S \rangle &= (-1)^k \cdot \frac{2}{2^{2k}} \cdot \frac{1}{2k-1} {{2k-1} \choose {k}}
\end{align*}
%
%If on the other hand,  $|S| = 2t = 4k -2 $ for $k \in \N$, we have:
%\begin{align*}
%\langle R_S, \chi_S \rangle &= \frac{1}{2^{4k-2}} \sum_{x \in \on^{4k-2}} R_S(x) \chi_S(x)\\
%&= \frac{1}{2^{4k-2}}\cdot\sum_{i = 0}^{2k}  {{4k} \choose i} (-1)^i  \frac{i}{2k} +   \sum_{i = 2k+1}^{4k}  {{4k} \choose i} (-1)^i  (1 - \frac{i-2k}{2k})\\
%& \text{ Substituting $j = 2k - i$} \\
%&= \frac{1}{2^{4k-2}}\cdot \sum_{i = 0}^{2k}  {{4k} \choose i} (-1)^i  \frac{i}{2k} + \sum_{j = 0}^{2k-1} {{4k} \choose j}(-1)^j \frac{j}{2k}\\
%&= \frac{1}{2^{4k-2}} \frac{1}{k} \sum_{i = 0}^{2k} {{4k} \choose i} (-1)^i \cdot i - \frac{1}{2^{4k}} {{4k} \choose 2k}\\
%&\text{ Using the partial sum formula from Fact \ref{partialsum} gives:} \\
%\langle R_S, \chi_S \rangle &= \frac{2}{2^{4k}} \cdot \frac{1}{4k-1} {{4k-1} \choose {2k}}
%\end{align*}

Now suppose $s = 2k -1$ for some $k \in \N$.
 \begin{align*}
\langle R_S, \chi_S \rangle &= \frac{1}{2^{2k-1}} \sum_{x \in \zo^{2k-1}} R_S(x) \chi_S(x)\\
&= \frac{1}{2^{2k-1}}\cdot\sum_{i = 0}^{k-1}  {{2k-1} \choose i} (-1)^i  \frac{i}{k-1} +   \sum_{i = k}^{2k-1}  {{2k-1} \choose i} (-1)^i  (1 - \frac{i-k+1}{k-1})\\
& \text{ Substituting $j = 2k - 1-i$} \\
&= \frac{1}{2^{2k-1}}\cdot \sum_{i = 0}^{k}  {{2k} \choose i} (-1)^i  \frac{i}{k} - \sum_{j = 0}^{k-1} {{2k-1} \choose j}(-1)^j \frac{j-1}{k-1}\\
&= \frac{1}{2^{2k-1}} \frac{1}{k-1} \cdot \sum_{j = 0}^{k-1} {{2k-1} \choose j} (-1)^j\\
&\text{ Again, using the partial sum formula from Fact \ref{partialsum} gives:} \\
\langle R_S, \chi_S \rangle &= (-1)^{k+1} \cdot \frac{1}{2^{2k-1}} \cdot \frac{1}{k-1} {{2k-2} \choose {k-1}}
\end{align*}

In either case, we now obtain that $|\langle R_S, \chi_S \rangle| = \Omega(k^{\frac{-3}{2}}) = \Omega(s^{\frac{-3}{2}})$.

For the remaining part of the proof, we need to define the function $H_S$. We obtain $H_S$ by a natural ``monotonization" of $R_S$. Thus, if $s =2k$, let $H_S$ be defined as:
\[
H_S(x)=\left\{\begin{array}{cl}
	\frac{w_S(x)}{k}, & w_S(x)\leq k \\
         1 & w_S(x)> k
	   \end{array}\right.
\]

On the other hand, if  $S$ is such that $s = 2k-1$ for some $k \in \N$, define:
\[
R_S(x)=\left\{\begin{array}{cl}
	\frac{w_S(x)}{k-1}, & w_S(x)\leq k-1 \\
         1 & w_S(x)\geq k
	   \end{array}\right.
\]
Notice again that $H_S:\zo^S \rightarrow [0,1]$ and $H_S$ is submodular by Fact \ref{concaveprofile}.
To obtain a lower bound on $|\langle \chi_S, H_S \rangle|$, $H_S$ can be seen as the average of a monotone linear function and $R_S$, that is, if $s = 2k$, $\forall x$, $H_S(x) = \frac{1}{2} (R_S(x) + \frac{w_S(x)}{k})$ and if $s = 2k-1$, $\forall x$, $H_S(x) = \frac{1}{2} (R_S(x) + \frac{w_S(x)}{k-1})$. It is now easy to obtain a lower bound on the correlation of $\chi_S$ with $H_S$.

\noindent For $s= 2k$, $$\langle \chi_S, H_S \rangle = \frac{1}{2} \langle \chi_S, R_S \rangle + \frac{1}{2} \langle \chi_S, \frac{w_S}{k} \rangle.$$ For $s = 2k-1$,  $$\langle \chi_S, H_S \rangle = \frac{1}{2} \langle \chi_S, R_S \rangle + \frac{1}{2} \langle \chi_S, \frac{w_S}{k-1} \rangle.$$ Finally, observe that for any $s = |S|$, $\langle \chi_S, w_S(x) \rangle = \sum_{i = 0}^{s} {s \choose i} (-1)^i \cdot i = s \sum_{i = 0}^s {{s-1} \choose {i-1}}(-1)^i \cdot i = 0$. This immediately yields the required correlation.

\end{proof}

%%--------------------------------------------------------------------}
\end{document}